\documentclass{article}

% For figures
\usepackage{graphicx} % more modern
\usepackage{subfigure}

%\usepackage{stfloats}
% For citations
\usepackage{natbib}

% For algorithms

\usepackage{algorithm}
\usepackage{algorithmic}

% As of 2011, we use the hyperref package to produce hyperlinks in the
% resulting PDF.  If this breaks your system, please commend out the
% following usepackage line and replace \usepackage{icml2013} with
% \usepackage[nohyperref]{icml2013} above.
%\usepackage{hyperref}

\usepackage{multirow} % fancy tables

\setlength{\parskip}{2 ex}
\setlength{\topmargin}{-0.3 in}
\setlength{\oddsidemargin}{-0.25 in}
\setlength{\evensidemargin}{-0.25 in}
\setlength{\textheight}{9.4 in}
\setlength{\textwidth}{6.5 in}
\setlength{\parindent}{.3in}
\setlength{\textheight}{630pt}
% Packages hyperref and algorithmic misbehave sometimes.  We can fix
% this with the following command.

\usepackage{amsmath,amsthm, amssymb, appendix}

\newtheorem{thm}{Theorem}
\newtheorem{assumption}[thm]{Assumption}
\newtheorem{cor}[thm]{Corollary}
\newtheorem{lem}[thm]{Lemma}
\newtheorem{prop}[thm]{Proposition}
\theoremstyle{definition}
\newtheorem{defn}[thm]{Definition}

\theoremstyle{remark}

\newtheorem*{rem*}{Remark}

% MATH -----------------------------------------------------------

\def\sign{\operatorname{sign}}

% ----------------------------------------------------------------

\def\R{\mathbb R}

% ----------------------------------------------------------------

% ----------------------------------------------------------------
\def\sign{\operatorname{sgn}}

\def\P{\mathcal P}
\def\trace{\operatorname{tr}}

\def\nsmall{n_\flat}
\def\Vbig{V_\sharp}
\def\Vsmall{V_\flat}
\def\ellbig{\ell_\sharp}
\def\ellsmall{\ell_\flat}
\def\Psmall{{\mathcal P}_{\flat}}
\def\Pbig{{\mathcal P}_{\sharp}}
\def\idop{{\mathcal Id}}
\def\range{\operatorname{Range}}

% ----------------------------------------------------------------
\def\sigmamin{\sigma_{\operatorname{min}}}
\def\poly{\operatorname{poly}}
% Yudong's shortcut
\global\long\def\DD{\mathfrak{D}}
\global\long\def\PP{\mathcal{P}}

\begin{document}

\title{Breaking the Small Cluster Barrier of Graph Clustering}
\author{Nir Ailon\thanks{Department of Computer Science, Technion Israel Institute of Technology. \texttt{nailon@cs.technion.ac.il}}    \and Yudong Chen\thanks{Department of Electrical and Computer Engineering, The University of Texas at Austin. \texttt{ydchen@utexas.edu}} \and  Xu Huan\thanks{Department of Mechanical Engineering, National University of Singapore. \texttt{mpexuh@nus.edu.sg}}}
\maketitle
%\twocolumn[
%\icmltitle{Breaking the Small Cluster Barrier of Graph Clustering}
%
%% It is OKAY to include author information, even for blind
%% submissions: the style file will automatically remove it for you
%% unless you've provided the [accepted] option to the icml2013
%% package.
%\icmlauthor{Nir Ailon}{nailon@cs.technion.ac.il}
%\icmladdress{Department of Computer Science, Technion Israel Institute of Technology}
%\icmlauthor{Yudong Chen}{ydchen.thu@gmail.com}
%\icmladdress{Department of Electrical and Computer Engineering, The University of Texas at Austin}
%\icmlauthor{Huan Xu}{mpexuh@nus.edu.sg}
%\icmladdress{Department of Mechanical Engineering, National University of Singapore}
%
%% You may provide any keywords that you
%% find helpful for describing your paper; these are used to populate
%% the "keywords" metadata in the PDF but will not be shown in the document
%\icmlkeywords{boring formatting information, machine learning, ICML}
%
%\vskip 0.3in
%]
%
\begin{abstract}
% !Tex root = icml_smallbigcluster.tex        
% please don't remove the above - I need it for texworks editor  -- Nir

This paper investigates graph clustering in the planted cluster model in the
 presence of  {\em small clusters}. Traditional results dictate that for an
 algorithm to provably correctly recover the clusters, {\em all} clusters must be
 sufficiently large (in particular, $\tilde{\Omega}(\sqrt{n})$ where $n$ is the number
 of nodes of the graph). We show that this is not really a restriction: by a more refined
 analysis of the trace-norm based matrix recovery approach proposed in \citet{Jalali2011clustering} and \citet{chen2012sparseclustering}, we prove that small clusters, under certain mild assumptions, do not hinder recovery of large ones.
Based on this result, we further devise an iterative algorithm
 to recover {\em almost all clusters} via a ``peeling strategy'', i.e., recover large clusters
 first, leading to a reduced problem, and repeat this procedure. 
These results are extended to the 
 {\em partial observation} setting, in which only a (chosen) part of the graph is observed.
The peeling strategy gives rise to an active learning algorithm, in which 
edges adjacent to smaller clusters are queried more often as large clusters are learned
(and removed).

From a high level, this paper sheds novel insights on high-dimensional statistics and
 learning structured data, by presenting a structured matrix learning problem for which
a one shot convex relaxation approach necessarily fails, but a carefully constructed sequence of convex relaxations
does the job.

\end{abstract}

% !Tex root = icml_smallbigcluster.tex
% please don't remove the above - I need it for texworks editor  -- Nir

\section{Introduction}
\label{sec:intro}

This paper considers a classic problem in machine learning and theoretical computer science,
namely graph clustering, i.e., given an undirected unweighted graph, partition the nodes into
disjoint clusters, so that the density of edges within one cluster is higher than those across
clusters. Graph clustering arises naturally in many application across science and engineering.
Some prominent examples include community detection in social network \cite{Social1},
submarket identification in E-commerce and sponsored search \cite{Yaho}, and co-authorship
analysis in analyzing document database \cite{db1}, among others.  From a purely binary classification theoretical
point of view, the edges of the graph are  (noisy) labels of \emph{similarity} or \emph{affinity} between pairs of objects, and the concept class consists of clusterings of the objects (encoded graphically by identifying clusters with cliques).

%Graph clustering is a well studied problem (e.g., \cite{}). <--- was already mentioned - Nir
Many theoretical results in graph
clustering~\citep[e.g.,][]{boppana1987eigenvalues,chen2012sparseclustering,mcsherry2001spectralpartitioning} consider the planted partition model,
in which the edges are generated randomly; see Section~\ref{sss.previouswork} for more details. While  numerous
different methods have been proposed,  their performance guarantees  all share the
following manner~--~under certain condition of the density of edges (within clusters and across
clusters), the proposed method succeeds to recover the correct clusters  exactly {\em if all clusters
are larger than a threshold size}, typically $\tilde{\Omega}(\sqrt{n})$.

%small cluster, and why intuitively this should be able to be addressed.
In this paper, we aim to break this {\bf small cluster barrier} of graph clustering. Correctly identifying
extremely small clusters is inherently hard as they are easily confused with ``fake'' clusters generated
by noisy edges\begin{footnote}{Indeed, even in a
more lenient setup where one clique (i.e., a perfect cluster) of size $K$ is embedded in an Erdos-Renyi
graph of $n$ nodes and $0.5$ probability of forming an edge, to recover this clique, the best known
polynomial method requires $K=\Omega(\sqrt{n})$ and it has been a long standing open problem to
relax this requirement.}\end{footnote}, and is not the focus of this paper. Instead, in this paper we investigate a question that has
not been addressed before: Can we still recover large clusters in the presence of small clusters? Intuitively,
this should be doable. To illustrate, consider an extreme example where the given graph $G$
consists two disjoint subgraphs $G_1$ and $G_2$, where $G_1$ is a graph that can be correctly
clustered using some existing method, and $G_2$ is a small-size clique.  $G$ certainly violates
the minimum cluster size requirement of previous results, but why should $G_2$ spoil our ability to cluster $G_1$?
% correctly cluster $G$ is by  no means beyond our reach.

Our main result confirms this intuition.  We show that the  cluster size barrier arising in
previous work~\citep[e.g.,][]{Chaudhuri,bollobas2004maxcut,chen2012sparseclustering,mcsherry2001spectralpartitioning} is not really a restriction, but rather an artifact of the attempt to solve the
problem in a single shot using convex relaxation techniques.
Using a more careful analysis, we prove that the mixed trace-norm and $\ell_1$ based convex formulation, initially
proposed in \citet{Jalali2011clustering}, can recover clusters of size $\tilde{\Omega}(\sqrt{n})$ even in the presence
of smaller clusters. That is, small clusters do not interfere with recovery of the big clusters.

The main  implication of this result is that one can apply an iterative ``peeling'' strategy, % to design iterative
%algorithm
recovering smaller and smaller clusters.
%to recover {\em small clusters} that are otherwise impossible to recover.
The intuition is
simple~--~suppose the {\em number} of clusters is limited, then either all clusters are large, or the sizes of the clusters vary significantly. The first case is obviously easy. The second one is equally easy:
 use the aforementioned convex formulation, the larger clusters can be correctly identified. If
we remove all nodes from these larger clusters, the remaining subgraph contains significantly fewer nodes than
the original graph, which leads to a much lower threshold on the size of the cluster for  correct recovery, making it possible
for correctly clustering some smaller clusters. By repeating this procedure, indeed, we can recover the cluster structure for almost all nodes {\em with no lower bound on the minimal cluster size}.
We summarize our main contributions and techniques: %Our main contributions and techniques are summarized as follows:

%\begin{itemize}
{\bf (1)} We provide a refined analysis of the mixed trace norm and $\ell_1$ convex relaxation approach for exact recovery of clusters
proposed in \citet{Jalali2011clustering} and \citet{chen2012sparseclustering}, focusing on the case where small clusters exist. We show
that in the classical  planted partition settings~\cite{boppana1987eigenvalues}, if each cluster
is either large (more precisely, of size at least $x \approx \sqrt{n}\log^2 n$) or small (of size at most $x/\log^2 n$),
then with high probability, this convex relaxation  approach correctly identifies all big clusters while
``ignoring'' the small ones.
%This thus relax the restriction on the minimal cluster size previously imposed
%on this approach.
Notice that the multiplicative gap between the two thresholds is logarithmic w.r.t.\ $n$. In addition,
it is possible to arbitrarily increase $x$, thus turning a ``knob'' in quest of an interval $(x/\log^2 n, x)$ that is disjoint
from the set of cluster sizes.
The analysis is done by identifying a certain feasible solution to the convex program and proving its almost sure optimality
using a careful construction of a \emph{dual certificate}.   This feasible solution easily identifies the big clusters. This method has been performed before only in the case where
all clusters are of size $\geq x$.

% move both thresholds simultaneously by tuning the parameter of the  algorithm.

%\item
{\bf (2)} We provide a converse of the result just described. More precisely, we show that
if for some value of the knob $x$ an optimal solution appears to look as if the interval  $(x/\log^2 n, x)$ were indeed free of cluster sizes,
then the solution is useful (in the sense that it correctly identifies big clusters) even if this weren't the case.  %the interval is not really free of cluster sizes.

%\item
{\bf (3)} The last two points imply that if some interval of the form $(x/\log^2 n, x)$ is free of cluster sizes, then an exhaustive
search of this interval will constructively find big clusters (though not necessarily for that particular interval).
This gives rise to an iterative algorithm, using a ``peeling strategy'', to recover smaller and smaller clusters that are
otherwise impossible to recover.  %using the fact that one can move the two thresholds simultaneously by selecting the appropriate parameter,
Using the ``knob'', we prove that as long as the {\em number} of clusters is bounded by
$\Omega(\log n/\log\log n)$, regardless of the cluster sizes, we can correctly recover  the cluster structure
for an overwhelming fraction of nodes. To the best of our knowledge, this is the first result of provably
correct graph clustering without {\em any} assumptions on the cluster sizes.

{\bf (4)} We extend the result to the partial observation case, where only a faction
of similarity labels (i.e., edge/no edge) is known. As expected, smaller observation rates allow identification
of larger clusters.  Hence, the observation rate serves as the ``knob''.
%the fewer the observations, the more
%challenging it is to correctly recover the large clusters.
This gives rise to an {\em active learning algorithm}
for graph clustering based on  adaptively increasing the rate of sampling in order to hit a ``forbidden interval''
free of cluster sizes, and concentrating on smaller inputs as we identify big clusters and peel them off.
%: when the gaph
%is large and the clustering problem is relatively easy, sample lightly; after the large clusters are removed
%and clustering the remaining graph becomes more challenging, sample heavily. We prove that this active learning
%scheme requires significantly fewer samples than uniform sampling to correctly cluster the graph.
%\end{itemize}

Beside these technical contributions, this paper provides novel insights into low-rank matrix recovery and
more generally high-dimensional statistics, where data are typically assumed to obey
certain low-dimensional structure.  Numerous methods have been developed to exploit this {\em a priori} information
so that a consistent estimator is possible even when the dimensionality of data is larger than the number of samples. Our result shows that one may combine these methods with a ``peeling strategy'' to further push the envelope
of learning structured data~--~By iteratively recovering the easier structure and then reducing the problem size, it
is possible to learn structures that are otherwise difficult using previous approaches.

\subsection{Previous work}\label{sss.previouswork}
The literature of graph clustering is too vast for a detailed survey here; we concentrate on the most related
work, and in specific those provide theoretical guarantees on cluster recovery.

{\bf Planted partition model:}  The setup we study is the classical {\em planted partition} model~\cite{boppana1987eigenvalues}, also
known as the {\em stochastic block} model~\cite{Rohe10}. Here, $n$ nodes are partitioned into subsets, referred as the
``true clusters'', and a graph is randomly generated as follows: for each pair of nodes, depending on whether they
belong to a same subset, an edge connecting them is generated with a probability $p$ or $q$ respectively. The goal
is to correctly recover the clusters given the random graph. The planted partition model has been studied as early as 1980's~\cite{boppana1987eigenvalues}. Earlier work focused on the 2-partition or more generally $l$-partition case, i.e., the
minimal cluster size is $\Theta(n)$~\cite{boppana1987eigenvalues,CondonKarp,carson2001planted,bollobas2004maxcut}. Recently, several works have proposed
methods to handle sublinear cluster sizes. These
works can be roughly classified into three approaches: randomized algorithms~ \citep[e.g.,][]{shamir2007improved},
spectral clustering~\citep[e.g.,][]{mcsherry2001spectralpartitioning, giesen2005manypartition, Chaudhuri, Rohe10}), and low-rank matrix decomposition~\cite{Jalali2011clustering,chen2012sparseclustering,AMEVAV11,OYMHAS11}. While these work differs in the methodology, they all impose
constraints on the size of the minimum true cluster~--~the best result up-to-date requires
it to be $\tilde{\Omega}(\sqrt{n})$.
%Notice that when $p+q=1$, the planted partition model reduces to the correlation clustering formulation,

{\bf Correlation Clustering}  This problem, originally defined by Bansal, Blum and Chawla \cite{BBC05}, also considers graph clustering
but in an adversarial noise setting.  The goal there is to find the clustering minimizing the total disagreement
(intercluster edges plus intracluster nonedges), without there being necessarily a notion of true clustering (and hence no
``exact recovery'').  This problem is usually studied in the combinatorial optimization framework and is known to be
 NP-Hard to approximate to within some constant factor.  Prominent work includes \citet{DEFI06,Ailon:2008:AII,CharikarGW05}.
A PTAS is known in case the number of clusters is fixed \cite{GiotisGuruswami07}.

{\bf Low rank matrix decomposition via trace norm:} Motivated from robust PCA, it has recently been shown~\cite{cspw,candeswrightma},
that it is possible to recover a low-rank matrix from sparse errors of arbitrary magnitude, where the key ingredient
is using trace norm (aka nuclear norm) as a convex surrogate of the rank. A similar result is also obtained when
the low rank matrix is corrupted by other types of noise~\cite{XuCaramanisSanghaiv12-OP}.

Of particular relevance to this paper is \citet{Jalali2011clustering} and \citet{chen2012sparseclustering}, where the authors apply this
approach to graph clustering, and specifically to the planted partition model. Indeed, \citet{chen2012sparseclustering} achieve state-of-art performance guarantees for the planted partition problem. However, they don't overcome the  $\tilde{\Omega}(\sqrt{n})$ minimal cluster size lower bound.

{\bf Active learning/Active clustering} Another line of work that motivates this paper is study of active learning algorithms
(a settings in which labeled instances are chosen by the learner, rather than by nature), and in particular
active learning for
clustering.
The most related work is  \citet{AilonBE12}, who  investigated active learning for  correlation clustering.
The authors obtain a $(1+\varepsilon)$-approximate solution with respect
to the optimal, while (actively) querying  no more than $O(n\poly(\log n,k,\varepsilon^{-1}))$
edges.  The result imposed no restriction on cluster sizes and hence inspired this work, but differs in at least two major ways.  First, \citet{AilonBE12} did not
consider \emph{exact recovery} as we do.  Second, their guarantees fall in the  ERM (Empirical Risk Minimization)
framework, with no running time guarantees.  Our work recovers true cluster exactly using  a convex relaxation algorithm,
and is hence computationally efficient.
%Yet, the focus of \citet{AilonBE12} is on approximation methods for the worst-case analysis, whereas our focus is on exact
%methods in the average case, and hence the analysis differs significantly.
The problem of active learning has also been investigated in
 other clustering setups including clustering based on distance matrix~\cite{VBRTX-activeclustering12,Shamir11budget}, and
hierarchical clustering~\cite{Eriksson,krishnamurthy2012hierarchical}. These setups differ from ours and cannot
be easily compared.
%As these setups are different from graph clustering,
%the results are not comparable.

 %Active clustering. Other papers that might be relevant: \cite{eriksson,krishnamurthy2012hierarchical}.

% !Tex root = icml_smallbigcluster.tex        
% please don't remove the above - I need it for texworks editor  -- Nir

\section{Notation and Setup}

Throughout, $V$ denotes a ground set of elements, which we identify with the set $[n] = \{1,\dots, n\}$.
We assume a true ground truth  clustering of $V$ given by a pairwise disjoint covering $V_1,\dots, V_k$, where $k$ is the number
of clusters. We say $i\sim j$ if $i,j\in V_a$ for some $a\in [k]$,
otherwise $i\not\sim j$.  We let $n_i = |V_i|$ for all $i\in[k]$.  For any $i\in[n]$, $\langle i\rangle$ is the unique
index satisfying $i \in V_{\langle i\rangle}$.

For a matrix $X \in \R^{n\times n}$ and a subset $S\subseteq [n]$ of size $m$, the matrix
$X[S] \in \R^{m\times m}$ is the principal minor of $X$ corresponding to the set of indexes $S$.
For a matrix $M$, $\Gamma(M)$ denotes the support of $M$, namely, the set of index pairs $(i,j)$ such that $M(i,j)\neq 0$.

The ground truth clustering matrix, denoted $K^*$, is defined so that $K^*(i,j)=1$ is $i\sim j$, otherwise $0$.
This is a block diagonal matrix, each block consisting of $1$'s only.  Its rank is $k$.
The input is a symmetric matrix $A$, a noisy version of $K^*$.  It is generated using the well known \emph{planted clustering} model,
as follows.  There are two fixed edge probabilities, $p>q$.
We think of  $A$ as the adjacency matrix of an undirected random graph, where edge $(i,j)$ 
is in the graph for $i>j$ with probability $p$ if $i\sim j$, otherwise with probability $q$, independent of other choices.
The error matrix is denoted by $B^* := A - K^*$.   We let $\Omega := \Gamma(B^*)$ denote the \emph{noise locations}.

Note that our results apply to the more practical case in which the edge probability of $(i,j)$ is $p_{ij}$ for each $i\sim j$ and  $q_{ij}$  for $i\not\sim j$, as long as $(\min p_{ij}) =: p > q := (\max q_{ij})$.

%Same as Nir's, expect that we consider the planted partition model.
%That is: 
%\begin{enumerate}
%\item for $i\sim j$, the entries$(i,j)$ and $(j,i)$ are in $\Omega=\Gamma(B^{*})$
%with probability $1-p_{ij}$; 
%\item for $i\not\sim j$, the entries$(i,j)$ and $(j,i$) are in $\Omega=\Gamma(B^{*})$
%with probability $q_{ij}$. 
%\end{enumerate}
%In other words, $p_{ij}$ ($q_{ij}$) is the probability of having
%the edge $(i,j)$ in the graph $A$ between two nodes in (not in)
%the same cluster.

%We assume $p_{ij}$ and $q_{ij}$ are uniformly bounded as follows:
%$p_{ij}\ge p,\forall i\sim j$, and $q_{ij}\le q,\forall i\not\not\sim j$,
%where $1\ge p>q\ge0$.

\section{Results}
We remind the reader that the trace norm of a matrix is the sum of its singular values, and we define the $\ell_1$ norm
of a matrix $M$ to be $\|M\|_1 = \sum_{ij}|M(ij)|$.
Consider the following convex program, combining the trace norm of a matrix variable $K$ with the $\ell_1$ norm of
  another matrix variable $B$ using two parameters $c_1,c_2$ that will be determined later:
\begin{eqnarray*}
\mbox{(CP1) }\min &  & \left\Vert K\right\Vert _{*}+c_{1}\left\Vert \PP_{\Gamma(A)}B\right\Vert _{1}+c_{2}\left\Vert \PP_{\Gamma(A)^{c}}B\right\Vert _{1}\\
\mbox{s.t.} &  & K+B=A\\
 &  & 0\le K_{ij}\le1,\forall(i,j).
\end{eqnarray*}

\begin{thm}
\label{thm:main}There exist constants $b_{1},b_{3},b_{4}>0$
such that the following holds with probability at least $1-n^{-3}$.
For any parameter $\kappa \geq 1$ and $t\in [\frac{1}{4}p+\frac{3}{4}q,\frac{3}{4}p+\frac{1}{4}q]$, define 
\begin{eqnarray}\label{ells}
%\ellbig &=& b_{3}\frac{\kappa \sqrt{p(1-q)n}}{p-q}\log^{2}n \\
% \ellsmall&=& b_{4}\frac{\kappa \sqrt{p(1-q)n}}{p-q}\ .
\ellbig = b_{3}\frac{\kappa \sqrt{p(1-q)n}}{p-q}\log^{2}n \ \ 
 \ellsmall = b_{4}\frac{\kappa \sqrt{p(1-q)n}}{p-q}\ .
\end{eqnarray}
If for all $i\in [k]$,  either $n_i \geq \ellbig$ or $n_i \leq \ellsmall$ and if
 $(\hat K, \hat B)$ is  an optimal solution to (CP1), with 
\begin{eqnarray}\label{cees}
%c_{1}&=&\frac{b_{1}}{\kappa \sqrt{n\log n}}\sqrt{\frac{1-t}{t}} \\
%c_{2}&=&\frac{b_{1}}{\kappa \sqrt{n\log n}}\sqrt{\frac{t}{1-t}} \ ,
c_{1}=\frac{b_{1}}{\kappa \sqrt{n\log n}}\sqrt{\frac{1-t}{t}} \ \ \ 
c_{2}=\frac{b_{1}}{\kappa \sqrt{n\log n}}\sqrt{\frac{t}{1-t}} \ ,
\end{eqnarray}
then $(\hat K, \hat B) = (\Pbig K^*, A - \hat K)$, where for a matrix $M$, $\Pbig M$ is the matrix defined by
$$ (\Pbig M)(i,j) = \begin{cases} M(i,j) & \max\{n_{\langle i\rangle}, n_{\langle j\rangle}\} \geq \ellbig \\ 0 & \mbox{otherwise} \end{cases}\ .
$$
%then the unique optimal solution to (CP1) is $\left(\hat{K}=\Pbig K^{*},\hat{B}=A-\hat{K}\right)$
%with probability at least $1-n^{-3}$.
\end{thm}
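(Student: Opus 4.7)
The plan is to prove optimality of the candidate pair $(\tilde K, \tilde B) := (\Pbig K^\ast,\, A - \Pbig K^\ast)$ by constructing an explicit dual certificate for the convex program (CP1). First I would verify feasibility, which is immediate since $\Pbig K^\ast$ is $0/1$-valued, and $\tilde K + \tilde B = A$ by definition. Uniqueness and optimality then reduce, via convex duality / subgradient KKT conditions, to producing a matrix $W$ in the trace-norm subdifferential at $\tilde K$ together with dual multipliers for the $\ell_1$ term and the box constraints that jointly certify a zero subgradient of the Lagrangian.

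Structurally, $\tilde K$ is a block-diagonal matrix whose nonzero blocks are exactly the all-ones blocks on the big clusters, so its SVD is trivial: let $U$ be the $n\times r$ matrix with columns $\one_{V_i}/\sqrt{n_i}$ for the big clusters $V_i$ (with $r$ the number of big clusters). Then $UU^T$ is the orthogonal projector onto the big-cluster column space, and a valid trace-norm subgradient has the form $W = UU^T + W_0$ with $\PP_T^\perp(W) = W_0$ supported orthogonally to the tangent space $T = \{UX^T + YU^T\}$ and satisfying $\|W_0\|\le 1$. I would build $W$ so that, when combined with the $\ell_1$ subgradient $Q$ (with $Q(i,j) = c_1 \sign \tilde B(i,j)$ on $\Gamma(A)$, $c_2 \sign \tilde B(i,j)$ on $\Gamma(A)^c$, and $|Q(i,j)|$ bounded by the appropriate $c_\ell$ where $\tilde B(i,j)=0$), and with nonnegative multipliers $\Lambda, \Xi$ for the box constraints active on big-cluster blocks, one has $W + Q + \Lambda - \Xi = 0$. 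Complementary slackness forces $\Lambda,\Xi$ to be supported only where $\tilde K\in\{0,1\}$, which holds everywhere, leaving enough freedom to absorb the on-support equations entry by entry and reduce the problem to controlling $W_0$ off the tangent space.

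The main obstacle, and where the small-cluster assumption is used, is bounding the spectral norm of the residual $W_0$. Splitting $\tilde B = A - \Pbig K^\ast$ by block type: on big-big blocks it is the usual planted-partition noise $B^\ast$ restricted there; on mixed big-small blocks it is also $B^\ast$; but on small-small blocks $\tilde B$ equals $A$ itself, which inside each small cluster is a dense $p$-subgraph rather than a sparse noise block. Plugging the expression for $Q$ into $W = -Q - \Lambda + \Xi$ (restricted to $T^\perp$) and using the choice $c_1 = b_1 \kappa^{-1}(n\log n)^{-1/2}\sqrt{(1-t)/t}$, $c_2$ symmetric, the contribution of the big/mixed-block entries is a centered random matrix whose spectral norm is $O(c_1 \sqrt{p(1-q)n})$ by standard concentration for random matrices with independent bounded entries (Bernstein/Seginer/matrix Bernstein); this is exactly where the threshold $\ellbig \asymp \kappa \sqrt{p(1-q)n}\log^2 n /(p-q)$ arises, so that the ``$UU^T$'' piece, whose tangent-space projection leaks into $T^\perp$ proportionally to $1/n_i$ for big clusters, is dominated. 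The contribution of the small-small blocks, however, is not mean-zero on-block: for each small cluster $V_j$ of size $n_j \le \ellsmall$, the induced submatrix of $Q$ is deterministically of size $\lesssim c_1$ on an $n_j\times n_j$ block, contributing spectral norm at most $c_1 n_j \le c_1 \ellsmall$, and summing across small clusters adds at most $c_1\sum_j n_j \le c_1 n$; by the choice of $\ellsmall = \ellbig/\log^2 n$ this is absorbed with room to spare into the $\|W_0\|<1$ budget, as will be the analogous bound for a sum of rank-one pieces $\one_{V_j}\one_{V_j}^T$ coming from small clusters.

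Finally I would assemble the certificate in the same two-step fashion as in Chen--Sanghavi--Xu and Jalali--Chen et al.: first project the naive candidate $-Q$ onto $T^\perp$, then correct the on-support residual by a matrix supported on big-cluster blocks (whose spectral norm is controlled by its block diagonal structure and the $\kappa^{-1}$ factor in $c_1,c_2$), and finally apply a single matrix Bernstein bound plus union bound over rows/columns to guarantee $\|W_0\|<1$ with probability $\ge 1 - n^{-3}$. The $\kappa$ knob is threaded through every concentration step so that one can trade probability for threshold; combined with the quantitative gap between $\ellbig$ and $\ellsmall$, this closes the proof.
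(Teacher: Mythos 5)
Your overall architecture is right --- this is a dual-certificate argument, and the proof in the paper proceeds along the same broad lines (an approximate optimality condition, followed by an explicit construction of $Q$ and concentration bounds). But there is a genuine gap in the step you flagged as ``the main obstacle,'' and as stated your bound fails.

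You write that for the small-small blocks, ``the induced submatrix of $Q$ is deterministically of size $\lesssim c_1$ on an $n_j\times n_j$ block, contributing spectral norm at most $c_1 n_j \le c_1\ellsmall$, and summing across small clusters adds at most $c_1\sum_j n_j \le c_1 n$; ... this is absorbed with room to spare into the $\|W_0\|<1$ budget.'' Two problems. First, the within-cluster small-small part is \emph{block-diagonal}, so its spectral norm is the \emph{maximum} over blocks, not the sum; the ``$c_1 n$'' bound is the wrong aggregation. Second, even setting that aside, $c_1 n \asymp \sqrt{n/\log n} \gg 1$, so it cannot be absorbed into a budget of $1$; and the corrected max $c_1\ellsmall \asymp \sqrt{p(1-q)/\log n}/(p-q)$ is still not $o(1)$ when $p-q$ is small. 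What actually makes this term controllable in the paper is that $\Psmall Q_\sim$ is split into a deterministic mean matrix whose entries are not $\Theta(c_1)$ but rather $\Theta(c_1(p-t))\lesssim c_1(p-q)$ (so the block spectral norm is $\lesssim c_1(p-q)\ellsmall \asymp \sqrt{p(1-q)/\log n}$, which vanishes), plus a centered fluctuation handled by matrix Bernstein. For the cross-cluster small-small part ($i\not\sim j$, both small), the matrix is \emph{not} block-diagonal, and the paper's construction uses an auxiliary $\pm1$-valued random matrix $W$ with a carefully tuned bias precisely so that this piece is mean-zero; without that centering trick the naive estimate you propose does not close. This zero-mean construction on small-cluster entries, combined with the box constraints, is the real content of the extension beyond the big-cluster-only case of \citet{chen2012sparseclustering}, and your argument currently skips it.

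Two smaller remarks. You frame optimality via KKT with dual multipliers $\Lambda,\Xi$ for the box constraints; the paper instead proves optimality directly by lower-bounding the objective increase $d(\Delta)$ for every feasible perturbation $\Delta$, using trace-norm and $\ell_1$ subgradients plus six structural inequalities on $Q$ (the box constraints enter through the restriction $\Delta\in\DD$ rather than through explicit multipliers). Either route can work, but the primal perturbation argument is what accommodates the asymmetric treatment of big and small blocks cleanly. Finally, for $\kappa>1$ the paper does not ``thread $\kappa$ through'' the concentration bounds: it proves the $\kappa=1$ case and then reduces general $\kappa$ to it by padding $A$ with an identity block to dimension $\kappa^2 n$ and showing the padded program inherits the optimizer. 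You should either adopt this padding reduction or explicitly check that every concentration step in fact scales correctly with $\kappa$.
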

(Note that by the theorem's premise,  $\hat K$ is the matrix obtained from $K^*$ after zeroing out blocks corresponding to clusters of size at most $\ellsmall$.)
The proof is based on  \citet{chen2012sparseclustering} and is deferred to the supplemental material due to lack
of space. The main novelty in this work compared to previous work is the treatment of small clusters of size at most $\ellsmall$,
whereas in previous work only large clusters were treated, and the existence of small clusters did not allow recovery of
the big clusters.

\begin{defn}\label{defnpartial}
 An $n\times n$ matrix $K$ is a partial clustering  matrix if there exists a collection of pairwise disjoint sets 
 $U_1,\dots, U_r \subseteq [n]$ (the \emph{induced clusters})
 such that $K(i,j)=1$ if and only if $i,j \in U_s$ for some $s\in[r]$, otherwise $0$.
 If $K$ is a partial clustering matrix then $\sigmamin(K)$ is defined as $\min_{i=1}^r |U_i|$.
\end{defn}
The definition is depicted in Figure~\ref{optfig}.
Theorem~\ref{thm:main} tells us that by choosing $\kappa$ (and hence $c_1$, $c_2$) properly such that no cluster size
falls in the range $(\ellsmall, \ellbig)$, the  unique optimal solution $(\hat K, \hat B)$ to convex program (CP1) 
is such that $\hat K$ is a partial clustering induced by big ground truth clusters.

In order for this fact to be useful algorithmically, we also need a type of converse: there exists an event with high
probability (in the random process generating the input), such that for all values of $\kappa$, if an optimal solution 
to the corresponding (CP1) looks
like the solution $(\hat K, \hat B)$ defined in Theorem~\ref{thm:main}, 
then the blocks of $\hat K$ correspond to actual clusters.
% Figure~\ref{optfig}, then the black blocks in the diagonal correspond to actual clusters.  We make this precise.

\begin{figure}
\begin{tabular}{cc}
\scalebox{0.2}{\includegraphics{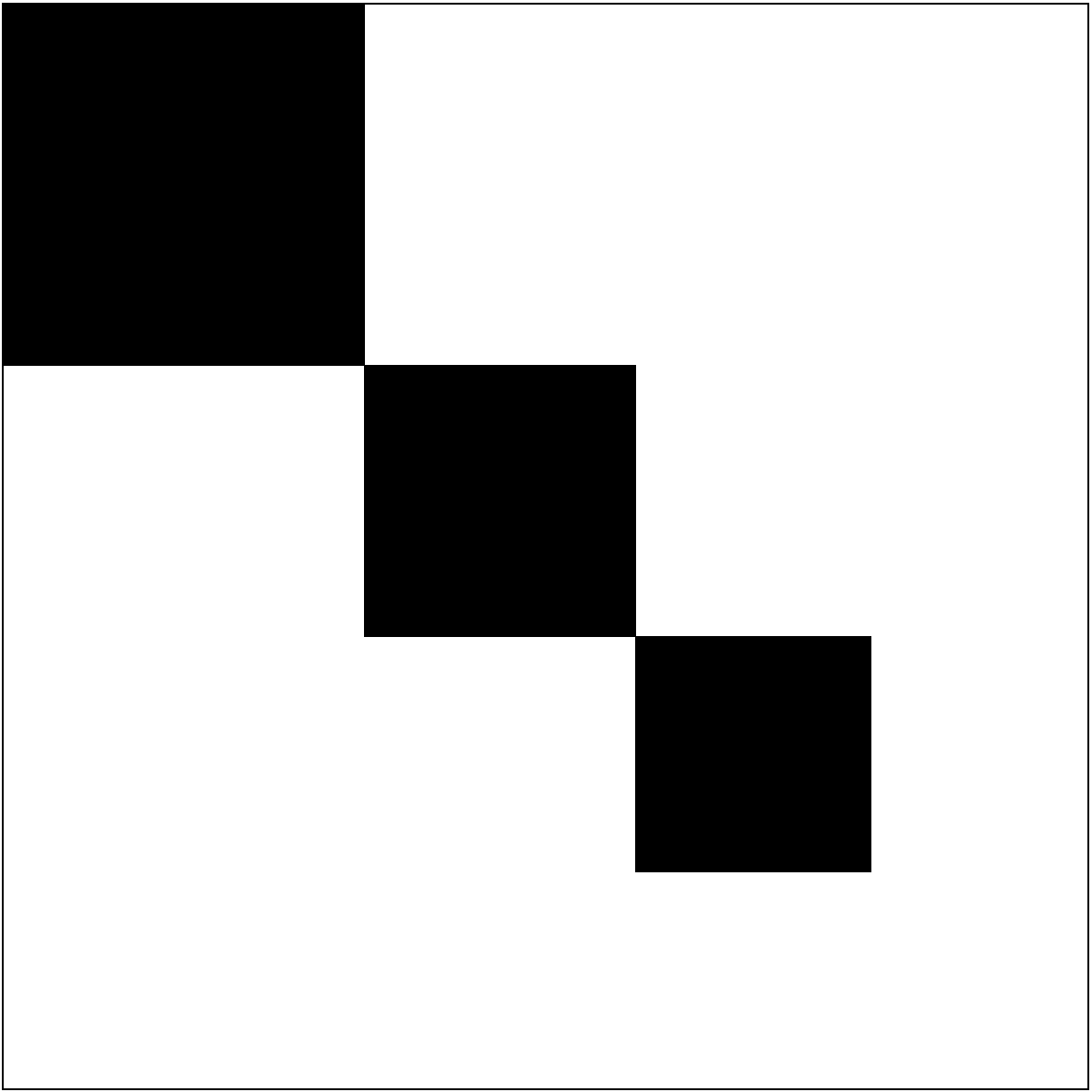}} 
&
\begin{minipage}{0.5\linewidth}
\vspace{-10ex}
 Black represents $1$, white represents $0$. $\sigmamin(K)$ is the
side length of the smallest black square.
\end{minipage}
\\
\end{tabular}
\caption{A partial clusering matrix $K$. }\label{optfig}
\end{figure}

%\begin{defn}\label{defnadmissible}
% We say that a solution $(K,B)$ is $\kappa$-admissible if it is an optimal solution  to (CP1) with $c_1,c_2$ as defined in
% Theorem~\ref{thm:main} with parameter $\kappa\geq 1$ and $t$ in the allowed range, 
% $K$ is a partial clustering and $\sigmamin(K) \geq \kappa b_3\frac{\sqrt{p(1-q)n}}{p-q}\log^2 n$.
% \end{defn}

% To simplify the exposition,we use in what follows  notation such as $C(p,q), C'(p,q), C''(p,q),... \hat C(p,q), \bar C(p,q)$ etc. to 
% denote  constants that depends on $p,q$ \emph{only}.  (We avoid specifying the actual dependence.)
\begin{thm}\label{soundness}
%There exists  constants $C_1(p,q), C_2(p,q)>0$ such that the following holds.
There exists constants $C_1,C_2>0$ such that
with probability at least $1-n^{-2}$, the following holds.
For all $\kappa \geq 1$ and $t\in[\frac 3 4 q + \frac 1 4 p, \frac 1 4 q + \frac 3 4 p]$, if $(K,B)$ is an optimal solution to (CP1) with 
$c_1,c_2$ as defined in Theorem~\ref{thm:main}, and additionally $K$ is a partial clustering induced by $U_1,\dots, U_r\subseteq V$, and also  
\begin{equation}\label{sigmaminbound}
\sigmamin(K) \geq \max  \left\{\frac{C_1 k\log n}{(p-q)^2}, \frac{ C_2 \kappa \sqrt{p(1-q)n\log n} }{p-q}\right\}\ ,\end{equation} then
$U_1,\dots, U_r$ are actual ground truth clusters,
 namely, there exists an injection $\phi : [r]\mapsto [k]$ such that $U_i = V_{\phi(i)}$ for all $i\in[r]$.
 \end{thm}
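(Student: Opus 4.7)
My plan is to prove the statement by contradiction: suppose $(K, B)$ is optimal for (CP1) with $K$ a partial clustering matrix induced by $U_1, \ldots, U_r$ satisfying \eqref{sigmaminbound}, yet some $U_a$ does not equal any true cluster $V_j$, and exhibit a feasible modification of $(K, B)$ that strictly decreases the objective. I first condition on a high-probability event $\mathcal{E}$ (holding with probability at least $1 - n^{-2}$) capturing two concentration facts: (i) the operator-norm bound $\|A - E[A]\|_{\operatorname{op}} \leq C\sqrt{np(1-q)}$ via matrix Bernstein, and (ii) for every vertex $v \in V$ and every true cluster $V_j$, the scalar bound $|e(v, V_j) - E[e(v, V_j)]| \leq C\sqrt{p|V_j|\log n}$ via scalar Bernstein together with a union bound over the $nk \leq n^2$ pairs; here $e(X, Y)$ denotes the number of $A$-edges between $X, Y \subseteq V$.

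I then study two families of feasible perturbations. The block-level perturbation replaces $\mathbf{1}_{U_a}\mathbf{1}_{U_a}^T$ by $\mathbf{1}_S\mathbf{1}_S^T + \mathbf{1}_T\mathbf{1}_T^T$ with $S = U_a \cap V_j$, $T = U_a \setminus V_j$: the trace norm is unchanged and the $\ell_1$ part changes by $2[(c_1+c_2)e(S,T) - c_2|S||T|]$; using $c_2/(c_1+c_2) = t$, $E[e(S,T)] = q|S||T|$, and event (i) in the form $|e(S,T) - q|S||T|| \leq C\sqrt{np(1-q)|S||T|}$, optimality of $K$ forces $|S||T| \leq O(np(1-q)/(p-q)^2)$ for every nontrivial split. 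Combining this constraint over $j$, and supplementing with a ``collapse $U_a$'' perturbation to rule out the no-dominant-cluster case, I identify a unique $\phi(a) \in [k]$ with $|U_a \triangle V_{\phi(a)}| \leq \eta := O(np(1-q)/((p-q)^2 |U_a|))$, which is $o(|U_a|)$ by \eqref{sigmaminbound}. The single-vertex perturbation, attempting to remove a hypothesized $v \in U_a \setminus V_{\phi(a)}$, changes the objective by $-1 + 2[(c_1+c_2)e(v, U_a\setminus v) - c_2(|U_a|-1)]$, so optimality forces $e(v, U_a\setminus v) \geq t(|U_a|-1) + \tfrac{1}{2(c_1+c_2)}$. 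The crucial step, sidestepping a prohibitive union bound over data-dependent $U_a$, is the bound $|e(v, U_a) - e(v, V_{\phi(a)})| \leq |U_a \triangle V_{\phi(a)}| \leq \eta$, which transfers the needed pointwise concentration to the fixed pair $(v, V_{\phi(a)})$ covered by event (ii); since $v \notin V_{\phi(a)}$, this yields $e(v, V_{\phi(a)}) \leq q|V_{\phi(a)}| + O(\sqrt{p|V_{\phi(a)}|\log n})$, and chaining gives $(t-q)|U_a| \leq O(\sqrt{p|U_a|\log n}) + O(\eta) + O(1)$. The two terms in \eqref{sigmaminbound} are calibrated precisely to dominate, respectively, the $\sqrt{p|U_a|\log n}$ error (via $C_1 k\log n/(p-q)^2$) and the $\eta$ error (via $C_2\kappa\sqrt{p(1-q)n\log n}/(p-q)$), producing the contradiction. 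A symmetric addition argument at any $v \in V_{\phi(a)} \setminus U_a$ forces $V_{\phi(a)} \subseteq U_a$, so $U_a = V_{\phi(a)}$; injectivity of $\phi$ follows from pairwise disjointness of the $U_a$'s and of the $V_j$'s.

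The principal obstacle is controlling pointwise concentration $|e(v, U_a) - E[e(v, U_a)]|$ uniformly over data-dependent subsets $U_a$: a naive union bound over $2^n$ subsets is prohibitive, and the operator-norm bound in (i) is too loose at the single-vertex scale to handle the removal/addition arguments directly. The two-stage strategy—first using block-level (operator-norm) concentration to localize $U_a$ near the fixed $V_{\phi(a)}$, then transferring the single-vertex question to the tractable $nk$-indexed event (ii) via the triangle-difference bound—is the key technical device that makes the argument close.
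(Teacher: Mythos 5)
Your approach is genuinely different from the paper's: you replace the paper's Hoeffding\,+\,union-bound-over-all-subsets machinery (the paper's ``Assumption,'' Assumption 5, and Assumption 6 in Section~\ref{sec:proof:soundness}) with a two-scale argument combining an operator-norm bound on $A-E[A]$ with pointwise Bernstein over the $nk$ pairs $(v,V_j)$, and the ``transfer via triangle-difference'' trick is a nice idea. However, I see two genuine gaps.

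\textbf{(1) The symmetric-difference bound is not established.} Your block-split perturbation (replacing $\mathbf{1}_{U_a}\mathbf{1}_{U_a}^\top$ with $\mathbf{1}_S\mathbf{1}_S^\top+\mathbf{1}_T\mathbf{1}_T^\top$, $S=U_a\cap V_j$, $T=U_a\setminus V_j$) only perturbs \emph{inside} $U_a$, so together with the collapse argument it can only yield $|U_a\setminus V_{\phi(a)}|\le\eta$. It says nothing about $|V_{\phi(a)}\setminus U_a|$, yet you then assert $|U_a\triangle V_{\phi(a)}|\le\eta$. This matters: the chain $e(v,U_a)\le e(v,V_{\phi(a)})+|U_a\setminus V_{\phi(a)}|\le q|V_{\phi(a)}|+O(\sqrt{p|V_{\phi(a)}|\log n})+\eta$ only produces a contradiction with the lower bound $e(v,U_a)\ge t(|U_a|-1)+\tfrac{1}{2(c_1+c_2)}$ if $q|V_{\phi(a)}|\lesssim q|U_a|$, i.e.\ if $|V_{\phi(a)}\setminus U_a|$ is small --- precisely what was not shown. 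The ``symmetric addition argument'' that is supposed to give $V_{\phi(a)}\subseteq U_a$ cannot be used to fill this hole first, because it has the identical problem: to show adding $v\in V_{\phi(a)}\setminus U_a$ to $U_a$ is favorable one needs a \emph{lower} bound on $e(v,U_a\cap V_{\phi(a)})$, which cannot be transferred to the fixed pair $(v,V_{\phi(a)})$ without already knowing $|V_{\phi(a)}\setminus U_a|$ is small. The argument is circular. Also, ``injectivity of $\phi$ follows from pairwise disjointness'' is only true \emph{after} $U_a=V_{\phi(a)}$ is established; with only the localization $|U_a\setminus V_{\phi(a)}|\le\eta$, two disjoint $U_a,U_{a'}$ could both sit inside the same large $V_j$. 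The paper's Assumption 5 (injectivity) and Assumption 6 (uncovered mass) are exactly what is needed to break these dependencies; they are proven by Hoeffding with a union bound over all admissible sub-tuples, which the operator-norm bound cannot replicate.

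\textbf{(2) The operator-norm localization is asymptotically too weak.} Even granting $|U_a\triangle V_{\phi(a)}|\le\eta$, your $\eta=\Theta\!\left(np(1-q)/\bigl((p-q)^2|U_a|\bigr)\right)$ is much larger than the paper's $m=\Theta\!\left(\log n/(p-q)^2\right)$. The single-vertex contradiction needs $(p-q)|U_a|\gtrsim\eta$, i.e.\ $|U_a|\gtrsim\sqrt{np(1-q)}/(p-q)^{3/2}$. Taking $|U_a|$ at the second threshold in~\eqref{sigmaminbound}, the ratio $(p-q)|U_a|/\eta=\Theta\!\left(\kappa^2(p-q)\log n\right)$, so the argument closes only when $(p-q)\log n\gtrsim1$. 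Theorem~\ref{soundness} is stated for all $\kappa\ge1$ and all admissible $p,q$, and the interesting regime for this paper (where $\ellbig\le n$) allows $p-q$ as small as $\tilde\Theta(1/\sqrt n)$, well below $1/\log n$. So even after patching gap~(1) you would be proving a weaker statement than Theorem~\ref{soundness} unless you re-derive the block localization with the sharper subset union bound that the paper uses --- at which point the operator-norm event is no longer doing the work.
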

(Note: Our proof of Theorem~\ref{soundness} uses Hoeffding tail bounds for simplicity, which are tight for $p,q$ bounded away from $0$ and $1$.
Bernstein tail bounds can be used to strengthen the result for other classes of $p,q$.  We elaborate on this in Section~\ref{sec:partial}.)

The combination of Theorems~\ref{thm:main} and~\ref{soundness} implies that, 
as long as there exists 
a relatively small interval which is disjoint from the set of cluster sizes, and such that at least one cluster size is larger than this interval (and large enough),
we can recover at least one (large) cluster using (CP1).  This is made clear in the following.

\begin{cor}\label{getoneclust}
Assume we have a guarantee that there exists a number
$\alpha \geq b_4 \frac {\sqrt{p(1-q) n}}{p-q}$, such that no cluster size falls in the interval $(\alpha, \frac{b_3}{b_4}\alpha\log^2 n)$ and at least
one cluster size is of size at least $s := \max\{\frac{b_3}{b_4}\alpha\log^2 n, (C_1k\log n)/(p-q)^2, C_2\sqrt{p(1-q)n\log n}/(p-q)\} $.  Then with probability at least $1-n^{-2}$, we can recover 
 at least one cluster of size at least $s$ efficiently by  solving (CP1) with
$\kappa = \alpha/\left (b_4\frac{\sqrt{p(1-q)n}}{p-q}\right )$.
\end{cor}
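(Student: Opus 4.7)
The plan is to derive the corollary as a direct specialization of Theorem~\ref{thm:main}, with only a small amount of constant bookkeeping. First I would substitute the prescribed knob value $\kappa = \alpha\big/\bigl(b_4\sqrt{p(1-q)n}/(p-q)\bigr)$ into the definitions in~\eqref{ells}; by construction this tunes the two thresholds to $\ellsmall = \alpha$ and $\ellbig = \tfrac{b_3}{b_4}\alpha\log^2 n$, and the assumed lower bound $\alpha \ge b_4\sqrt{p(1-q)n}/(p-q)$ is precisely the statement $\kappa \ge 1$, which is the admissibility condition required by Theorem~\ref{thm:main}.

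Next I would observe that the corollary's hypothesis ``no cluster size lies in $(\alpha,\tfrac{b_3}{b_4}\alpha\log^2 n)$'' is exactly the big/small dichotomy required in Theorem~\ref{thm:main}: every $n_i$ satisfies either $n_i \le \ellsmall$ or $n_i \ge \ellbig$. Applying Theorem~\ref{thm:main} with any fixed $t$ in the prescribed interval (e.g.\ $t=(p+q)/2$) and the corresponding $c_1,c_2$ from~\eqref{cees} then yields, with probability at least $1 - n^{-3} \ge 1 - n^{-2}$, that the unique optimal $(\hat K,\hat B)$ of (CP1) satisfies $\hat K = \Pbig K^*$. Equivalently, $\hat K$ is the block-diagonal $0/1$ matrix whose nonzero all-ones blocks are in bijection with the ground-truth clusters of size $\ge \ellbig$.

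Finally, I would extract a cluster algorithmically by reading off any nontrivial connected component of the support of $\hat K$; by the explicit description above, each such component coincides with some $V_i$ with $n_i\ge\ellbig$. Since the corollary's extra premise guarantees that at least one ground-truth cluster has size $\ge s$, and since $s\ge\ellbig$, the largest such component has size at least $s$, so returning its vertex set produces the claimed large cluster. Computationally the cost is one solve of the convex program (CP1) plus a linear-time pass over the entries of $\hat K$, so the procedure is polynomial time.

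The ``hard part'' here is really just constant bookkeeping: one must verify that the prescribed $\kappa$ aligns the pair $(\ellsmall,\ellbig)$ exactly with the promised forbidden interval, and that the lower bound on $\alpha$ implies $\kappa \ge 1$. Note that Theorem~\ref{soundness} is not actually invoked in this one-shot corollary, because Theorem~\ref{thm:main} already pins down $\hat K$ explicitly; the role of Theorem~\ref{soundness} will be in the subsequent algorithmic setting where $\alpha$ is not known a priori and the output of (CP1) must be certified from the solution itself.
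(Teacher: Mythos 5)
Your proof is correct, and it isolates a useful observation: the corollary as stated can be deduced from Theorem~\ref{thm:main} alone, without invoking Theorem~\ref{soundness}. Once $\kappa$ is chosen so that $\ellsmall = \alpha$ and $\ellbig = \tfrac{b_3}{b_4}\alpha\log^2 n$, the forbidden-interval hypothesis is exactly the big/small dichotomy of Theorem~\ref{thm:main}, and the conclusion $\hat K = \Pbig K^*$ pins down the optimizer explicitly; reading off the blocks then recovers every cluster of size $\ge\ellbig$ with no further verification step needed. This even yields the slightly stronger probability bound $1 - n^{-3}$ and does not consume the $C_1 k\log n/(p-q)^2$ and $C_2\sqrt{p(1-q)n\log n}/(p-q)$ terms in the definition of~$s$.

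The paper's narrative, however, deliberately routes the corollary through the \emph{combination} of Theorems~\ref{thm:main} and~\ref{soundness}, for a reason you correctly anticipate in your last paragraph: in the algorithmic setting one does not know $\alpha$ and must certify a candidate solution from the matrix itself, which is exactly what Theorem~\ref{soundness} and the $\sigmamin$ threshold in (\ref{sigmaminbound}) provide. The extra terms in $s$ exist so that, at the ``correct'' $\kappa$, the certified solution passes the $\sigmamin$ check of Algorithm~\ref{algrecoverbigfull}, which is what the downstream Lemma~\ref{findkappaalg} and Theorem~\ref{algguarantee} rely on. So your route is a genuine and tighter proof of the statement as written, while the paper's route builds in the soundness certificate that the constructive algorithm requires. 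Both are valid; you buy a cleaner one-shot argument, the paper buys compatibility with the iterative peeling machinery.
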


Of course we do not know what $\alpha$ (and hence $\kappa$) is.  We could exhaustively search for a $\kappa\geq 1$ and
hope to recover at least one large cluster.  A more interesting question is, when is such a $\kappa$ guaranteed to exist?
Let $g=\frac {b_3}b{b_4}\log^2n$.  The number $g$ is the (multiplicative) gap size, equaling the ratio between $\ellbig$
and $\ellsmall$ (for any $\kappa$).
If the number of clusters $k$ is  a priori  bounded by some $k_0$, we both ensure that there is at least one cluster of size $n/k_0$,
and by the pigeonhole principle, that one of the intervals in the sequence
$(n/gk_0, n/k_0), (n/g^2k_0, n/gk_0),\dots,  (n/g^{k_0+1}k_0 ,n/g^{k_0}k_0)$.
%\end{eqnarray*}
is disjoint of cluster sizes.    If, in addition, the smallest interval in the sequence is not too small and $n/k_0$ is not
too small so that Corollary~\ref{getoneclust} holds, then we are guaranteed to recover at least one cluster using
Algorithm~\ref{algrecoverbigfull}.  We find this condition difficult to work with.  An elegant, useful version of the idea
is obtained if we assume  $p,q$ are some fixed constants.\footnote{In fact, we need only fix $(p-q)$, but we wish to keep
this exposition simple.} As the following lemma shows, it turns our that in this regime, $k_0$ can be assumed to be
almost logarithmic in $n$ to ensure recovery of at least one cluster.
\footnote{In comparison, \cite{AilonBE12} require $k_0$ to be constant for their guarantees, as do the  Correlation Clustering PTAS  \cite{GiotisGuruswami07}.}
  In what follows, notation such as $C(p,q), C_3(p,q),\dots$ denotes universal positive functions that depend on $p,q$
only.

%First, we will define $\gamma$ to satisfy
%$(p-q)^{-1} = n^\gamma$.
%We will assume that $\gamma <\frac 1 2$, and $n$ is large enough so that
%With this notation, a  sufficient  condition for  (\ref{sigmaminbound}) is the following simpler inequality:
%\begin{equation}\label{sigmaminbound_}
%\sigmamin(K) \geq \max  \left\{{C_1 k n^{2\gamma}\log n},  { C_2 \kappa  n^{1/2+\gamma}\sqrt{\log n}}\right\}\ .%\end{equation} 

\begin{lem}\label{findkappaalg}
There exists $C_3(p,q), C_4(p,q),C_5>0$ such that the following holds.
Assume that $n > C_4(p,q)$, and that we are guaranteed that $k \leq k_0$, where $k_0 = \frac{C_3(p,q)\log n}{\log\log n}$.
Then  with probability at least $1-n^{-2}$
Algorithm~\ref{algrecoverbigfull} will recover at least one cluster in at most $C_5 k_0$ iterations.
\end{lem}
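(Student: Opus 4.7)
The plan is to combine a pigeonhole argument with Corollary~\ref{getoneclust}, using the latter to certify that one specific choice of $\kappa$ in the exhaustive search carried out by Algorithm~\ref{algrecoverbigfull} is guaranteed to recover a large cluster, and Theorem~\ref{soundness} to ensure that the algorithm can recognize when this has happened. First I set $g := (b_3/b_4)\log^2 n$ and consider the $k_0 + 1$ pairwise disjoint intervals
\[
I_j := \bigl( n/(g^{j+1} k_0),\; n/(g^{j} k_0) \bigr), \qquad j = 0, 1, \ldots, k_0.
\]
Since the multiset of cluster sizes has cardinality $k \leq k_0$, the pigeonhole principle provides an index $j^\star$ with $I_{j^\star}$ disjoint from the cluster sizes. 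Moreover, because $\sum_{i=1}^k n_i = n$, the largest cluster has size at least $n/k \geq n/k_0$, which lies at or above the upper endpoint of every $I_j$; in particular, at least one cluster has size strictly greater than the upper endpoint of $I_{j^\star}$ whenever $j^\star \geq 1$, and for $j^\star = 0$ the open interval $(n/gk_0, n/k_0)$ is still avoided by the cluster of size $\geq n/k_0$.

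Second, I verify that $\alpha := n/(g^{j^\star + 1} k_0)$ fulfills the hypotheses of Corollary~\ref{getoneclust}. The delicate inequality is $\alpha \geq b_4 \sqrt{p(1-q)n}/(p-q)$, which is tightest in the worst case $j^\star = k_0$. Since $\log g = \log(b_3/b_4) + 2\log\log n = O(\log\log n)$, one has
\[
g^{k_0+1} \;=\; \exp\!\bigl((k_0+1)\log g\bigr) \;=\; \exp\!\bigl(O(k_0 \log\log n)\bigr) \;=\; n^{O(C_3(p,q))},
\]
so the inequality reduces to $n^{1/2 - O(C_3(p,q))} \gtrsim k_0 \cdot \sqrt{p(1-q)}/(p-q)$, which holds once $C_3(p,q)$ is chosen small enough (depending only on $p,q$, and strictly below $1/2$ once logs are accounted for) and $n \geq C_4(p,q)$. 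The remaining condition that some cluster has size $\geq s$, with $s = \max\{g\alpha,\; C_1 k\log n/(p-q)^2,\; C_2\sqrt{p(1-q)n\log n}/(p-q)\}$, is fulfilled by the cluster of size $\geq n/k_0$: the term $g\alpha \leq n/k_0$ by construction of $I_{j^\star}$, while the other two terms are $O_{p,q}(\log^2 n /\log\log n)$ and $O_{p,q}(\sqrt{n\log n})$, both dominated by $n/k_0$ for $n \geq C_4(p,q)$.

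Third, Algorithm~\ref{algrecoverbigfull} does not know $j^\star$, but can afford to enumerate. For each $j = 0, 1, \ldots, k_0$ it sets $\kappa_j := \alpha_j (p-q)/(b_4\sqrt{p(1-q)n})$ with $\alpha_j := n/(g^{j+1}k_0)$, solves (CP1), and checks whether the output $\hat K$ is a partial clustering matrix whose $\sigmamin$ satisfies~\eqref{sigmaminbound}; if so, it outputs its induced clusters. By Theorem~\ref{thm:main}, the iteration $j = j^\star$ produces $\hat K = \Pbig K^*$ for the corresponding $\ellbig$, a partial clustering matrix whose smallest induced cluster has size $\geq \ellbig \geq s$, so the test passes and at least one true cluster is output. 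Conversely, Theorem~\ref{soundness}, whose high-probability event is uniform in $\kappa$, ensures that on every other iteration a passing test implies a genuine cluster, so no iteration can mislead the algorithm. Union-bounding the two high-probability events yields overall probability $\geq 1 - n^{-2}$ (absorbing the $n^{-3}$ contribution for $n \geq C_4(p,q)$), and the iteration count is $k_0 + 1 \leq C_5 k_0$.

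The main obstacle is the bookkeeping in the second step: ensuring that even the worst-case choice $j^\star = k_0$ still keeps $\alpha$ above $b_4\sqrt{p(1-q)n}/(p-q)$. This is precisely why $k_0$ must be taken to be $O(\log n/\log\log n)$ — the $\log\log n$ denominator is essential so that $(k_0+1)\log g = O(k_0 \log\log n)$ remains $O(\log n)$ and $g^{k_0+1}$ stays a sufficiently small polynomial in $n$; any faster growth of $k_0$ would exponentiate the $\log\log n$ factor and push $\alpha$ below the $\sqrt{n}$ threshold, invalidating Corollary~\ref{getoneclust}.
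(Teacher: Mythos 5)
Your proposal is correct and follows essentially the same route as the paper's own proof: the same pigeonhole argument over the $k_0+1$ geometrically spaced intervals of ratio $g$, the same choice of $\alpha$ as the lower endpoint of the gap interval, and the same invocation of Corollary~\ref{getoneclust} (which already bundles Theorems~\ref{thm:main} and~\ref{soundness}) with $s=n/k_0$. You simply spell out the arithmetic showing that $C_3(p,q)$ sufficiently small keeps $\alpha$ above the $\sqrt{n}$-threshold even for $j^\star=k_0$, and you make explicit the soundness step ensuring other iterations cannot mislead the algorithm — details the paper leaves as ``one easily checks.''
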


The proof is deferred to the supplemental material section.
Lemma~\ref{findkappaalg} ensures that by trying at most a logarithmic number of values of $\kappa$, we can recover
at least one large cluster,  assuming the number of clusters is roughly logarithmic in $n$. 
The next proposition tells us that as long as this step recovers the clusters covering at most all but a vanishing fraction of elements, the step
can be repeated.
\def\nunrecovered{n''}
\def\kunrecovered{k''}
\begin{prop}\label{propinduction}
A pair of numbers $(n',k')$ is called good if $n'\leq n, k'\leq k$ and $k' \leq \frac {C_3(p,q)\log n'}{\log\log n}$. 
If $(n',k')$ is good, then $(\nunrecovered,\kunrecovered)$ is good for all $\nunrecovered, \kunrecovered$ satisfying
%\begin{eqnarray*}
$n'\geq \nunrecovered \geq n'/(\log n)^{1/C_3(p,q)}$ and
$k'-1 \geq \kunrecovered \geq 1$.
%\end{eqnarray*}
\end{prop}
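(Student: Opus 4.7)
The plan is to verify directly the three defining conditions of goodness for $(\nunrecovered, \kunrecovered)$, since the statement is essentially an arithmetic identity unpacked from the definition. The first two conditions $\nunrecovered \leq n$ and $\kunrecovered \leq k$ are immediate from the chain $\nunrecovered \leq n' \leq n$ and $\kunrecovered \leq k'-1 < k' \leq k$, so the substantive content is the third inequality $\kunrecovered \leq C_3(p,q) \log \nunrecovered / \log\log n$.

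The key step will be to use the hypothesis that $(n',k')$ is good to write $k' \leq C_3(p,q) \log n' / \log\log n$, then combine with $\kunrecovered \leq k'-1$ to get
\begin{equation*}
\kunrecovered \leq \frac{C_3(p,q) \log n'}{\log\log n} - 1.
\end{equation*}
The target is $\kunrecovered \leq C_3(p,q) \log \nunrecovered / \log\log n$. Rearranging shows this is equivalent to
\begin{equation*}
\frac{C_3(p,q)(\log n' - \log \nunrecovered)}{\log \log n} \leq 1,
\end{equation*}
which in turn rearranges to $\nunrecovered \geq n'/(\log n)^{1/C_3(p,q)}$. This is precisely the lower bound on $\nunrecovered$ assumed in the proposition's hypothesis, so the third condition holds.

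No step here is hard; the proposition is really a calibration check confirming that the definition of ``good'' has been tuned so that one peeling iteration consumes exactly the right multiplicative amount of $n$ (the factor $(\log n)^{1/C_3(p,q)}$) to compensate for the loss of a single cluster (the decrement of $k'$ by one). The only mild subtlety is to remember that the denominator $\log\log n$ is fixed in terms of the original $n$ and does not change between iterations, which is what allows the subtraction of a constant in the numerator to correspond to a multiplicative shrinkage of $n'$ by exactly the stated factor. I would write the proof as a short three-line verification of the three bullets, with the calculation above as its core.
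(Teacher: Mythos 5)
Your proof is correct and is essentially the direct verification the paper has in mind — the paper simply declares the proof ``trivial'' and omits it, and your three-condition check, with the telescoping inequality $n'' \geq n'/(\log n)^{1/C_3(p,q)} \iff C_3(p,q)(\log n' - \log n'')/\log\log n \leq 1$, is exactly that trivial argument spelled out. Your closing observation that $\log\log n$ in the denominator is pinned to the original $n$ (not $n'$) is also the right thing to flag, as it is the reason the decrement of $k'$ by one translates cleanly into the stated multiplicative shrinkage factor.
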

The proof is trivial. 
The proposition implies an inductive process in which at least one big (with respect to the current unrecovered size) cluster can be efficiently removed 
as long as the previous step recovered at most a  $(1-(\log n)^{-1/C_3(p,q)})$-fraction of its input. 
%(Note that this induction could have been slightly strengthened by replacing the definition of a good pair to be $k' \leq \frac{\log n'}{5\log\log n'}$ instead
%of as stated in the proposition, but this would have unnecessarily complicated result.)
Combining, we proved the following:
\begin{thm}\label{algguarantee}
%Assume $\gamma = -(log(p-q))/(\log n)$ is fixed at less than $1/2$, and $n,k$ satisfy the requirements of
Assume $n,k$ satisfy the requirements of
Lemma~\ref{findkappaalg}.
%and the number of clusters $k$ satisfies (\ref{simplified}). 
Then with probability at least $1-2n^{-2}$
Algorithm~\ref{algrecoverfullobs} 
recovers clusters covering all but at most a $((\log n)^{-1/C_3(p-q)}) $ fraction of the input
in the full observation  case, without any restriction of the minimal cluster size.
Moreover, if we assume that $k$ is bounded by a \emph{constant} $k_0$, then the algorithm
will recover clusters covering all but a constant number of input elements.
\end{thm}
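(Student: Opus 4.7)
The plan is to combine Lemma~\ref{findkappaalg} with Proposition~\ref{propinduction} into a straightforward induction on the peeling rounds carried out by Algorithm~\ref{algrecoverfullobs}. Before starting the induction, I would fix a single high-probability event under which every invocation of Theorems~\ref{thm:main} and~\ref{soundness} throughout the algorithm's execution succeeds simultaneously. The peeling process visits at most $k+1$ distinct residual subgraphs (one per cluster removed), and each such subgraph inherits the planted-partition structure on a subset of $V$ with the same parameters $p$ and $q$. A union bound over the per-instance $O((n')^{-2})$ failure probabilities across these $O(k) = O(\log n/\log\log n)$ subgraphs yields total failure probability at most $2n^{-2}$ for $n$ sufficiently large.

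On this good event, I would run the induction as follows. Let $(n_r,k'_r)$ denote the number of unrecovered elements and clusters at the start of round $r$, with $(n_0,k'_0) = (n,k)$, which is good by hypothesis. Given a good pair $(n_r,k'_r)$ with $n_r > C_4(p,q)$, Lemma~\ref{findkappaalg} produces, within at most $C_5 k'_r$ trial values of $\kappa$ applied to the residual subgraph, an optimal solution to (CP1) whose $K$-component is a partial clustering matrix; Theorem~\ref{soundness} then certifies that its blocks are genuine ground-truth clusters, so Algorithm~\ref{algrecoverfullobs} safely peels them off, producing a residual pair $(n_{r+1},k'_{r+1})$ with $k'_{r+1} \leq k'_r - 1$. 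Proposition~\ref{propinduction} then provides a dichotomy: either $n_{r+1} \geq n_r/(\log n)^{1/C_3(p,q)}$, in which case $(n_{r+1},k'_{r+1})$ is again good and the induction continues; or $n_{r+1} < n_r/(\log n)^{1/C_3(p,q)} \leq n/(\log n)^{1/C_3(p,q)}$, in which case the global fraction of elements still uncovered is already below $(\log n)^{-1/C_3(p,q)}$.

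Iterating, the induction halts either when $k'_{r+1} = 0$ (zero residue, full recovery) or when the hypotheses of Lemma~\ref{findkappaalg} fail on $(n_{r+1},k'_{r+1})$. Combined with the dichotomy above, this yields the first claim: the recovered clusters cover all but a $(\log n)^{-1/C_3(p,q)}$ fraction of elements. For the final claim, when $k$ is bounded by an absolute constant, the induction must terminate within that constant number of rounds either by $k'_{r+1} = 0$ or else by Lemma~\ref{findkappaalg}'s premise failing due to $n_{r+1} \leq C_4(p,q) = O(1)$; in either case the uncovered residue consists of only $O(1)$ input elements.

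The main obstacle is pinning down the probabilistic setup, since the random graph is drawn only once but Theorems~\ref{thm:main} and~\ref{soundness} are invoked repeatedly on many different residual subgraphs. The resolution is to observe that only $O(k)$ subgraphs ever arise during peeling, and each is a genuine planted-partition instance with unchanged $p$ and $q$, so the union bound remains manageable. A secondary subtlety is the telescoping in Proposition~\ref{propinduction}'s dichotomy: a single ``bad'' shrinkage round immediately drops the global residual below the target $(\log n)^{-1/C_3(p,q)}$ fraction, so no delicate iterated accounting is needed, and this is what lets the final uncovered-fraction bound be achieved in one stroke regardless of at which round the bad case occurs.
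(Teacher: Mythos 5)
Your overall strategy---combining Lemma~\ref{findkappaalg} with Proposition~\ref{propinduction} via induction on peeling rounds, with a dichotomy between ``shrinkage preserves goodness'' and ``shrinkage already gives a small enough residue''---is exactly the approach the paper sketches (its entire proof amounts to the phrase ``Combining, we proved the following'').  Nonetheless, two steps of your write-up do not hold as stated and deserve attention.

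First, the probability accounting.  Lemma~\ref{findkappaalg} gives a failure probability of $n'^{-2}$ on a subproblem of size $n'$, and each peel makes $n'$ strictly smaller, hence the per-round failure probability strictly \emph{larger}.  Summing $\sum_r n_r^{-2}$ over $O(\log n / \log\log n)$ rounds, even in the regime $n_r \geq n/(\log n)^{1/C_3(p,q)}$ where the peeling is still ``productive,'' yields a quantity on the order of $(\log n)^{1+2/C_3(p,q)}/(n^2 \log\log n)$, which exceeds $2n^{-2}$ by a polylog factor; the arithmetic in your union bound does not close.  There is also a subtler adaptivity issue: the residual subgraph at round $r$ depends on what the optimizer did in earlier rounds, which depends on the observed $A$.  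So when setting up the single good event one cannot condition on ``the $O(k)$ subgraphs actually visited''; one must union-bound over all possible residual vertex sets (of which there are at most $2^k$, since after each certified peel the residual is a union of ground-truth clusters).  The paper glosses over the same point, so this is not a gap you introduced, but the stated $1-2n^{-2}$ figure should be understood as holding after a mild re-tuning of the exponents in Theorems~\ref{thm:main}--\ref{soundness}, and your exposition should not assert it falls out of the naive union bound.

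Second, the constant-$k_0$ claim.  You attribute termination with a nonempty residue exclusively to the premise $n_{r+1} > C_4(p,q)$ failing and conclude the residue is $O(1)$.  But Lemma~\ref{findkappaalg} has a second premise, namely $k'_{r+1} \leq C_3(p,q)\log n_{r+1}/\log\log n_{r+1}$, and you must also show this cannot fail while $n_{r+1}$ is unbounded.  The argument is easy---with $k'_{r+1} \leq k_0$ fixed and $\log n_{r+1}/\log\log n_{r+1} \to \infty$, the condition holds whenever $n_{r+1}$ exceeds a threshold depending only on $k_0,p,q$, itself a constant---but it needs to be said: as written, your proof only exhibits the bound $n_{r+1}\leq C_4(p,q)$ and is silent about the other way the premise can break.
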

%Recall  that in the partial observation case, the value of $p,q$ is $\rho p', \rho q'$.  Hence, in order to use
% Algorithm~\ref{algrecoverpartobs} one must have that $rho$ is such that $n$ is large enough.

\def\recoverbig{{\operatorname{RecoverBigFullObs}}}
\begin{algorithm}[b!]
 \caption{$\recoverbig(V, A, p, q)$}
 \label{algrecoverbigfull}
 \begin{algorithmic}
   \STATE {\bf require:}  ground set $V$, $A\in \R^{V\times V}$,  probs $p,q$% on edge generation probabilities 
   \STATE $n\leftarrow |V|$
   \STATE $t \leftarrow \frac 1 4 p + \frac 3 4 q$  (or anything in $[\frac 1 4 p + \frac 3 4 q, \frac 3 4 p + \frac 1 4 q]$)
   \STATE $\ellbig \leftarrow n$, $g\leftarrow \frac{b_3}{b_4}\log^2 n$
   \STATE // (If have prior bound $k_0$ on num clusters, 
    \STATE // $\ $ take $\ellbig\leftarrow n/k_0$)
   \WHILE {$\ellbig \geq  \max  \left\{\frac{C_1 k\log n}{(p-q)^2}, \frac{ C_2 \sqrt{p(1-q)n\log n} }{p-q}\right\}$}
      \STATE solve for $\kappa$ using (\ref{ells}), set $c_1,c_2$ as in (\ref{cees})
      \STATE $( \hat K, \hat  B) \leftarrow $ optimal solution to  (CP1)  with $c_1,c_2$
      \IF {$\hat K$ partial clustering matrix with $\sigmamin(\hat K)\geq \ellbig$}
%      \IF {$(\hat K, \hat B)$ admissible (Def.~\ref{defnadmissible})}
          \STATE {\bf return} induced clusters $\{U_1,\dots, U_r\}$ of $\hat K$
      \ENDIF
      \STATE $\ellbig \leftarrow \ellbig / g$
   \ENDWHILE
   \STATE {\bf return} $\emptyset$
 \end{algorithmic}
\end{algorithm}

\def\recoverfull{\operatorname{RecoverFullObs}}
\begin{algorithm}
 \caption{$\recoverfull(V, A, p, q)$}
 \label{algrecoverfullobs}
 \begin{algorithmic}
 \STATE {\bf require:} ground set $V$, matrix $A\in \R^{V\times V}$, probs $p,q$
 \STATE $\{U_1,\dots, U_r\} \leftarrow \recoverbig(V,A,p,q)$
%   \STATE $\mathcal C \leftarrow \mathcal C \cup \{C_1,\dots, C_r\}$
  \STATE $V' \leftarrow [n]\setminus (U_1\cup\cdots\cup U_r)$
  \IF {$r=0$} \STATE {\bf return} $\emptyset$ 
  \ELSE \STATE{\bf return} $\recoverfull(V', A[V'], p, q) \cup \{U_1,\dots, U_r\}$
  \ENDIF
\end{algorithmic}
\end{algorithm}

\subsection{Partial Observations}\label{sec:partial}

We now consider the case where the input matrix $A$ is not given to us in entirety, but rather that
we have oracle access to $A(i,j)$ for $(i,j)$ of our choice. Unobserved values are formally marked with $A(i,j) = ?$.

Consider a more particular setting in which the edge probabilities defining $A$ are $p'$ (for $i\sim j$) and $q'$ (for $i\not\sim j$),
and we observe $A(i,j)$ with probability $\rho$, for each $i,j$, independently.
More precisely: For $i\sim j$ we have $A(i,j)=1$ with probability $\rho p'$, $0$ with probability $\rho(1-p')$ and
$?$ with remaining probability.  For $i\not\sim j$ we have $A(i,j)=1$ with probability $\rho q'$, $0$ with probability
$\rho(1-q')$ and $?$ with remaining probability.
% $
% A(i,j)=\begin{cases}
% 1 & \mbox{with prob. \ensuremath{\rho p'}}\\
% 0 & \mbox{with probability \ensuremath{\rho(1-p')}}\\
% ? & \mbox{with remaining probability}
% \end{cases}
% $
% and for $i\not\sim j$ we have
%  \[
% A(i,j)=\begin{cases}
% 1 & \mbox{with prob. \ensuremath{\rho q'}}\\
% 0 & \mbox{with probability \ensuremath{\rho(1-q')}}\\
% ? & \mbox{with remaining probability}
% \end{cases}
% \]
Clearly, by pretending that the values $?$ in $A$ are $0$, we emulate the full observation case with
$p=\rho p'$, $q=\rho q'$.  

Of particular interest is the case in which $p',q'$ are held fixed and $\rho$ tends to zero as $n$ grows.
In this regime, by varying $\rho$ and fixing $\kappa=1$, Theorem~\ref{thm:main} implies the following:
\begin{cor}
\label{cor:partial}There exist constants $b_{1}(p',q'),b_{3}(p',q'),b_{4}(p',q'),b_5(p',q')>0$ 
such that  for any sampling rate parameter $\rho$ the following holds with probability at least $1-n^{-3}$.
%For any parameter  $\rho \leq 1$, 
%and $t\in [\frac{1}{4}p+\frac{3}{4}q,\frac{3}{4}p+\frac{1}{4}q]$, 
define 
\begin{eqnarray*}
%\ellbig &=& b_{3}(p',q')\frac{\sqrt{n}}{\sqrt \rho}\log^{2}n \\
%\ellsmall&=& b_{4}(p',q')\frac{\sqrt{n}}{\sqrt \rho}\ .
\ellbig = b_{3}(p',q')\frac{\sqrt{n}}{\sqrt \rho}\log^{2}n \ \ \ \ \ \ \ 
\ellsmall= b_{4}(p',q')\frac{\sqrt{n}}{\sqrt \rho}\ .
\end{eqnarray*}
If for all $i\in [k]$,  either $n_i \geq \ellbig$ or $n_i \leq \ellsmall$ and if
 $(\hat K, \hat B)$ is  an optimal solution to (CP1), with 
\begin{eqnarray*}
c_{1}&=&\frac{b_{1}(p',q')}{ \sqrt{n\log n}}\sqrt{\frac{1-b_5(p',q')\rho}{b_5(p',q')\rho}} \\
c_{2}&=&\frac{b_{1}(p',q')}{ \sqrt{n\log n}}\sqrt{\frac{b_5(p',q')}{1-b_5(p',q')\rho}} \ ,
\end{eqnarray*}
then $(\hat K, \hat B) = (\Pbig K^*, A - \hat K)$, where $\Pbig$ is as defined in Theorem~\ref{thm:main}.
%where for a matrix $M$, $\Pbig M$ is the matrix defined by
%$$ (\Pbig M)(i,j) = \begin{cases} M(i,j) & \min\{n_{\langle i\rangle}, n_{\langle j\rangle}\} \geq \ellbig \\ 0 & \mbox{otherwise} \end{cases}\ .
%$$
%then the unique optimal solution to (CP1) is $\left(\hat{K}=\Pbig K^{*},\hat{B}=A-\hat{K}\right)$
%with probability at least $1-n^{-3}$.
\end{cor}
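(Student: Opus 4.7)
The plan is to reduce the partial observation setting to the full observation setting of Theorem~\ref{thm:main} via the device indicated just before the statement: replace each unobserved entry $?$ in $A$ by $0$. Under this replacement, for $i\sim j$ we have $A(i,j)=1$ with probability $\rho p'$ and $A(i,j)=0$ otherwise, while for $i\not\sim j$ we have $A(i,j)=1$ with probability $\rho q'$ and $A(i,j)=0$ otherwise, independently across pairs. This is exactly the full observation model of Theorem~\ref{thm:main} with $p:=\rho p'$ and $q:=\rho q'$, and the convex program (CP1) is identical in the two formulations because $\Gamma(A)$ is unaffected (treating $?$ as $0$ preserves the support structure used in the objective).

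Next, I would fix $\kappa = 1$ and choose $t$ in the allowed interval $[\tfrac{1}{4}p+\tfrac{3}{4}q,\,\tfrac{3}{4}p+\tfrac{1}{4}q]$ by writing $t = b_5(p',q')\,\rho$, where $b_5(p',q')$ is any constant lying in $[(p'+3q')/4,\,(3p'+q')/4]$; this choice depends only on $p',q'$ and not on $\rho$. The corollary then follows by substituting into \eqref{ells} and \eqref{cees} and extracting the $\rho$-dependence. Concretely,
\begin{equation*}
\frac{\sqrt{p(1-q)\,n}}{p-q} \;=\; \frac{\sqrt{p'(1-\rho q')}}{p'-q'}\cdot\frac{\sqrt{n}}{\sqrt{\rho}},
\end{equation*}
so $\ellbig$ and $\ellsmall$ assume the required forms $b_3(p',q')\sqrt{n/\rho}\log^2 n$ and $b_4(p',q')\sqrt{n/\rho}$ once we absorb the prefactor $\sqrt{p'(1-\rho q')}/(p'-q')$ into the constants. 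This prefactor lies in $[\sqrt{p'(1-q')}/(p'-q'),\,\sqrt{p'}/(p'-q')]$ for $\rho\in(0,1]$, so taking the upper bound yields a valid choice of $b_3(p',q')$ and $b_4(p',q')$: enlarging $\ellbig$ and $\ellsmall$ only strengthens the hypothesis $n_i \ge \ellbig$ or $n_i \le \ellsmall$ we need to check (for $\ellsmall$ one uses the lower bound instead, so both constants are well-defined). Likewise, $\sqrt{(1-t)/t}$ and $\sqrt{t/(1-t)}$ reduce to expressions involving only $b_5(p',q')\rho$, matching the stated forms for $c_1$ and $c_2$ upon setting $b_1(p',q') := b_1$.

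The main obstacle is essentially bookkeeping: verifying that each $\rho$-dependent factor separates cleanly into a constant depending only on $(p',q')$ times an explicit, closed-form function of $\rho$. The single subtlety is the factor $\sqrt{1-\rho q'}$, which is not a function of $(p',q')$ alone, but which is uniformly bounded between two constants depending on $(p',q')$ for $\rho\in(0,1]$; this is precisely why the constants $b_i(p',q')$ in the corollary can be chosen independent of $\rho$. Once this is handled, the probability bound $1-n^{-3}$ is inherited directly from Theorem~\ref{thm:main}, and the conclusion $(\hat K,\hat B)=(\Pbig K^*,A-\hat K)$, with the same definition of $\Pbig$ in terms of the new $\ellbig$, transfers verbatim.
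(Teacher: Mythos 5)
Your proposal is correct and takes exactly the route the paper intends: the text immediately preceding the corollary states that treating $?$ as $0$ emulates the full-observation model with $p=\rho p'$, $q=\rho q'$, and that one fixes $\kappa=1$ and invokes Theorem~\ref{thm:main}; your careful handling of the prefactor $\sqrt{p'(1-\rho q')}/(p'-q')$ (using the upper bound for $b_3$ so the corollary's $\ellbig$ dominates the theorem's, and the lower bound for $b_4$ so the corollary's $\ellsmall$ is dominated, guaranteeing the hypothesis transfers and $\Pbig$ is unchanged) is the right bookkeeping. One small point worth flagging: substituting $t=b_5(p',q')\rho$ into $\sqrt{t/(1-t)}$ yields $\sqrt{b_5\rho/(1-b_5\rho)}$, whereas the corollary's displayed $c_2$ has $\sqrt{b_5/(1-b_5\rho)}$ with the $\rho$ missing from the numerator; this appears to be a typo in the statement rather than an error in your derivation, but you should not assert that the forms "match" without noting the discrepancy.
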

(Note: We've abused notation by reusing previously defined global constants (e.g. $b_1$) with global  \emph{functions} of
$p',q'$ (e.g. $b_1(p',q')$).)
Notice now that the observation probability $\rho$ can be used as a knob for controlling the cluster sizes we are trying
to recover, instead of $\kappa$.
We would also like to obtain a version of Theorem~\ref{soundness}.  In particular, we would like to understand
its asymptotics as $\rho$ tends to $0$.
%In this regime, we can replace the Hoeffding inequality used in the proof of Theorem~\ref{soundness} to
%bound tails of random variables with a Bernstein inequality, which is sharper for measuring deviations for rare events.

\begin{thm}\label{soundness2_}
%There exists  constants $C_1(p,q), C_2(p,q)>0$ such that the following holds.
There exist constants $C_1(p',q'),C_2(p',q')>0$  such that
for all observation rate parameters $\rho \leq 1$, the following holds
with probability at least $1-n^{-2}$.
%For all $\kappa \geq 1$ and $t\in[\frac 3 4 q + \frac 1 4 p, \frac 1 4 q + \frac 3 4 p]$, 
If $(K,B)$ is an optimal solution to (CP1) with 
$c_1,c_2$ as defined in Theorem~\ref{cor:partial}, and additionally $K$ is a partial clustering induced 
by $U_1,\dots, U_r\subseteq V$, and also  
\begin{equation}\label{sigmaminbound2}
\sigmamin(K) \geq \max  \left\{\frac{C_1(p',q') k\log n}{\rho}, \frac{ C_2(p',q') \sqrt{ n\log n}} {\sqrt{\rho}}\right\}\ ,\end{equation} then
$U_1,\dots, U_r$ are actual ground truth clusters,
 namely, there exists an injection $\phi : [r]\mapsto [k]$ such that $U_i = V_{\phi(i)}$ for all $i\in[r]$.
 \end{thm}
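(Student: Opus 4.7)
The plan is to mimic the argument for Theorem~\ref{soundness}, with two modifications dictated by the sparse partial-observation regime: (i)~replace $p$ and $q$ by $\rho p'$ and $\rho q'$ throughout; (ii)~everywhere the earlier proof uses Hoeffding tail bounds (or a Hoeffding-flavored matrix concentration inequality), switch to Bernstein-type bounds so as to exploit the per-entry variance $\Theta(\rho)$ rather than the crude range $\Theta(1)$.  The argument proceeds by contrapositive: suppose $(K,B)$ is optimal for (CP1) with $c_1,c_2$ from Corollary~\ref{cor:partial}, $K$ is a partial clustering induced by $U_1,\ldots,U_r$ satisfying~(\ref{sigmaminbound2}), and yet some $U_s$ fails to equal any $V_a$.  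I will construct a feasible perturbation $(K',B')$ of strictly smaller objective, contradicting optimality.

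The failure of $U_s$ reduces to two elementary cases: (a)~there exist distinct $V_a,V_b$ with both $V_a\cap U_s$ and $V_b\cap U_s$ nonempty (``merger'' error), and (b)~there exists $i\in V_a\setminus U_s$ with $V_a\cap U_s\ne\emptyset$ (``missing element'' error).  The canonical corrections are, respectively, to zero out the cross-block $K[V_a\cap U_s,\,V_b\cap U_s]$ and its transpose, or to add $i$ to $U_s$ by setting the corresponding row and column of $K$ to $1$ on $U_s\cup\{i\}$.  In both cases $\Delta K := K'-K$ is supported on a low-rank rectangular region $R$, so $\bigl|\|K'\|_*-\|K\|_*\bigr|=O(\sqrt{|R|})$ by standard rank-one perturbation bounds, while the $\ell_1$ change decomposes into a deterministic drift of order $\rho(p'-q')|R|$ coming from $\langle K^*,\Delta K\rangle$ and a random part $c_1\langle B^*,\Delta K\rangle_{\Gamma(A)}+c_2\langle B^*,\Delta K\rangle_{\Gamma(A)^c}$.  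Because $c_1,c_2\sim 1/\sqrt{n\log n}$ up to $\rho$-dependent prefactors, the spectral perturbation and the $\ell_1$ noise are of comparable order, and both must be dominated by the drift.

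Bernstein's inequality for sums of centered Bernoulli entries on $R$ gives a tail of order $\sqrt{\rho|R|\log n}+\log n$, versus the Hoeffding bound $\sqrt{|R|\log n}$; this saves a factor of $\sqrt{\rho}$ at the per-rectangle level.  Separately, the operator norm $\|B^*\|$ is bounded by $O(\sqrt{n\rho\log n})$ via matrix Bernstein in the sparse regime, as opposed to $O(\sqrt n)$ via matrix Hoeffding.  Substituting these into the drift-beats-noise inequality, and taking a union bound of size $O(k^2)$ for case (a) and $O(nk)$ for case (b), produces exactly the two clauses of~(\ref{sigmaminbound2}), with the constants $C_1(p',q'),C_2(p',q')$ absorbing $(p'-q')^{-1}$ and other variance prefactors.

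The main obstacle is therefore bookkeeping rather than conceptual: every Hoeffding application in the original proof of Theorem~\ref{soundness} must be located and replaced by its Bernstein counterpart, and the union-bound cardinalities must be organized so the $\rho$-saving per application is not squandered.  The delicate ingredient is the matrix Bernstein bound on $\|B^*\|$ in the sparse regime; fortunately this is by now standard in the sparse random-graph literature, so once it is in hand the rest of the argument is a mechanical transcription of the proof of Theorem~\ref{soundness}.
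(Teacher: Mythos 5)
Your core idea matches the paper exactly: Theorem~\ref{soundness2_} is proved by reusing the proof of Theorem~\ref{soundness} verbatim, but replacing every Hoeffding concentration bound with a Bernstein bound so that the per-entry variance $\Theta(\rho)$ (rather than the crude range $\Theta(1)$) controls the tail, which converts the $(p-q)^2 = \rho^2(p'-q')^2$ factor in the Hoeffding exponents into a $\rho$ factor. That is precisely what the paper's one-paragraph proof says, so at the level of Theorem~\ref{soundness2_} itself you have the right approach.

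However, your blind reconstruction of the underlying Theorem~\ref{soundness} argument has two mismatches worth flagging. First, your union-bound cardinalities $O(k^2)$ and $O(nk)$ are too small: the optimal $K$ (and hence the sets $U_s$ and the rectangles $R$ on which your edge counts live) is chosen \emph{after} the random graph is observed, so each concentration inequality must hold uniformly over all candidate set tuples of the relevant sizes. The paper handles this by union-bounding over exponentially many choices of $Y,Y',Z,Z'$, paying $\exp\{C m_Y \log n\}$ per size profile, which is exactly what forces the $\log n$ factor and the $M \gtrsim (\log n)/\rho$ lower bound. Second, your estimate $\bigl|\|K'\|_*-\|K\|_*\bigr|=O(\sqrt{|R|})$ via rank-perturbation and your appeal to a matrix Bernstein bound on $\|B^*\|$ are not needed in this proof: the paper only ever swaps between partial clustering matrices, for which the trace norm is the sum of block side lengths and the change is computed exactly (zero for splits and merges, $|X|$ for adding $|X|$ uncovered elements). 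The operator-norm control of $B^*$ belongs to the dual-certificate argument for Theorem~\ref{thm:main}, not to the soundness direction. Neither gap invalidates your high-level plan for Theorem~\ref{soundness2_}, but a careful transcription would need to fix the union bound and drop the spectral-norm machinery.
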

The proof can be found in the supplemental material. 
Using the same reasoning as before, we derive the following:
\begin{thm}\label{findgappartial}
Let $g = b_3(p',q')/b_4(p',q')\log^2 n$ (with $b_3(p',q'), b_4(p',q')$ defined in Corollary~\ref{cor:partial}).
There exists a constant $C_4(p',q')$ such that the following holds.
Assume the number of clusters $k$ is bounded by some known number $k_0 \leq C_4(p',q') (\log n)/(\log\log n)$.
Let $\rho_0 = \frac{b_3(p',q')^2 k_0^2 \log^4 n}{n}$.
Then there exists $\rho$ in the set $\{\rho_0, \rho g, \dots, \rho g^{k_0}\}$
%\begin{equation*} \{\rho_0, \rho g, \dots, \rho g^{k_0}\} \end{equation*}
for which, if $A$ is obtained with observation rate $\rho$ (zeroing $?$'s), then with probability at least $1-n^{-2}$, 
any optimal solution $( K,  B)$ to (CP1) with $c_1,c_2$ from Corollary~\ref{cor:partial}
satisfies (\ref{sigmaminbound2}).
\end{thm}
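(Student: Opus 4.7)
The plan is to combine Corollary~\ref{cor:partial}, which identifies the optimum of (CP1) with $\Pbig K^{*}$ whenever a cluster-size-free window $(\ellsmall(\rho),\ellbig(\rho))$ exists, with Theorem~\ref{soundness2_}, which certifies that the blocks of any partial-clustering optimum of (CP1) correspond to genuine ground-truth clusters once $\sigmamin$ crosses the threshold in (\ref{sigmaminbound2}). The aim is to exhibit one element $\rho^{\star}$ of the geometric progression $\{\rho_{0}g^{i}:0\le i\le k_{0}\}$ that simultaneously triggers both results.

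The key combinatorial step is a pigeonhole over $\rho$. For each $i\in[k]$ the set of rates for which $n_{i}$ falls inside $(\ellsmall(\rho),\ellbig(\rho))$ is the $\rho$-interval $(b_{4}(p',q')^{2}n/n_{i}^{2},\; b_{3}(p',q')^{2}n\log^{4}n/n_{i}^{2})$, whose multiplicative length is exactly $g^{2}$. Passing (if necessary) to the sub-progression whose step is $g^{2}$ makes the induced cluster-size windows $(\ellsmall(\rho),\ellbig(\rho))$ pairwise disjoint on the cluster-size axis; with $k_{0}+1$ such windows competing with $k\le k_{0}$ cluster sizes, at least one $\rho^{\star}$ in the progression induces a window disjoint from $\{n_{1},\dots,n_{k}\}$.

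At $\rho^{\star}$ we verify the remaining hypotheses. The calibration $\rho_{0}=b_{3}(p',q')^{2}k_{0}^{2}\log^{4}n/n$ gives $\ellbig(\rho_{0})=n/k_{0}\le n_{\max}$, so for every $\rho\ge\rho_{0}$ in the progression at least one cluster is ``large''. Corollary~\ref{cor:partial} applied at $\rho^{\star}$ therefore yields, with probability $\ge 1-n^{-3}$, an optimum $(K,B)$ of (CP1) of the form $(\Pbig K^{*},A-\Pbig K^{*})$, from which $\sigmamin(K)\ge\ellbig(\rho^{\star})$. A direct comparison then shows that $\ellbig(\rho^{\star})=b_{3}(p',q')\sqrt{n/\rho^{\star}}\log^{2}n$ dominates $C_{2}(p',q')\sqrt{n\log n}/\sqrt{\rho^{\star}}$ (since $\log^{2}n\gg\sqrt{\log n}$) and also dominates $C_{1}(p',q')k\log n/\rho^{\star}$ whenever $\rho^{\star}\ge\rho_{0}$ (using $k\le k_{0}$ and the definition of $\rho_{0}$). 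Both terms on the right of (\ref{sigmaminbound2}) are thus dominated by $\sigmamin(K)$, so Theorem~\ref{soundness2_} upgrades the partial-clustering optimum into a certificate that the induced $U_{1},\dots,U_{r}$ are ground-truth clusters. A union bound over the $O(\log n)$ candidate rates absorbs the added failure probability into the claimed $n^{-2}$.

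The main obstacle is choosing $C_{4}(p',q')$ so that, for every $\rho$ in the progression, (a) the progression fits inside the meaningful range $\rho\le 1$, and (b) the inequalities $\ellbig(\rho)\ge$ RHS of (\ref{sigmaminbound2}) carry through uniformly. Each of these reduces to requiring $(\log n)/\log\log n$ to dominate a constant depending on $p'$ and $q'$, which is precisely the content of the hypothesis $k_{0}\le C_{4}(p',q')\log n/\log\log n$. The pigeonhole argument itself is routine once the forbidden $\rho$-intervals have been computed.
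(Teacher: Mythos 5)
Your strategy matches the paper's: pigeonhole over $\rho$ to find a free window, apply Corollary~\ref{cor:partial} to pin down the optimum, verify (\ref{sigmaminbound2}), then invoke Theorem~\ref{soundness2_}. The calibration $\ellbig(\rho_0)=n/k_0$ (ensuring $r\geq 1$ at any $\rho\geq\rho_0$) and the two dominance checks against the right-hand side of (\ref{sigmaminbound2}) are correct.

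The pigeonhole count, however, does not close. You correctly observe that the forbidden $\rho$-interval for a cluster of size $n_i$ has multiplicative length $g^2$, and that the cluster-size windows $(\ellsmall(\rho),\ellbig(\rho))$ become pairwise disjoint only when $\rho$ steps by $g^2$ (because $\ellsmall\propto\rho^{-1/2}$). But after restricting to the step-$g^2$ sub-progression, the set $\{\rho_0,\rho_0 g,\dots,\rho_0 g^{k_0}\}$ retains only $\lceil(k_0+1)/2\rceil$ points, not $k_0+1$; so you have roughly $k_0/2$ pairwise disjoint windows against up to $k_0$ cluster sizes, and the pigeonhole fails. Framed the other way: with the full step-$g$ progression a cluster size can sit in the overlap of two consecutive windows, so $k$ clusters can block up to $2k$ of the $k_0+1$ windows, and $k_0+1>2k$ is not implied by $k\leq k_0$. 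Your sentence beginning ``with $k_0+1$ such windows'' silently re-uses the unrestricted count immediately after restricting to the sub-progression. This factor-of-two shortfall is actually inherited from the theorem's statement (the $\kappa$-progression in Lemma~\ref{findkappaalg} scales $\ellsmall$ linearly, so step $g$ already yields adjacent disjoint windows there, but here the dependence is $\rho^{-1/2}$). Replacing the set with $\{\rho_0,\rho_0 g^2,\dots,\rho_0 g^{2k_0}\}$ gives $k_0+1$ pairwise disjoint cluster-size windows, the pigeonhole then closes, and the remainder of your argument carries over verbatim, with the admissible range of $k_0$ changing only by a constant factor. A correct blind proof should make that repair explicit rather than assert a window count that the restricted progression does not support.
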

(Note that the upper bound on $k_0$ ensures that $\rho g^{k_0}$ is a probability.)
The theorem is proven using a simple pigeonhole principle, noting that one of the intervals $(\ellsmall(\rho), \ellbig(\rho))$
must be disjoint from the set of cluster sizes, and there is at least one cluster of size at least $n/k_0$.
% \begin{thm}\label{findgappartial}
%Let $g = b_3(p',q')/b_4(p',q')\log^2 n$ (with $b_3(p',q'), b_4(p',q')$ defined in Corollary~\ref{cor:partial}).
% There exists a constant $C_3(p',q')$ such that the following holds.
% For all $\delta>0$, if  $k$ is guaranteed to be at most $k_0$ where 
% $$ k_0 = \frac{\delta \log n}{5\log\log n}\ , $$
%  $n \geq \exp \{C_3(p',q')(\delta^{-1}\log \delta^{-1} + b_3(p',q')/b_4(p',q'))\}$,
%then there exists a sampling rate $\rho$ in the set
% \begin{equation}\label{rhosequence}
%\{n^{-1+\delta}, n^{-1+\delta}g, n^{-1+\delta}g^2,\dots, n^{-1+\delta}g^s \leq n^{-1+2\delta}\}
%\end{equation}
%such that with  probability at least $1-n^{-2}$ the solution $(\hat K, \hat B)$
%   to (CP1) with $c_1(p',q'),c_2(p',q')$ as in Corollary~\ref{cor:partial}  is unique, and additionally
%   $\hat K$ is a partial clustering with $\sigmamin(\hat K)$ satisfying (\ref{sigmaminbound2}).
% \end{thm}
%(Note:  In  expression (\ref{rhosequence}), we mean that $s$ is the maximal integer such that $n^{-1+\delta}g^s \leq n^{-1+2\delta}$.) 
The theorem, together with Corollary~\ref{cor:partial} and Theorem~\ref{soundness2_} 
 ensures the following.  On one end of the spectrum, if $k_0$ is a constant (and $n$ is large enough), 
then with high probability we can recover at least one large cluster (of size at least  $n/k_0$) after querying no more
than 
\begin{equation}\label{boundkconst}O\left (n k_0^2 \left(\frac{b_3(p',q')}{b_4(p',q')}\log^2 n\right)^{2k_0}\log^4 n\right)  \end{equation} values of $A(i,j)$.
On the other end of the spectrum, if $k_0 \leq \delta (\log n)/(\log \log n)$ and $n$ is large enough (exponential in $1/\delta$),
then we can recover at least one large cluster after querying no more than $n^{1+O(\delta)}$ values of $A(i,j)$. (We
omit the details of the last fact from this version.)
This is summarized in the following:
\begin{thm}\label{thm:mainpartial}
Assume an upper bound $k_0$ on the number of clusters $k$.  As long as $n$ is larger than some function of $k_0, p',q'$,
Algorithm~\ref{algrecoverfullpart} will recover, with probability at least $1-n^{-1}$, at least one cluster of size at least $n/k_0$,
regardless of the size of other (small) clusters.
Moreover, if $k_0$ is a constant, then clusters covering all but a constant number of elements will be recovered with probability
at least $1-n^{-1}$, and the total number of observation queries is (\ref{boundkconst}), hence almost linear.
\end{thm}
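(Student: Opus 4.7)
The plan is to mirror the structure of the full-observation proof (Theorem~\ref{algguarantee}), replacing each of its ingredients with its partial-observation counterpart, and then to add a query accounting. Concretely, I would define Algorithm~\ref{algrecoverfullpart} as the analog of Algorithm~\ref{algrecoverfullobs}: an outer ``peeling'' loop that, at each step, calls a ``RecoverBigPartial'' subroutine on the current sub-instance; inside the subroutine one tries every sampling rate $\rho$ from the grid $\{\rho_0, \rho_0 g, \dots, \rho_0 g^{k_0}\}$ supplied by Theorem~\ref{findgappartial}, independently draws each entry of $A$ with probability $\rho$ (treating unseen entries as $0$), solves (CP1) with $c_1, c_2$ from Corollary~\ref{cor:partial}, and accepts the resulting $\hat K$ iff it is a partial clustering matrix with $\sigmamin(\hat K)$ exceeding the threshold in (\ref{sigmaminbound2}).

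Correctness of a single subroutine call follows from two facts that we are entitled to stitch together: (i)~by Theorem~\ref{findgappartial}, at least one $\rho$ in the grid makes every optimal solution of (CP1) a partial clustering satisfying (\ref{sigmaminbound2}), so the acceptance test triggers for that $\rho$; and (ii)~by Theorem~\ref{soundness2_}, whenever the test triggers, the induced sets $U_1,\dots,U_r$ are genuine ground-truth clusters. Because the accepted $\sigmamin(\hat K)$ is at least the quantity on the right-hand side of (\ref{sigmaminbound2}), which for the initial instance is of order $n/k_0$, each recovered cluster has size at least $n/k_0$. A union bound gives failure probability at most $2n^{-2}$ per call.

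For the ``covers all but a constant number of elements'' part, I would invoke Proposition~\ref{propinduction}: after each successful subroutine call removes at least a $1/k_0$ fraction of the current elements, the residual sub-instance $(n',k')$ is still \emph{good}, and since $k_0 = O(1)$, at most $O(k_0)=O(1)$ outer iterations are needed before the residual is of constant size. A union bound over these $O(1)$ calls keeps total failure below $n^{-1}$ for $n$ large. The query count inside a single subroutine call is dominated by the largest rate in the grid, giving $\rho_0 g^{k_0} n^2$; substituting $\rho_0 = b_3(p',q')^2 k_0^2 \log^4 n / n$ and $g = (b_3(p',q')/b_4(p',q'))\log^2 n$ from Theorem~\ref{findgappartial}, and multiplying by $O(k_0)$ for the outer iterations, reproduces the expression~(\ref{boundkconst}). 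A Chernoff bound concentrates the realized number of queries around its expectation.

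The main obstacle is bookkeeping rather than a new idea. First, at every outer iteration the grid used by RecoverBigPartial must be rebuilt against the current residual size so that Theorem~\ref{findgappartial}'s hypotheses are inherited via Proposition~\ref{propinduction} at every level of the peeling. Second, one must verify the arithmetic that collapses the seemingly-large product $\rho_0 g^{k_0} n^2$ into $n\cdot\polylog(n)$ when $k_0$ is constant (because then $g^{k_0}=\polylog(n)$), which is exactly what makes (\ref{boundkconst}) ``almost linear.'' Everything else is a straightforward combination of previously proved statements.
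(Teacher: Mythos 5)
Your plan follows the same route the paper takes: this theorem has no detailed proof in the paper beyond the paragraph that stitches together Theorem~\ref{findgappartial}, Corollary~\ref{cor:partial}, and Theorem~\ref{soundness2_}, and your proposal is a fleshed-out version of exactly that stitching plus a query count. That said, two of your steps are not quite right.

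First, the claim that the right-hand side of~(\ref{sigmaminbound2}) ``for the initial instance is of order $n/k_0$'' is off by polylogarithmic factors. Plugging $\rho=\rho_0 = b_3(p',q')^2 k_0^2 \log^4 n / n$ into~(\ref{sigmaminbound2}) gives a threshold of order $n/(k_0\log^{3/2}n)$, not $n/k_0$, so the acceptance test alone only certifies cluster sizes of roughly $n/(k_0\,\polylog n)$. To obtain the stated $n/k_0$ bound you need the different, structural argument: by pigeonhole at least one ground-truth cluster has size $\geq n/k_0$, and for the guaranteed $\rho^*$ of Theorem~\ref{findgappartial} one has $\ellbig(\rho^*)\le \ellbig(\rho_0)= n/k_0$, so Corollary~\ref{cor:partial} tells you that particular cluster appears as a block of the unique optimizer $\Pbig K^*$. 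You should make this explicit rather than try to read $n/k_0$ off of~(\ref{sigmaminbound2}); otherwise you prove a weaker statement than claimed. (There is also a subtlety you and the paper both gloss over: the loop in Algorithm~\ref{algrecoverbigpart} may accept at an $s<s^*$ for which Corollary~\ref{cor:partial} does not apply; Theorem~\ref{soundness2_} then guarantees genuineness of the returned clusters but not that the $\geq n/k_0$ cluster is among them. If you state this argument you should say how you handle that case.)

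Second, your query-count arithmetic does not in fact reproduce~(\ref{boundkconst}). Taking the grid literally as $\{\rho_0,\rho_0 g,\dots,\rho_0 g^{k_0}\}$ gives a maximum sampling rate $\rho_0 g^{k_0}$ and therefore $\rho_0 g^{k_0} n^2 \cdot O(k_0) = O(n\,k_0^3\,g^{k_0}\log^4 n)$, whereas~(\ref{boundkconst}) has $g^{2k_0}$, not $g^{k_0}$. This is actually a symptom of a consistency issue between Theorem~\ref{findgappartial} and~(\ref{boundkconst}): with the stated step of $g$, the intervals $(\ellsmall(\rho),\ellbig(\rho))$ shrink only by a factor $\sqrt{g}$ per step while each has multiplicative width $g$, so they overlap and the pigeonhole argument sketched after Theorem~\ref{findgappartial} does not go through as written; a grid stepping by $g^2$ in $\rho$ makes them exactly abut, makes the pigeonhole clean, and yields exactly the $g^{2k_0}$ in~(\ref{boundkconst}). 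You should either adopt that corrected grid (and then your arithmetic matches~(\ref{boundkconst})) or explicitly flag that the bound you obtain is smaller than the one quoted. As written, your claim that the computation ``reproduces the expression'' is incorrect.
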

Note that unlike previous results for this problem, the recovery guarantee does not impose any lower bound on the size
of the smallest cluster.
Also note that the underlying algorithm is an \emph{active learning} one, because more observations fall in
smaller clusters which survive deeper in the recursion of Algorithm~\ref{algrecoverfullpart}.
% \sim \mathcal D_\sim$ and for $i\not \sim j$ we have $A(i,j)\sim \mathcal D_{\not \sim}$,
% where
% %For $i\sim j$:
% \[
% A(i,j)=\begin{cases}
% 1 & \mbox{with prob. \ensuremath{\rho p'}}\\
% 0 & \mbox{with probability \ensuremath{\rho(1-p')}}\\
% ? & \mbox{with remaining probability}
% \end{cases}
% \]
% 
% 
% For $i\not\sim j$:
% \[
% A(i,j)=\begin{cases}
% 1 & \mbox{with probability \ensuremath{\rho q'}}\\
% 0 & \mbox{with probability \ensuremath{\rho(1-q')}}\\
% ? & \mbox{with remaining probability}
% \end{cases}
% \]

%\textbf{Algorithm:} replace $A(i,j)=?$ with $0$, then solve (CP1)
%with the new $A$.

%\textbf{Guarantee:} Applying Theorem \ref{thm:main} with $p=\rho p'$
%and $q=\rho q'$, we obtain the following corollary.

\def\recoverbigpart{{\operatorname{RecoverBigPartialObs}}}
\def\recoverpart{\operatorname{RecoverPartialObs}}

\begin{algorithm}[!b]
 \caption{$\recoverbigpart(V, k_0)$   (Assume $p',q'$ known, fixed)}
 \label{algrecoverbigpart}
 \begin{algorithmic}
   \STATE {\bf require:}  ground set $V$, oracle access to $A\in \R^{V\times V}$,  upper bound $k_0$ on number of clusters
   \STATE $n\leftarrow |V|$
    \STATE $\rho_0 \leftarrow \frac{b_3(p',q')^2 k_0^2 \log^4 n}{n}$
    \STATE  $g \leftarrow b_3(p',q')/b_4(p',q')\log^2 n$
   \FOR { $s \in \{0,\dots, k_0\} $}
      \STATE $\rho \leftarrow \rho_0 g^s$
     \STATE obtain matrix $A\in \{0,1,?\}^{V\times V}$ by sampling oracle at rate $\rho$, then zero $?$ values in $A$ 
     \STATE // (can reuse observations from prev. iterations)
      \STATE $c_1(p',q'),c_2(p',q')\leftarrow $ as in Corollary~\ref{cor:partial}
      \STATE $(K, B) \leftarrow $ an optimal solution to  (CP1) 
      \IF {$K$ partial clustering matrix satisfying (\ref{sigmaminbound2})}
%      \IF {$(\hat K, \hat B)$ admissible (Def.~\ref{defnadmissible})}
          \STATE {\bf return} induced clusters $\{U_1,\dots, U_r\}$
      \ENDIF
   \ENDFOR
   \STATE {\bf return} $\emptyset$
 \end{algorithmic}
\end{algorithm}
\begin{algorithm}[!b]
 \caption{$\recoverpart(V, k_0)$     (Assume $p',q'$ known, fixed)}
 \label{algrecoverfullpart}
 \begin{algorithmic}
 \STATE {\bf require:} ground set $V$, oracle access to  $A\in \R^{V\times V}$, upper bound $k_0$ on number of clusters
 \STATE $\{U_1,\dots, U_r\} \leftarrow \recoverbig(V,k_0)$
%   \STATE $\mathcal C \leftarrow \mathcal C \cup \{C_1,\dots, C_r\}$
  \STATE $V' \leftarrow [n]\setminus (U_1\cup\cdots\cup U_r)$
  \IF {$r=0$} \STATE {\bf return} $\emptyset$ 
  \ELSE \STATE{\bf return} $\recoverfull(V',k_0-r) \cup \{U_1,\dots, U_r\}$
  \ENDIF
\end{algorithmic}
\end{algorithm}
%\def\recoverpart{\operatorname{RecoverPartialObs}}
%\begin{algorithm}
% \caption{$\recoverpart(V, \rho, p', q')$}
 %\label{algrecoverpartobs}
 %\begin{algorithmic}
% \STATE {\bf require:} ground set $V$, oracle access to $A\in \R^{V\times V}$, bounds $p',q'$, observation rate $\rho$
% \STATE obtain $A\in \R^{V\times V}$ by querying observations at rate $\rho$, zeroing unobserved entries
% \STATE $\{C_1,\dots, C_r\} \leftarrow \recoverbig(V, A,\rho p',\rho q')$
%  \STATE $V' \leftarrow V\setminus (C_1\cup\cdots\cup C_r)$
 % \IF {$r=0$} \STATE {\bf return} $\emptyset$ 
 % \ELSE \STATE{\bf return} $\recoverpart(V', p', q') \cup \{C_1,\dots, C_r\}$
 % \ENDIF
%\end{algorithmic}
%\end{algorithm}

%\input{highlevel_algorithm}

% !Tex root = icml_smallbigcluster.tex
% please don't remove the above - I need it for texworks editor  -- Nir

\section{Experiments}
\label{sub:expt}
We experimented with simplified versions of our algorithms.  Here we did not make an effort to compute the various constants defining the algorithms in this work, creating a difficulty in exact implementation. Instead, for Algorithm \ref{algrecoverbigfull}, we increase $ \kappa $ by a multiplicative factor of $ 1.1 $ in each iteration until a partial clustering matrix is found. Similarly, in Algorithm \ref{algrecoverbigpart}, $ \rho $ is increased by an additive factor of $ 0.025 $.  Still, it is obvious that our experiments support our theoretical findings.  A more practical ``user's guide'' for this method with actual constants is subject to future work.

In all experiment reports below, we use a variant of the Augmented Lagrangian Multiplier (ALM) method \cite{LinALM} to solve the semi-definite program (CP1).  Whenever we say that
``clusters $\{V_{i_1}, V_{i_2}, \dots\}$ were recovered'', we mean that a corresponding instantiation of (CP1)
resulted in an optimal solution $(K,B)$ for which $K$ was a partial clustering matrix induced by $\{V_{i_1}, V_{i_2}, \dots\}$.

%\subsection{Experiment 1 (Full Observation)}
\paragraph{Experiment 1 (Full Observation)}
Consider $ n=1100 $ nodes partitioned into $ 4 $ clusters $ V_1,\ldots, V_4 $, of sizes $800,200,80,20$, respectively. The  graph is generated according to the planted partition model with $ p=0.5 $, $ q=0. 2$, and we assume the full observation setting.
We apply a simplified version of Algorithm \ref{algrecoverfullobs}, which terminates in $ 4 $ steps. The recovered clusters at each step are detailed in Table \ref{resultstab}.    
%
%\begin{table}[t]
%\caption{Results for experiment 1 ($ p=0.5, q=0.2, \rho=1 $)}
%\label{tab:toy1}
%\vskip 0.15in
%\begin{center}
%\begin{small}
%\begin{sc}
%\begin{tabular}{ccccr}
%\hline
%\abovespace\belowspace
%Step & $ \kappa $ & $\#$ nodes left & Clusters recovered \\
%\hline
%\abovespace
%1    & 1 & 1100 & $V_1$ \\
%2    & 1 & 300  & $V_2$\\
%3    & 1 & 100  & $V_3$\\
%\belowspace
%4    & 1 & 20  & $V_4$ \\
%\hline
%\end{tabular}
%\end{sc}
%\end{small}
%\end{center}
%\vskip -0.1in
%\end{table}

%\subsection{Experiment 2 (Partial Observation - Fixed Sample Rate)}

\paragraph{Experiment 2 (Partial Observation - Fixed Sample Rate)}
We have $ n=1100 $ with clusters $V_1,\dots, V_4$ of sizes $800,200,50,50$. The observed graph is generated with $ p'=0.7 $, $ q'=0.1 $, and observation rate $ \rho = 0.3 $.   We repeatedly solved (CP1) with $c_1,c_2$ as in Corollary~\ref{cor:partial}.  At each iteration, at least one large cluster (compared to the input size at that iteration) was recovered
exactly and removed.
This terminated in exactly $3$ iterations.
%Algorithm \ref{algrecoverpartobs} terminates in $ 3 $ steps. 
Results are shown in Table \ref{resultstab}.
%
%\begin{table}[t]
%\caption{Results for experiment 2 ($ p'=0.7, q'=0.1, \rho=0.3 $)}
%\label{tab:toy2}
%\vskip 0.15in
%\begin{center}
%\begin{small}
%\begin{sc}
%\begin{tabular}{ccccr}
%\hline
%\abovespace\belowspace
%Step & $ \kappa $ & $\#$ nodes left & Clusters recovered \\
%\hline
%\abovespace
%1    & 1 & 1100 & $V_1$ \\
%2    & 1 & 300  & $V_2$\\
%\belowspace
%3    & 1 & 100  & $V_3$, $V_4$ \\
%\hline
%\end{tabular}
%\end{sc}
%\end{small}
%\end{center}
%\vskip -0.1in
%\end{table}
%

%\subsection{Experiment 3 (Partial Observation - Incremental Sampling Rate)}
\paragraph{Experiment 3 (Partial Observation - Incremental Sampling Rate)}

We tried a simplified version of Algorithm~\ref{algrecoverfullpart}.
We have $ n=1100 $ with clusters $V_1,\dots, V_4$ of sizes $800,200,50,50$. The observed graph is generated with $ p'=0.7 $, $ q'=0.3 $, and an observation rate $ \rho$ which we now specify. We start with $ \rho=0 $ and increase it by $ 0.025 $ incrementally until we recover (and then remove) at least one cluster, then repeat.   The algorithm terminates in $ 3 $ steps. Results are shown in Table \ref{resultstab}.

\begin{table}[t]
\begin{center}
\begin{small}
\begin{sc}
{\bf Experiment 1:}
\begin{tabular}{ccccr}
\hline \hline
%\abovespace\belowspace
Step & $ \kappa $ & $\#$ nodes left & Clusters recovered \\
\hline
%\abovespace
1    & 1 & 1100 & $V_1$ \\
2    & 1 & 300  & $V_2$\\
3    & 1 & 100  & $V_3$\\
%\belowspace
4    & 1 & 20  & $V_4$ \\
\hline \hline
\end{tabular}
\end{sc}
\end{small}
\end{center}

\begin{center}
\begin{small}
\begin{sc}
{\bf Experiment 2:}
\begin{tabular}{ccccr}
\hline\hline
%\abovespace\belowspace
Step & $ \kappa $ & $\#$ nodes left & Clusters recovered \\
\hline
%\abovespace
1    & 1 & 1100 & $V_1$ \\
2    & 1 & 300  & $V_2$\\
%\belowspace
3    & 1 & 100  & $V_3$, $V_4$ \\
\hline\hline
\end{tabular}
\end{sc}
\end{small}
\end{center}

\begin{center}
\begin{small}
\begin{sc}
{\bf Experiment 3:}
\begin{tabular}{ccccr}
\hline\hline
%\abovespace\belowspace
Step & $ \rho $ & $\#$ nodes left & Clusters recovered \\
\hline
%\abovespace
1    & 0.2 & 1100 & $V_1$ \\
2    & 0.4 & 300  & $V_2$\\
%\belowspace
3    & 0.95 & 100  & $V_3$, $V_4$ \\
\hline\hline
\end{tabular}
\end{sc}
\end{small}
\end{center}

\begin{center}
\begin{small}
\begin{sc}
{\bf Experiment 3A:}
\begin{tabular}{ccccr}
\hline\hline
%\abovespace\belowspace
Step & $ \rho $ & $\#$ nodes left & Clusters recovered \\
\hline
%\abovespace
1    & 0.15 & 4500 & $V_1$ \\
2    & 0.175 & 1300  & $V_2$\\
3    & 0.2 & 500  & $V_3$, $V_4$ \\
%\belowspace
4  & 0.475 & 100  & $V_5$, $V_6$ \\
\hline\hline
\end{tabular}
\end{sc}
\end{small}
\end{center}
\caption{Experiment Results}\label{resultstab}
\end{table}
%
%\begin{table}[t]
%\caption{Results for experiment 3 ($ p'=0.7, q'=0.3, n=1100$)}
%\label{tab:toy3}
%\vskip 0.15in
%\begin{center}
%\begin{small}
%\begin{sc}
%\begin{tabular}{ccccr}
%\hline
%\abovespace\belowspace
%Step & $ \rho $ & $\#$ nodes left & Clusters recovered \\
%\hline
%\abovespace
%1    & 0.2 & 1100 & $V_1$ \\
%2    & 0.4 & 300  & $V_2$\\
%\belowspace
%3    & 0.95 & 100  & $V_3$, $V_4$ \\
%\hline
%\end{tabular}
%\end{sc}
%\end{small}
%\end{center}
%\vskip -0.1in
%\end{table}

\paragraph{ Experiment 3A}
We repeat the experiment with a larger instance:  $ n=4500 $ with clusters $V_1,\dots,V_6$ of sizes $3200,800,200,200,50,50$, and $ p'=0.8,q'=0.2 $. Results are shown in Table \ref{resultstab}. Note that we recover the smallest clusters, whose size is below 
$ \sqrt{n} $.
%
%\begin{table}[t]
%\caption{Results for experiment 3A ($ p'=0.7, q'=0.3, n=4400$)}
%\label{tab:toy3big}
%\vskip 0.15in
%\begin{center}
%\begin{small}
%\begin{sc}
%\begin{tabular}{ccccr}
%\hline
%\abovespace\belowspace
%Step & $ \rho $ & $\#$ nodes left & Clusters recovered \\
%\hline
%\abovespace
%1    & 0.15 & 4400 & $V_1$ \\
%2    & 0.2 & 1200  & $V_2$\\
%\belowspace
%3    & 0.325 & 400  & $V_3$, $V_4$ \\
%\hline
%\end{tabular}
%\end{sc}
%\end{small}
%\end{center}
%\vskip -0.1in
%\end{table}

\paragraph{Experiment 4 (Mid-Size Clusters)}

Our current theoretical results do not say anything about the mid-size clusters -- those with sizes between $ \ellsmall $ and $ \ellbig $.
 It is interesting to study the behavior of (CP1) in the presence of mid-size clusters. 
 We generated an instance with $ n=750 $, $ \{ |V_1|, |V_2|, |V_3|, |V_4| \} = \{500,150,70,30\} $, $ p=0.8, q=0.2 $, and $ \rho=0.12 $. 
 We then solved (CP1) with a fixed $ \kappa = 1 $. 
 The low-rank part $ \hat{K} $ of the solution is shown in Fig. \ref{midsizefig}. 
 The large cluster $ V_1 $ is completely recovered in $ \hat{K} $, while the small clusters $ V_3 $ and $ V_4 $ are entirely ignored. 
 The mid-size cluster $ V_2 $, however, exhibits a pattern we find difficult to characterize.
 This shows that the polylog gap in our theorems is a real phenomenon and not an artifact of our proof technique. 
Nevertheless, the large cluster appears clean, and might allow recovery using a  simple combinatorial procedure.  %recoverable from the picture.  %this $ \hat{K} $. 
 If this is true in general, it might not be necessary to search for a gap free of cluster sizes.
% as in Algorithm \ref{algrecoverbigfull}; 
 Perhaps for any  $ \kappa $, (CP1) identifies all large clusters above $ \ellbig $ after a possible simple mid-size cleanup
procedure.  Understanding this phenomenon and its algorithmic implications is of much interest.
%without being interfered by the mid-size ones. 
% Characterizing this observation formally is of much interest.

\begin{figure}[h!]
\center
\scalebox{0.40}{\includegraphics{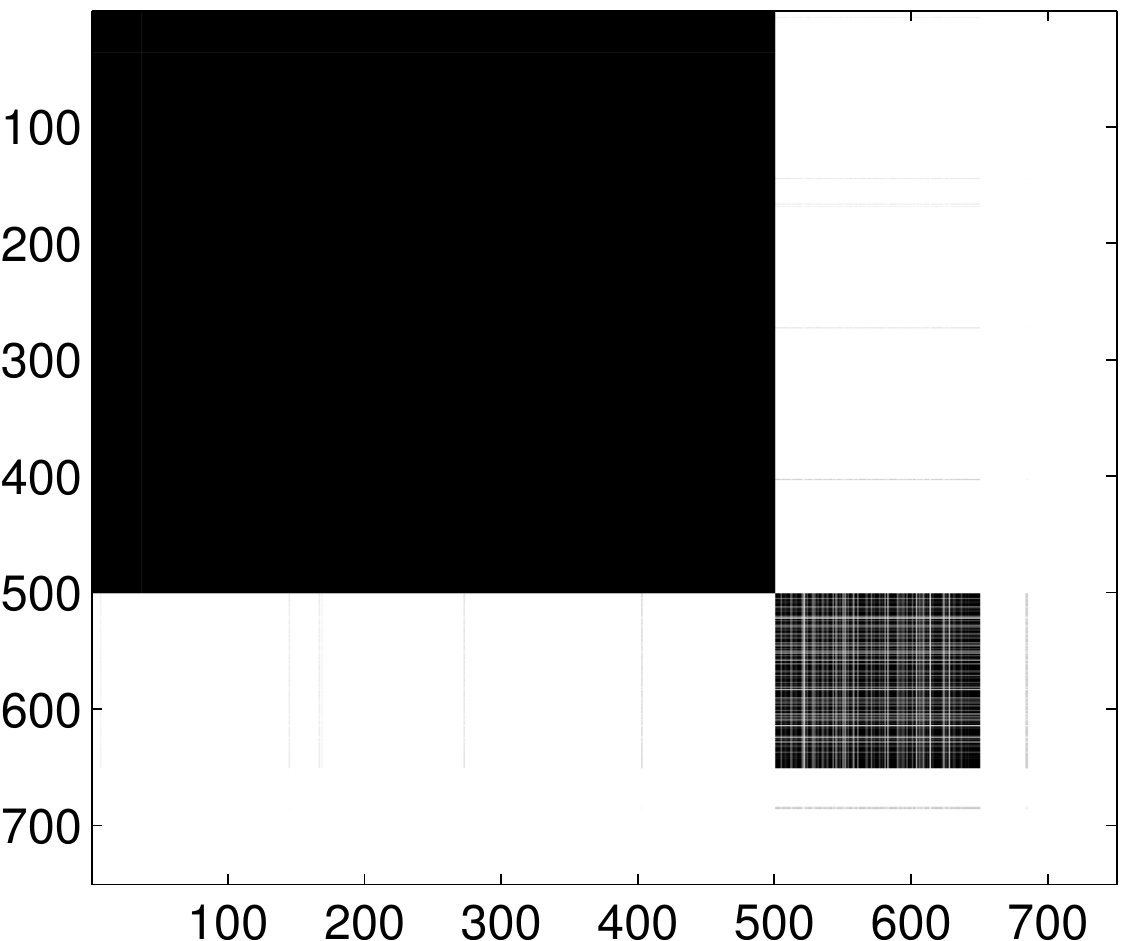}} 
\caption{The solution to (CP1) with mid-size clusters. }
\label{midsizefig}
\end{figure}

% !Tex root = icml_smallbigcluster.tex        
% please don't remove the above - I need it for texworks editor  -- Nir

%\section{conclusion}
%We considered graph clustering with the presence of small clusters. Under the classical planted partition setup, %we show that the trace norm based matrix decomposition method can recover all large clusters and ignore the %small clusters. We further devise an iterative algorithm that can recover even small clusters using a peeling %strategy, i.e., repeated find the easiest (largest) cluster and reduce the problem size, untill all clusters are %recovered. This makes it possible to recover clusters of logarithmic size, a task in general impossible %otherwise. These results can be extended to the partial observation case. This faciliate an active sampling %scheme that use significant fewer samples than the uniform sample scheme to correctly cluster the graph.
%%%%%%%%%%%%%%%%%%%%%

\section{Discussion}
%Before concluding this paper, we offer some discussions on directions of future research:
An immediate future research is to better understand the ``mid-size crisis''. Our current results say nothing 
about clusters that are neither big nor small, falling in the interval $(\ellsmall, \ellbig)$.  
Our numerical experiments confirm that the mid-size phenomenon is real:
they are neither completely recovered 
nor entirely ignored by the optimal $\hat{K}$. The part of $\hat{K}$ restricted to these clusters does not seem to have an
obvious pattern.% characterizable pattern %
%we can easily characterize. % hence the solution $\hat{K}$ is not pre-determined.
Proving whether we can still efficiently recover large clusters in the presence of mid-size clusters is an interesting
open problem.
% On the other hand, our simulation shows even 
%in the presence mid-sized clusters, large cluster is still correctly recovered by $\hat{K}$. To formally prove 
%this observation appears to be a challenging task mainly because $\hat{K}$ is not pre-determined.

Our study was mainly theoretical, focusing on the planted partition model. As such, our experiments focused
on confirming  the theoretical findings with data generated  exactly according to the distribution we could provide
provable guarantees for.
%focused 
%on data generated accordingly. 
It would be interesting to apply the presented methodology to real applications, 
particularly big data sets merged from web application and social networks. 

Another interesting direction is extending the ``peeling strategy'' to other high-dimensional learning problems. 
This requires understanding when such a strategy may work. One intuitive explanation of the small cluster barrier
encountered in previous work
% that a small cluster is hard to 
%recover via the original mixed norm approach 
is \emph{ambiguity}~--~when viewing the input at the ``big cluster resolution'', a small cluster  is both 
a low-rank matrix and a sparse matrix. Only when ``zooming in'' (after recovering big clusters), small clusters
patterns emerge.
 There are other formulations with similar property. For example, in \citet{XuCaramanisSanghaiv12-OP}, the authors propose to decompose a matrix into the sum of a low rank one and a column 
sparse one to solve an outlier-resistant PCA task. Notice that a column sparse matrix is also a low rank matrix. We hope 
the ``peeling strategy'' may also help with that problem.

%\small
\bibliographystyle{plainnat}
\bibliography{yudong,related,correlation_clustering}

\begin{thebibliography}{30}
\providecommand{\natexlab}[1]{#1}
\providecommand{\url}[1]{\texttt{#1}}
\expandafter\ifx\csname urlstyle\endcsname\relax
  \providecommand{\doi}[1]{doi: #1}\else
  \providecommand{\doi}{doi: \begingroup \urlstyle{rm}\Url}\fi

\bibitem[Ailon et~al.(2008)Ailon, Charikar, and Newman]{Ailon:2008:AII}
N.~Ailon, M.~Charikar, and A.~Newman.
\newblock Aggregating inconsistent information: {Ranking} and clustering.
\newblock \emph{J. ACM}, 55\penalty0 (5):\penalty0 23:1--23:27, 2008.

\bibitem[Ailon et~al.(2012)Ailon, Begleiter, and Ezra]{AilonBE12}
N.~Ailon, R.~Begleiter, and E.~Ezra.
\newblock Active learning using smooth relative regret approximations with
  applications.
\newblock In \emph{COLT}, 2012.

\bibitem[Ames and Vavasis(2011)]{AMEVAV11}
B.~Ames and S.~Vavasis.
\newblock Nuclear norm minimization for the planted clique and biclique
  problems.
\newblock \emph{Mathematical Programming}, 129\penalty0 (1):\penalty0 69--89,
  2011.

\bibitem[Bansal et~al.(2004)Bansal, Blum, and Chawla]{BBC05}
N.~Bansal, A.~Blum, and S.~Chawla.
\newblock Correlation clustering.
\newblock \emph{Machine Learning}, 56:\penalty0 89--113, 2004.

\bibitem[Bollob{\'a}s and Scott(2004)]{bollobas2004maxcut}
B.~Bollob{\'a}s and AD~Scott.
\newblock Max cut for random graphs with a planted partition.
\newblock \emph{Combinatorics, Prob. and Comp.}, 13\penalty0 (4-5):\penalty0
  451--474, 2004.

\bibitem[Boppana(1987)]{boppana1987eigenvalues}
R.B. Boppana.
\newblock Eigenvalues and graph bisection: An average-case analysis.
\newblock In \emph{FOCS}, pages 280--285, 1987.

\bibitem[Cand{\`{e}}s et~al.(2011)Cand{\`{e}}s, Li, Ma, and
  Wright]{candeswrightma}
E.~Cand{\`{e}}s, X.~Li, Y.~Ma, and J.~Wright.
\newblock Robust principal component analysis?
\newblock \emph{J. ACM}, 58:\penalty0 1--37, 2011.

\bibitem[Carson and Impagliazzo(2001)]{carson2001planted}
T.~Carson and R.~Impagliazzo.
\newblock Hill-climbing finds random planted bisections.
\newblock In \emph{SODA}, 2001.

\bibitem[Chandrasekaran et~al.(2011)Chandrasekaran, Sanghavi, Parrilo, and
  Willsky]{cspw}
V.~Chandrasekaran, S.~Sanghavi, S.~Parrilo, and A.~Willsky.
\newblock Rank-sparsity incoherence for matrix decomposition.
\newblock \emph{SIAM J. on Optimization}, 21\penalty0 (2):\penalty0 572--596,
  2011.

\bibitem[Charikar et~al.(2005)Charikar, Guruswami, and Wirth]{CharikarGW05}
Moses Charikar, Venkatesan Guruswami, and Anthony Wirth.
\newblock Clustering with qualitative information.
\newblock \emph{J. Comput. Syst. Sci.}, 71\penalty0 (3):\penalty0 360--383,
  2005.

\bibitem[Chaudhuri et~al.(2012)Chaudhuri, Chung, and Tsiatas]{Chaudhuri}
K.~Chaudhuri, F.~Chung, and A.~Tsiatas.
\newblock Spectral clustering of graphs with general degrees in the extended
  planted partition model.
\newblock \emph{COLT}, 2012.

\bibitem[Chen et~al.(2012)Chen, Sanghavi, and Xu]{chen2012sparseclustering}
Y.~Chen, S.~Sanghavi, and H.~Xu.
\newblock Clustering sparse graphs.
\newblock \emph{{\rm In} NIPS. Available on arXiv:1210.3335}, 2012.

\bibitem[Condon and Karp(2001)]{CondonKarp}
A.~Condon and R.M. Karp.
\newblock Algorithms for graph partitioning on the planted partition model.
\newblock \emph{Random Structures and Algorithms}, 2001.

\bibitem[Demaine et~al.(2006)Demaine, Emanuel, Fiat, and Immorlica]{DEFI06}
E.~Demaine, D.~Emanuel, A.~Fiat, and N.~Immorlica.
\newblock Correlation clustering in general weighted graphs.
\newblock \emph{Theoretical Comp. Sci.}, 2006.

\bibitem[Eriksson et~al.(2011)Eriksson, Dasarathy, Singh, and Nowak]{Eriksson}
B.~Eriksson, G.~Dasarathy, A.~Singh, and R.~Nowak.
\newblock Active clustering: Robust and efficient hierarchical clustering using
  adaptively selected similarities.
\newblock \emph{arXiv:1102.3887}, 2011.

\bibitem[Ester et~al.(1995)Ester, Kriegel, and Xu]{db1}
M.~Ester, H.~Kriegel, and X.~Xu.
\newblock A database interface for clustering in large spatial databases.
\newblock In \emph{KDD}, 1995.

\bibitem[Giesen and Mitsche(2005)]{giesen2005manypartition}
J.~Giesen and D.~Mitsche.
\newblock Reconstructing many partitions using spectral techniques.
\newblock In \emph{Fundamentals of Computation Theory}, pages 433--444, 2005.

\bibitem[Giotis and Guruswami(2006)]{GiotisGuruswami07}
Ioannis Giotis and Venkatesan Guruswami.
\newblock Correlation clustering with a fixed number of clusters.
\newblock \emph{Theory of Computing}, 2\penalty0 (1):\penalty0 249--266, 2006.

\bibitem[Jalali et~al.(2011)Jalali, Chen, Sanghavi, and
  Xu]{Jalali2011clustering}
A.~Jalali, Y.~Chen, S.~Sanghavi, and H.~Xu.
\newblock Clustering partially observed graphs via convex optimization.
\newblock \emph{{\rm In} ICML. Available on arXiv:1104.4803}, 2011.

\bibitem[Krishnamurthy et~al.(2012)Krishnamurthy, Balakrishnan, Xu, and
  Singh]{krishnamurthy2012hierarchical}
A.~Krishnamurthy, S.~Balakrishnan, M.~Xu, and A.~Singh.
\newblock Efficient active algorithms for hierarchical clustering.
\newblock \emph{arXiv:1206.4672}, 2012.

\bibitem[Lin et~al.(2009)Lin, Chen, Wu, and Ma]{LinALM}
Z.~Lin, M.~Chen, L.~Wu, and Y.~Ma.
\newblock {The Augmented Lagrange Multiplier Method for Exact Recovery of
  Corrupted Low-Rank Matrices}.
\newblock \emph{UIUC Technical Report UILU-ENG-09-2215}, 2009.

\bibitem[McSherry(2001)]{mcsherry2001spectralpartitioning}
F.~McSherry.
\newblock Spectral partitioning of random graphs.
\newblock In \emph{FOCS}, pages 529--537, 2001.

\bibitem[Mishra et~al.(2007)Mishra, R.~Schreiber, and Tarjan]{Social1}
N.~Mishra, I.~Stanton R.~Schreiber, and R.~E. Tarjan.
\newblock Clustering social networks.
\newblock \emph{Algorithms and Models for Web-Graph, Springer}, 2007.

\bibitem[Oymak and Hassibi(2011)]{OYMHAS11}
S.~Oymak and B.~Hassibi.
\newblock Finding dense clusters via low rank + sparse decomposition.
\newblock arXiv:1104.5186v1, 2011.

\bibitem[Rohe et~al.(2011)Rohe, Chatterjee, and Yu]{Rohe10}
K.~Rohe, S.~Chatterjee, and B.~Yu.
\newblock Spectral clustering and the high-dimensional stochastic block model.
\newblock \emph{Ann. of Stat.}, 39:\penalty0 1878--1915, 2011.

\bibitem[Shamir and Tishby(2011)]{Shamir11budget}
O.~Shamir and N.~Tishby.
\newblock {Spectral Clustering on a Budget}.
\newblock In \emph{AISTATS}, 2011.

\bibitem[Shamir and Tsur(2007)]{shamir2007improved}
R.~Shamir and D.~Tsur.
\newblock Improved algorithms for the random cluster graph model.
\newblock \emph{Random Struct. \& Alg.}, 31\penalty0 (4):\penalty0 418--449,
  2007.

\bibitem[Voevodski et~al.(2012)Voevodski, Balcan, R{\"{o}}glin, Teng, and
  Xia]{VBRTX-activeclustering12}
K.~Voevodski, M.~Balcan, H.~R{\"{o}}glin, S.~Teng, and Y.~Xia.
\newblock Active clustering of biological sequences.
\newblock \emph{JMLR}, 13:\penalty0 203--225, 2012.

\bibitem[Xu et~al.(2012)Xu, Caramanis, and Sanghavi]{XuCaramanisSanghaiv12-OP}
H.~Xu, C.~Caramanis, and S.~Sanghavi.
\newblock Robust {PCA} via outlier pursuit.
\newblock \emph{IEEE Transactions on Information Theory}, 58\penalty0
  (5):\penalty0 3047--3064, 2012.

\bibitem[Yahoo!-Inc(2009)]{Yaho}
Yahoo!-Inc.
\newblock Graph partitioning.
\newblock {http://research.yahoo.com/project/2368}, 2009.

\end{thebibliography}

%\bibliographystyle{icml2013}

%\input{appendix}

%%%%%%%%%%%% this is instead of the supplement material
\appendix% !Tex root = icml_supplement_smallbigcluster.tex        
% please don't remove the above - I need it for texworks editor  -- Nir

\section{Notation and Conventions}

We  use the following notation and conventions throughout the supplement.
For a real $n\times n$ matrix $M$, we use the unadorned norm $\|M\|$ to denote its spectral norm.
The notation $\|M\|_F$ refers to the Frobenius norm,  $\|M\|_1$ is  $\sum_{i,j}|M(i,j)|$ and $\|M\|_\infty$
is $\max_{ij} |M(i,j)|$.

We will also study operators on the space of matrices.    To distinguish them from the matrices studied in this work, we will simply
call these objects ``operators'', and  will denote them using a calligraphic font, e.g. $\P$.
The norm $\|\P\|$ of an operator is defined as 
$$ \|\P\| = \sup_{M: \|M\|_F=1} \|\P M\|_F\ ,$$
where the supremum is over matrices $M$.

For a fixed, real   $n\times n$ matrix $M$, we define the matrix linear subspace $T(M)$ as follows:

$$ T(M) := \{YM + MX:  X,Y\in \R^{n\times n}\}\ .$$
In words, this subspace is the set of matrices spanned by matrices each row of which is in the row space of $M$,  and matrices
each column of which is in the column space of $M$.

For any given subspace of matrices $S\subseteq \R^{n\times n}$, we let $\P_S$ denote the orthogonal projection onto $S$
with respect to the the inner product $\langle X,Y\rangle = \sum_{i,j=1}^n X(i,j)Y(i,j) = \trace X^t Y$.  This means that
for any matrix $M$,
$$ \P_S M = \operatorname{argmin}_{X\in S} \|M - X\|_F\ .$$

For a matrix $M$, we let $\Gamma(M)$ denote the set of matrices supported on a subset of the support of $M$.
Note that for any matrix $X$,
$$ (\P_{\Gamma(X)} M)(i,j) = \begin{cases} M(i,j) & X(i,j)\neq 0 \\ 0 & \mbox{otherwise} \end{cases}. $$

It is a well known fact that  $\P_{T(X)}$ is given as follows:
$$ \P_{T(X)} M = P_{C(X)} M + M P_{R(X)} - P_{C(X)} M P_{R(X)}\ ,$$
where $P_{C(X)}$ is projection (of a vector) onto the column space of $X$, and $P_{R(X)}$ is projection onto the row space of $X$.

For a subspace  $S\subseteq \R^{n\times n}$  we let $S^\perp$ denote the orthogonal subspace with respect to $\langle \cdot,\cdot\rangle$:
 $$ S^\perp = \{X\in \R^{n\times n}: \langle X, Y\rangle = 0 \ \forall Y\in S\}\ .$$  Slightly abusing notation, we will use the set complement operator $(\cdot)^c$ to formally define $\Gamma(M)^c$ to be $\Gamma(M)^\perp$ (by this we are stressing that the space $\Gamma(M)^\perp$ is given as  $\Gamma(M')$
where $M'$ is any matrix such that $M$ and $M'$ have complementary supports).
Note that
$ \P_{T(X)^\perp}M = M - \P_{T(X)}M =  (I-P_{C(X)})M(I-P_{R(X)})\ .$

For a matrix $M$, $\sign M$ is defined as the matrix satisfying:
$$ (\sign M)(i,j) = \begin{cases} 1 & M(i,j)> 0 \\ -1 & M(i,j) < 0 \\ 0 & \mbox{otherwise} \end{cases}\ .$$
% !Tex root = icml_supplement_smallbigcluster.tex
% please don't remove the above - I need it for texworks editor  -- Nir

\section{Proof of Theorem~\ref{thm:main}}

The proof is based on \cite{chen2012sparseclustering}.
We prove it for $\kappa=1$.  The adjustment for $\kappa>1$ is done using a padding argument, presented at the end of the proof.

Additional notation: 
\begin{enumerate}
\item 
We let $\Vsmall\subseteq V$ denote the set of of elements $i$ such that $n_{\langle i \rangle} \leq \ellsmall$.
(We remind the reader that $n_{\langle i\rangle} = |V_{\langle i\rangle}|$.)
\item We remind the reader that the projection $\Pbig$ is defined as follows:
$$ (\Pbig M)(i,j) = \begin{cases} M(i,j) & \max\{n_{\langle i\rangle}, n_{\langle j\rangle}\} \geq \ellbig \\ 0 & \mbox{otherwise} \end{cases}\ .
$$ 
\item 
The projection $\Psmall$ is defined as follows:
$$ (\Psmall  M)(i,j) = \begin{cases} M(i,j) & \max\{n_{\langle i\rangle}, n_{\langle j\rangle}\} \leq  \ellsmall \\ 0 & \mbox{otherwise} \end{cases}\ .
$$ 
In words, $\Psmall$ projects onto the set of matrices supported on $\Vsmall\times \Vsmall$.
Note that by the theorem assumption, $\Pbig + \Psmall = \idop$ (equivalently, $\Pbig$ projects onto the set of matrices
supported on $(V\times V)\setminus (\Vsmall \times \Vsmall)$).
\item Define the set 
\[
\DD=\left\{ \Delta\in\mathbb{R}^{n\times n}\vert\Delta_{ij}\le0,\forall i\sim j,(i,j)\notin\Vsmall\times\Vsmall;0\le\Delta_{ij},\forall i\not\sim j,(i,j)\notin\Vsmall\times\Vsmall\right\} ,
\]
which contains all feasible deviation from $\hat{K}$. 
\item For simplicity we write $T:=T(\hat{K})$ and $\Gamma:=\Gamma(\hat{B}), \Gamma^c := \Gamma(\hat B)^c = \Gamma^\perp$.
\end{enumerate}
We will make use of the following:
\begin{enumerate}
\item $\sign(\hat{B})=\hat{B}$.
\item $\idop=\PP_{\Gamma}+\PP_{\Gamma^{c}}=\PP_{\Gamma(A)}+\PP_{\Gamma(A)^{c}}$.
\item $\Pbig,\Psmall,\PP_{\Gamma},\PP_{\Gamma^{c}},\PP_{\Gamma(A)},$ and
$\PP_{\Gamma(A)^{c}}$ commute with each other.
\end{enumerate}

\subsection{Approximate Dual Certificate Condition}
\begin{prop}
\label{prop:opt_cond}$(\hat{K}$, $\hat{B}$) is the unique optimal
solution to (CP) if there exists a matrix $Q\in\mbox{\ensuremath{\mathbb{R}}}^{n\times n}$
and a positive number $\epsilon$ satisfying:
\begin{enumerate}
\item $\left\Vert Q\right\Vert <1$
\item $\left\Vert \PP_{T}(Q)\right\Vert _{\infty}\le\frac{\epsilon}{2}\min\left\{ c_{1},c_{2}\right\} $
\item $\forall\Delta\in\DD$:

\begin{enumerate}
\item $\left\langle UU^{\top}+Q,\PP_{\Gamma(A)}\PP_{\Gamma}\Pbig\Delta\right\rangle =(1+\epsilon)c_{1}\left\Vert \PP_{\Gamma(A)}P_{\Gamma}\Pbig\Delta\right\Vert _{1}$
\item $\left\langle UU^{\top}+Q,\PP_{\Gamma(A)^{c}}\PP_{\Gamma}\Pbig\Delta\right\rangle =(1+\epsilon)c_{2}\left\Vert \PP_{\Gamma(A)^{c}}P_{\Gamma}\Pbig\Delta\right\Vert _{1}$
\end{enumerate}
\item $\forall\Delta\in\DD$:

\begin{enumerate}
\item $\left\langle UU^{\top}+Q,\PP_{\Gamma(A)}\PP_{\Gamma^{c}}\PP_{\sharp}\Delta\right\rangle \ge-(1-\epsilon)c_{1}\left\Vert \PP_{\Gamma(A)}P_{\Gamma^{c}}\Pbig\Delta\right\Vert _{1}$
\item $\left\langle UU^{\top}+Q,\PP_{\Gamma(A)^{c}}\PP_{\Gamma^{c}}\PP_{\sharp}\Delta\right\rangle \ge-(1-\epsilon)c_{2}\left\Vert \PP_{\Gamma(A)^{c}}P_{\Gamma^{c}}\Pbig\Delta\right\Vert _{1}$
\end{enumerate}
\item $\PP_{\Gamma}\Psmall(UU^{\top}+Q)=c_{1}\Psmall\hat{B}$
\item $\left\Vert \PP_{\Gamma^{c}}\Psmall(UU^{\top}+Q)\right\Vert _{\infty}\le c_{2}$
\end{enumerate}
\end{prop}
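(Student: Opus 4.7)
The strategy is the familiar convex-recovery dual-certificate argument, adapted to the two-scale structure (big vs.\ small clusters). For any $\Delta\in\DD$, let $\operatorname{Diff}$ denote (CP1)'s objective at $(\hat K+\Delta,\hat B-\Delta)$ minus its value at $(\hat K,\hat B)$. Lower-bound $\operatorname{Diff}$ by the standard subgradient inequalities: for $\|\cdot\|_*$ take the $T^\perp$-part of the subgradient as the spectral dual of $\PP_{T^\perp}\Delta$ (extracting $\|\PP_{T^\perp}\Delta\|_*$), and for each $\ell_1$ norm use $\sign(\hat B)=\hat B$ on $\Gamma$ together with $\sign(\Delta)$ on $\Gamma^c$. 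Setting $G^\star:=c_1\PP_{\Gamma(A)}\hat B+c_2\PP_{\Gamma(A)^c}\hat B$ this yields
\[
\operatorname{Diff}\ge\langle UU^\top-G^\star,\Delta\rangle+\|\PP_{T^\perp}\Delta\|_*+c_1\|\PP_{\Gamma(A)}\PP_{\Gamma^c}\Delta\|_1+c_2\|\PP_{\Gamma(A)^c}\PP_{\Gamma^c}\Delta\|_1.
\]

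Next I would insert $Q$ via the identity $\langle UU^\top,\Delta\rangle=\langle UU^\top+Q,\Delta\rangle-\langle\PP_T Q,\Delta\rangle-\langle\PP_{T^\perp}Q,\PP_{T^\perp}\Delta\rangle$. Condition~1 gives $|\langle\PP_{T^\perp}Q,\PP_{T^\perp}\Delta\rangle|\le\|Q\|\cdot\|\PP_{T^\perp}\Delta\|_*$, preserving the positive margin $(1-\|Q\|)\|\PP_{T^\perp}\Delta\|_*$. The critical structural observation I would use is that $\PP_T Q$ is supported on the range of $\Pbig$: since $\hat K=\Pbig K^*$ vanishes on the small-small block, both $P_{C(\hat K)}$ and $P_{R(\hat K)}$ annihilate the small-cluster coordinates, so $(\PP_T Q)(i,j)=0$ whenever both $i,j$ lie in small clusters. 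Hence $\langle\PP_T Q,\Delta\rangle=\langle\PP_T Q,\Pbig\Delta\rangle$, and condition~2 bounds this absolutely by $\tfrac{\epsilon}{2}\min(c_1,c_2)\|\Pbig\Delta\|_1$. Now split the residual inner product along $\Delta=\Pbig\Delta+\Psmall\Delta$. On the big piece, the sign constraints defining $\DD$ make $\sign(\Delta)$ agree with $\sign(\hat B)$ on $\Gamma\cap\Pbig$, so $\langle G^\star,\PP_\Gamma\Pbig\Delta\rangle=c_1\|\PP_{\Gamma(A)}\PP_\Gamma\Pbig\Delta\|_1+c_2\|\PP_{\Gamma(A)^c}\PP_\Gamma\Pbig\Delta\|_1$; summing conditions 3.a--3.b with 4.a--4.b and telescoping with the $c_1,c_2$ margins from the subgradient step delivers a lower bound of $\epsilon\min(c_1,c_2)\|\Pbig\Delta\|_1$. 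On the small piece, $\hat K\equiv 0$ on small-small forces $\hat B=A$ there and hence $\Gamma\cap\Psmall=\Gamma(A)\cap\Psmall$, so in particular $\PP_{\Gamma(A)}\PP_{\Gamma^c}\Psmall\equiv 0$; combined with $UU^\top\equiv 0$ on small-small, condition~5 collapses the $\PP_\Gamma\Psmall$ contribution to zero, while condition~6 bounds the remaining $\PP_{\Gamma(A)^c}\PP_{\Gamma^c}\Psmall$ contribution absolutely by $c_2\|\PP_{\Gamma(A)^c}\PP_{\Gamma^c}\Psmall\Delta\|_1$, which cancels exactly against the matching $c_2$ margin.

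Assembling everything gives $\operatorname{Diff}\ge\tfrac{\epsilon}{2}\min(c_1,c_2)\|\Pbig\Delta\|_1+(1-\|Q\|)\|\PP_{T^\perp}\Delta\|_*$. If $\Pbig\Delta\neq 0$ the first term is strictly positive; otherwise $\Delta=\Psmall\Delta$, and since any matrix supported on the small-small block lies in $T^\perp$, $\|\PP_{T^\perp}\Delta\|_*=\|\Psmall\Delta\|_*>0$ whenever $\Delta\neq 0$, so the second term is strictly positive by condition~1. This establishes strict optimality and hence uniqueness of $(\hat K,\hat B)$. The main obstacle is the projection bookkeeping: both the support inclusion $\supp(\PP_T Q)\subseteq\range(\Pbig)$ (so that condition~2's error is confined to $\Pbig\Delta$, where the $\epsilon$-margin can absorb it) and the small-block identity $\Gamma\cap\Psmall=\Gamma(A)\cap\Psmall$ (so that the $\Psmall$-contribution closes off cleanly without needing its own $\epsilon$-margin) must be verified carefully; once they are in hand, all the $\ell_1$ pieces telescope as outlined and the dual-certificate argument goes through.
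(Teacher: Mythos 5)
Your proof is correct and follows essentially the same dual-certificate strategy as the paper's: subgradient inequalities for the trace and $\ell_1$ norms, insertion of $Q$, a split of $\Delta$ into its $\Pbig$ and $\Psmall$ parts, and invocation of conditions (1)--(6) to produce a strictly positive margin $\tfrac{\epsilon}{2}\min(c_1,c_2)\|\Pbig\Delta\|_1 + (1-\|Q\|)\|\PP_{T^\perp}\Delta\|_*$. The only cosmetic difference is that you take the $T^\perp$-part of the trace-norm subgradient to be a spectral dual of the full $\PP_{T^\perp}\Delta$ and then bound $\langle\PP_{T^\perp}Q,\PP_{T^\perp}\Delta\rangle$ by $\|Q\|\cdot\|\PP_{T^\perp}\Delta\|_*$, whereas the paper builds the subgradient $UU^\top+\PP_{T^\perp}Q+(1-b)G_\Delta$ with $G_\Delta\in T^\perp\cap\range(\Psmall)$ dual only to $\Psmall\Delta$; the two bookkeeping choices are interchangeable and yield the same conclusion.
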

\begin{proof}
Consider any feasible solution to (CP1) $(\hat{K}+\Delta,\hat{B}-\Delta)$;
we know $\Delta\in\DD$ due to the inequality constraints in (CP1).
We will show that this solution will have strictly higher objective
value than $\left\langle \hat{K},\hat{B}\right\rangle $ if $\Delta\neq0$.

For this $\Delta$, let $G_{\Delta}$ be a matrix in $T^{\bot}\cap\mbox{Range}(\Psmall)$
satisfying $\left\Vert G\right\Vert =1$ and $\left\langle G_{\Delta},\Delta\right\rangle =\left\Vert \PP_{T^{\bot}}\Psmall\Delta\right\Vert _{*}$
; such a matrix always exists because $\mbox{Range}\Psmall\subseteq T^{\bot}$.
Suppose $\left\Vert Q\right\Vert =b$. Clearly, $\PP_{T^{\bot}}Q+(1-b)G\in T^{\bot}$
and, due to desideratum 1, we have $\left\Vert \PP_{T^{\bot}}Q+(1-b)G_{\Delta}\right\Vert \le\left\Vert Q\right\Vert +(1-b)\left\Vert G_{\Delta}\right\Vert =b+(1-b)=1$.
Therefore, $UU^{\top}+\PP_{T^{\bot}}Q+(1-b)G_{\Delta}$ is a subgradient
of $f(K)=\left\Vert K\right\Vert _{*}$ at $K=\hat{K}$. On the other
hand, define the matrix $F_{\Delta}=-\PP_{\Gamma^{c}}\mbox{sgn}(\Delta)$.
We have $F_{\Delta}\in\Gamma^{c}$ and $\left\Vert F_{\Delta}\right\Vert _{\infty}\le1$.
Therefore, $\PP_{\Gamma(A)}(\hat{B}+F_{\Delta})$ is a subgradient
of $g_{1}(B)=\left\Vert \PP_{\Gamma(A)}B\right\Vert _{1}$ at $B=\hat{B}$,
and $\PP_{\Gamma(A)^{c}}(\hat{B}+F_{\Delta})$ is a subgradient of
$g_{2}(B)=\left\Vert \PP_{\Gamma(A)^{c}}B\right\Vert _{1}$ at $B=\hat{B}$.
Using these three subgradients, the difference in the objective value
can be bounded as follows:
\begin{eqnarray*}
 &  & d(\Delta)\\
 & \triangleq & \left\Vert \hat{K}+\Delta\right\Vert _{*}+c_{1}\left\Vert \PP_{\Gamma(A)}(\hat{B}-\Delta)\right\Vert _{1}+c_{2}\left\Vert \PP_{\Gamma(A)^{c}}(\hat{B}-\Delta)\right\Vert _{1}-\left\Vert \hat{K}\right\Vert _{*}-c_{1}\left\Vert \PP_{\Gamma(A)}\hat{B}\right\Vert _{1}-c_{2}\left\Vert \PP_{\Gamma(A)^{c}}\hat{B}\right\Vert _{1}\\
 & \ge & \left\langle UU^{\top}+\PP_{T^{\bot}}Q+(1-b)G_{\Delta},\Delta\right\rangle +c_{1}\left\langle \PP_{\Gamma(A)}(\hat{B}+F_{\Delta}),-\Delta\right\rangle +c_{2}\left\langle \PP_{\Gamma(A)^{c}}(\hat{B}+F_{\Delta}),-\Delta\right\rangle \\
 & = & (1-b)\left\Vert \PP_{T^{\bot}}\Psmall\Delta\right\Vert _{*}+\left\langle UU^{\top}+\PP_{T^{\bot}}Q,\Delta\right\rangle +c_{1}\left\langle \PP_{\Gamma(A)}\hat{B},-\Delta\right\rangle +c_{2}\left\langle \PP_{\Gamma(A)^{c}}\hat{B},-\Delta\right\rangle \\
 &  & +c_{1}\left\langle \PP_{\Gamma(A)}F_{\Delta},-\Delta\right\rangle +c_{2}\left\langle \PP_{\Gamma(A)^{c}}F_{\Delta},-\Delta\right\rangle \\
 & = & (1-b)\left\Vert \PP_{T^{\bot}}\Psmall\Delta\right\Vert _{*}+\left\langle UU^{\top}+\PP_{T^{\bot}}Q,\Delta\right\rangle +c_{1}\left\langle \Psmall\PP_{\Gamma(A)}\hat{B},-\Delta\right\rangle +c_{2}\left\langle \Psmall\PP_{\Gamma(A)^{c}}\hat{B},-\Delta\right\rangle \\
 &  & +c_{1}\left\langle \Pbig\PP_{\Gamma(A)}\hat{B},-\Delta\right\rangle +c_{2}\left\langle \Pbig\PP_{\Gamma(A)^{c}}\hat{B},-\Delta\right\rangle +c_{1}\left\langle \PP_{\Gamma(A)}F_{\Delta},-\Delta\right\rangle +c_{2}\left\langle \PP_{\Gamma(A)^{c}}F_{\Delta},-\Delta\right\rangle .
\end{eqnarray*}
The last six terms of the last RHS satisfy:
\begin{enumerate}
\item $c_{1}\left\langle \Psmall\PP_{\Gamma(A)}\hat{B},-\Delta\right\rangle +c_{2}\left\langle \Psmall\PP_{\Gamma(A)^{c}}\hat{B},-\Delta\right\rangle =c_{1}\left\langle \Psmall\hat{B},-\Delta\right\rangle $,
because $\Psmall\hat{B}\in\Gamma(A)$.
\item $\left\langle \Pbig\PP_{\Gamma(A)}\hat{B},-\Delta\right\rangle \ge-\left\Vert \Pbig\PP_{\Gamma(A)}\PP_{\Gamma}\Delta\right\Vert _{1}$
and $\left\langle \Pbig\PP_{\Gamma(A)^{c}}\hat{B},\Delta\right\rangle \ge-\left\Vert \Pbig\PP_{\Gamma(A)^{c}}\PP_{\Gamma}\Delta\right\Vert _{1}$,
because $\hat{B}\in\Gamma$ and$\left\Vert \hat{B}\right\Vert _{\infty}\le1$.
\item $\left\langle \PP_{\Gamma(A)}F_{\Delta},-\Delta\right\rangle =\left\Vert \PP_{\Gamma(A)}\PP_{\Gamma^{c}}\Delta\right\Vert _{1}$
and $\left\langle \PP_{\Gamma(A)^{c}}F_{\Delta},-\Delta\right\rangle =\left\Vert \PP_{\Gamma(A)^{c}}\PP_{\Gamma^{c}}\Delta\right\Vert _{1}$,
due to the definition of $F$. 
\end{enumerate}
It follows that
\begin{eqnarray}
d(\Delta) & \ge & (1-b)\left\Vert \PP_{T^{\bot}}\Psmall\Delta\right\Vert _{*}+\left\langle UU^{\top}+\PP_{T^{\bot}}Q,\Delta\right\rangle +c_{1}\left\langle \Psmall\hat{B},-\Delta\right\rangle -c_{1}\left\Vert \Pbig\PP_{\Gamma(A)}\PP_{\Gamma}\Delta\right\Vert _{1}\nonumber \\
 &  & -c_{2}\left\Vert \Pbig\PP_{\Gamma(A)^{c}}\PP_{\Gamma}\Delta\right\Vert _{1}+c_{1}\left\Vert \PP_{\Gamma(A)}\PP_{\Gamma^{c}}\Delta\right\Vert _{1}+c_{2}\left\Vert \PP_{\Gamma(A)^{c}}\PP_{\Gamma^{c}}\Delta\right\Vert _{1}.\label{eq:1}
\end{eqnarray}
Consider the second term in the last RHS, which equals $\left\langle UU^{\top}+\PP_{T^{\bot}}Q,\Delta\right\rangle =\left\langle UU^{\top}+Q,\Pbig\Delta\right\rangle +\left\langle UU^{\top}+Q,\Psmall\Delta\right\rangle -\left\langle \PP_{T}Q,\Delta\right\rangle $.
We bound these three separately.

First term:
\begin{eqnarray*}
 &  & \left\langle UU^{\top}+Q,\Pbig\Delta\right\rangle \\
 & = & \left\langle UU^{\top}+Q,\left(\PP_{\Gamma(A)}\PP_{\Gamma}\Pbig+\PP_{\Gamma(A)^{c}}\PP_{\Gamma}\Pbig+\PP_{\Gamma(A)}\PP_{\Gamma^{c}}\Pbig+\PP_{\Gamma(A)^{c}}\PP_{\Gamma^{c}}\Pbig\right)\Delta\right\rangle \\
 & \ge & (1+\epsilon)c_{1}\left\Vert \PP_{\Gamma(A)}P_{\Gamma}\Pbig\Delta\right\Vert _{1}+(1+\epsilon)c_{2}\left\Vert \PP_{\Gamma(A)^{c}}P_{\Gamma}\Pbig\Delta\right\Vert _{1}-(1-\epsilon)c_{1}\left\Vert \PP_{\Gamma(A)}P_{\Gamma^{c}}\Pbig\Delta\right\Vert _{1}\\
 &  & -(1-\epsilon)c_{2}\left\Vert \PP_{\Gamma(A)^{c}}P_{\Gamma^{c}}\Pbig\Delta\right\Vert _{1}\\
 &  & \mbox{(Using properties 3 and 4)}
\end{eqnarray*}

Second term:
\begin{eqnarray*}
 &  & \left\langle UU^{\top}+Q,\Psmall\Delta\right\rangle \\
 & = & \left\langle \PP_{\Gamma}\Psmall(UU^{\top}+Q),\Delta\right\rangle +\left\langle \PP_{\Gamma^{c}}\Psmall(UU^{\top}+Q),\Delta\right\rangle \\
 & \ge & c_{1}\left\langle \Psmall\hat{B},\Delta\right\rangle -c_{2}\left\Vert \PP_{\Gamma^{c}}\Psmall\Delta\right\Vert _{1}\mbox{ (using properties 5 and 6)}\\
 & = & c_{1}\left\langle \Psmall\hat{B},\Delta\right\rangle -c_{2}\left\Vert \PP_{\Gamma(A)^{c}}\PP_{\Gamma^{c}}\Psmall\Delta\right\Vert _{1}\mbox{ (because \ensuremath{\PP_{\Gamma(A)^{c}}\PP_{\Gamma^{c}}\Psmall=\PP_{\Gamma^{c}}\Psmall})}
\end{eqnarray*}

Third term: Due to the block diagonal structure of the elements of
$T$, we have $\PP_{T}=\Pbig\PP_{T}$
\begin{eqnarray*}
 &  & \left\langle -\PP_{T}Q,\Delta\right\rangle \\
 & = & -\left\langle \PP_{T}Q,\Pbig\Delta\right\rangle \\
 & \ge & -\left\Vert \PP_{T}Q\right\Vert _{\infty}\left\Vert \Pbig\Delta\right\Vert _{1}\\
 & \ge & -\frac{\epsilon}{2}\min\left\{ c_{1},c_{2}\right\} \left\Vert \Pbig\Delta\right\Vert _{1}.
\end{eqnarray*}

Combining the above three bounds with Eq. (\ref{eq:1}), we obtain
\begin{eqnarray*}
 &  & d(\Delta)\\
 & \ge & (1-b)\left\Vert \PP_{T^{\bot}}\Psmall\Delta\right\Vert _{*}+\epsilon c_{1}\left\Vert \Pbig\PP_{\Gamma(A)}\PP_{\Gamma}\Delta\right\Vert _{1}+\epsilon c_{2}\left\Vert \Pbig\PP_{\Gamma(A)^{c}}\PP_{\Gamma}\Delta\right\Vert _{1}+\epsilon c_{1}\left\Vert \PP_{\Gamma(A)}\PP_{\Gamma^{c}}\Pbig\Delta\right\Vert _{1}\\
 &  & +\epsilon c_{2}\left\Vert \PP_{\Gamma(A)^{c}}\PP_{\Gamma^{c}}\Pbig\Delta\right\Vert _{1}+c_{1}\left\Vert \PP_{\Gamma(A)}\PP_{\Gamma^{c}}\Psmall\Delta\right\Vert _{1}-\frac{\epsilon}{2}\min\left\{ c_{1},c_{2}\right\} \left\Vert \Pbig\Delta\right\Vert _{1}\\
 & = & (1-b)\left\Vert \PP_{T^{\bot}}\Psmall\Delta\right\Vert _{*}+\epsilon c_{1}\left\Vert \Pbig\PP_{\Gamma(A)}\Delta\right\Vert _{1}+\epsilon c_{2}\left\Vert \Pbig\PP_{\Gamma(A)^{c}}\Delta\right\Vert _{1}-\frac{\epsilon}{2}\min\left\{ c_{1},c_{2}\right\} \left\Vert \Pbig\Delta\right\Vert _{1}\\
 &  & \mbox{ (note that \ensuremath{\PP_{\Gamma(A)}\PP_{\Gamma^{c}}\Psmall\Delta}=0)}\\
 & \ge & (1-b)\left\Vert \Psmall\Delta\right\Vert _{*}+\frac{\epsilon}{2}\min\left\{ c_{1},c_{2}\right\} \left\Vert \Pbig\Delta\right\Vert _{1},
\end{eqnarray*}
which is strictly greater than zero for $\Delta\neq0$.
\end{proof}

\subsection{Constructing $Q$.}

We construct a matrix $Q$ with the properties required by Proposition
\ref{prop:opt_cond}. Suppose we take $\epsilon=\frac{2\log^{2}n}{\ellbig}\sqrt{\frac{n}{t(1-t)}}$
and use the weights $c_{1}$ and $c_{2}$ given in Theorem \ref{thm:main}.
We specify $\Pbig Q$ and $\Psmall Q$ separately.

$\Pbig Q$ is given by $\Pbig Q=\Pbig Q_{1}+\Pbig Q_{2}+\Pbig Q_{3}$,
where for $(i,j)\notin\Vsmall\times\Vsmall$, 
\begin{eqnarray*}
\Pbig Q_{1}(i,j) & = & \begin{cases}
-\frac{1}{n_{c(i)}} & i\sim j,(i,j)\in\Gamma\\
\frac{1}{n_{c(i)}}\cdot\frac{1-p_{ij}}{p_{ij}} & i\sim j,(i,j)\in\Gamma^{c}\\
0 & i\not\sim j
\end{cases}\\
\Pbig Q_{2}(i,j) & = & \begin{cases}
-(1+\epsilon)c_{2} & i\sim j,(i,j)\in\Gamma\\
(1+\epsilon)c_{2}\frac{1-p_{ij}}{p_{ij}} & i\sim j,(i,j)\in\Gamma^{c}\\
0 & i\not\sim j
\end{cases}\\
\Pbig Q_{3}(i,j) & = & \begin{cases}
(1+\epsilon)c_{1} & i\not\sim j,(i,j)\in\Gamma\\
-(1+\epsilon)c_{1}\frac{q_{ij}}{1-q_{ij}} & i\not\sim j,(i,j)\in\Gamma^{c}\\
0 & i\sim,j
\end{cases}
\end{eqnarray*}
Note that these matrices have zero-mean entries.

$\Psmall Q$ as follows. For $(i,j)\in\Vsmall\times\Vsmall$,
\[
\Psmall Q=\begin{cases}
c_{1} & i\sim j,(i,j)\in\Gamma(A)\\
-c_{2} & i\sim j,(i,j)\in\Gamma(A)^{c}\\
c_{1} & i\not\sim j,(i,j)\in\Gamma(A)\\
c_{2}W(i,j) & i\not\sim j,(i,j)\in\Gamma(A)^{c}
\end{cases},
\]
where 
\[
W(i,j)=\begin{cases}
+1 & \mbox{with probability \ensuremath{\frac{t-q}{2t(1-q)}}}\\
-1 & \mbox{with remaining probability}
\end{cases}.
\]

\subsection{Validating $Q$}

Under the choice of $ t $ in Theorem \ref{thm:main}, we have $ \frac{1}{4}p \le t\le p $ and $\frac{1}{4} (1-q) \le 1-t\le 1-q $. 
Also under the second assumption of the theorem and $ p-q\le p(1-q) $, we have $ p(1-q) \gtrsim \frac{n \log^4 n}{\ellbig^2} \ge  \frac{ \log^4 n}{\ellbig}$.
We will make use of these facts frequently in the proof. 

It is easy to check that $\epsilon:=\frac{2\log^{2}n}{\ellbig}\sqrt{\frac{n}{t(1-t)}}<\frac{1}{2}$
under the assumption of Theorem \ref{thm:main}.

\textbf{Property 1): }

Note that $\left\Vert Q\right\Vert \le\left\Vert \Pbig Q_{\sim}\right\Vert +\left\Vert \Pbig Q_{\not\sim}\right\Vert +\left\Vert \Psmall Q_{\sim}\right\Vert +\left\Vert \Psmall Q_{\not\sim}\right\Vert $.
We show that all four terms are upper-bounded by $\frac{1}{4}$. 

(a) $\Psmall Q_{\sim}$ is a block diagonal matrix with each block having
size at most $\mbox{\ensuremath{\ellsmall}}$. Moreover, $\Psmall Q_{\sim}$ is the sum of
a deterministic matrix $ Q_{\sim,d} $ with all non-zero entries equal to $\frac{b_1}{\sqrt{n\log n}}\frac{p-t}{\sqrt{t(1-t)}}$
and a random matrix $ Q_{\sim,r} $ whose entries are i.i.d., bounded almost surely by $\max\{c_1,c_2\}  $  and have zero mean 
with variance $\frac{b_1^2}{n\log n}\cdot\frac{p(1-p)}{t(1-t)}$.
Therefore, we have $ \Vert Q_{\sim} \Vert \le \Vert Q_{\sim,r} \Vert + \Vert Q_{\sim,d}\Vert$, where w.h.p.
\begin{eqnarray*}
\left\Vert \Psmall Q_{\sim,d}\right\Vert & \le  &\ellsmall\frac{b_1}{\sqrt{n\log n}}\frac{p-t}{\sqrt{t(1-t)}}, \\
\left\Vert \Psmall Q_{\sim,r}\right\Vert & \le & 6\max\left\{\sqrt{\ellsmall \log n}\frac{b_1}{\sqrt{n\log n}}\sqrt{\frac{p(1-p)}{t(1-t)}}
 + \max\{c_1,c_2\}\log^2 n \right\};
\end{eqnarray*}
here in the second inequality we use Lemma \ref{lem:rand_matrix}. We conclude that $ \left\Vert \Psmall Q_{\sim}\right\Vert $  is bounded by $\frac{1}{4}$ 
as long as $\ellsmall \le \frac{\sqrt{t(1-t)n\log n}}{8b_1(p-t)}$ and $ \max\{c_1,c_2\} \le \frac{1}{48\log^2 n} $,
which holds under the assumption of Theorem \ref{thm:main}. 

(b) $\Psmall Q_{\not\sim}$ is a random matrix supported
on $\Vsmall\times\Vsmall$, whose entries are i.i.d., zero mean, bounded almost surely by $ \max\{c_1,c_2\} $, 
and have variance $\frac{b_1^2}{n\log n}\cdot\frac{t^{2}+q-2tq}{(1-t)t}$.
It follows from Lemma \ref{lem:rand_matrix} that
\[
\left\Vert \Psmall Q_{\not\sim}\right\Vert \le  6 \max\left\{\sqrt{\nsmall}\cdot\frac{b_1}{\sqrt{n\log n}}\sqrt{\frac{t^{2}+q-2tq}{(1-t)t}},
\max\{c_1,c_2\}\log^2 n \right\} 
\le\frac{1}{4}
\]
because $\nsmall\le n$ and $ \max\{c_1,c_2\} \le \frac{1}{48\log^2 n} $,
which holds under the assumption of the theorem. 

(c) Note that $\Pbig Q_{\sim}=\Pbig Q_{1}+\Pbig Q_{2}$. By construction
these two matrices are both block-diagonal, have i.i.d zero-mean entries which are bounded almost surely 
by $ B_\sim := \max\left\{\frac{1}{\ellbig p}, \frac{2c_2}{p} \right\} $
and have variance
bounded by $\sigma_{\sim}^{2}:=\max\left\{ \frac{1-p}{p\ellbig^{2}},\frac{4(1-p)}{p}c_{2}^{2}\right\} $.
Lemma \ref{lem:rand_matrix} gives $\left\Vert \Pbig Q_{\sim}\right\Vert \le 
6\max\left\{\sqrt{\ellbig}\cdot\sigma_{\sim}, B_\sim \log^2n  \right\}\le\frac{1}{4}$
under the assumption of Theorem \ref{thm:main}.

(d) Note that $\Pbig Q_{\not\sim}=\Pbig Q_{3}$ is a random
matrix with i.i.d. zero-mean entries which are bounded almost surely by $ B_{\not\sim}:= \frac{2c_1}{1-q} $
and have variance bounded by $\sigma_{\not\sim}^{2}:=\frac{4q}{1-q}c_{1}^{2}$.
Lemma \ref{lem:rand_matrix} gives $\left\Vert \Pbig Q_{\not\sim}\right\Vert \le
6\max\left\{\sqrt{n}\cdot\sigma_{\not\sim}, B_{\not\sim}\log^2n\right\}\le\frac{1}{4}$.

\textbf{Property 2):}

Due to the structure of $T$, we have 
\begin{eqnarray*}
\left\Vert \PP_{T}Q\right\Vert _{\infty} & = & \left\Vert \PP_{T}\Pbig Q\right\Vert _{\infty}=\left\Vert UU^{\top}(\Pbig Q)+(\Pbig Q)UU^{\top}+UU^{\top}(\Pbig Q)UU^{\top}\right\Vert _{\infty}\\
 & \le & 3\left\Vert UU^{\top}\Pbig Q\right\Vert _{\infty}\le3\sum_{m=1}^{3}\left\Vert UU^{\top}\Pbig Q_{m}\right\Vert _{\infty}.
\end{eqnarray*}
Now observe that $(UU^{\top}\Pbig Q_{m})(i,j)=\sum_{l\in V_{c(i)}}\frac{1}{n_{c(i)}}\Pbig Q_{m}(l,j)$
is the sum of i.i.d. zero-mean random variables with bounded magnitude and variance.
Using Lemma \ref{lem:subgaussian}, we obtain that for $i\in\Vbig$,
\begin{eqnarray*}
\left|(UU^{\top}\Pbig Q_{1})(i,j)\right| & \lesssim & \frac{1}{n_{c(i)}}\left(\sqrt{\frac{1-p}{p\ellbig^{2}}}\cdot\sqrt{n_{c(i)}\log n}
+ \frac{\log n}{\ellbig p}\right)\\
 & \le & \frac{1}{\ellbig}\sqrt{\frac{\log n}{p\ellbig}}\le\frac{\log n}{24^{2}\ellbig}\sqrt{\frac{t}{p}}.
\end{eqnarray*}
where in the last inequality we use $t \ge \frac{p}{4}\gtrsim\frac{\log n}{\ellbig}$. For $i\in\Vsmall$,
clearly $(UU^{\top}\Pbig Q_{1})(i,j)=0$. By union bound we conclude
that $\left\Vert UU^{\top}\Pbig Q_{1}\right\Vert _{\infty}\le\frac{\log n}{24^{2}\ellbig}\sqrt{\frac{t}{p}}.$
Similarly, we can bound $\left\Vert UU^{\top}\Pbig Q_{2}\right\Vert _{\infty}$
and $\left\Vert UU^{\top}\Pbig Q_{3}\right\Vert _{\infty}$ with the same quantity (cf. \cite{chen2012sparseclustering}).

On the other hand, under the definition of $c_{1},c_{2}$ and $\epsilon$, we have 
\[
c_{1}\epsilon= b_1\sqrt{\frac{1-t}{tn\log n}}\cdot\frac{2\log^{2}n}{\ellbig}\sqrt{\frac{n}{t(1-t)}}
=b_1\frac{\sqrt{p\log n}}{t\sqrt{t}}\cdot\frac{\log n}{24\ellbig}\sqrt{\frac{t}{p}}
\ge\frac{\log n}{24\ellbig}\sqrt{\frac{t}{p}}
\]
and similarly $c_{2}\epsilon\ge\frac{\log n}{24\ellbig}\sqrt{\frac{t}{p}}$.
It follows that $\left\Vert \PP_{T}Q\right\Vert _{\infty}\le9\cdot\frac{1}{24}\epsilon\min\left\{ c_{1},c_{2}\right\} $,
proving property 2).

\textbf{Properties 3a) and 3b)}

For 3a), by construction of $ Q $ we have 
\begin{eqnarray*}
\left\langle UU^{\top}+Q,\PP_{\Gamma(A)}\PP_{\Gamma}\Pbig\Delta\right\rangle  & = & \left\langle \PP_{\Gamma(A)}\PP_{\Gamma}\Pbig Q_{3},\PP_{\Gamma(A)}\PP_{\Gamma}\Pbig\Delta\right\rangle \\
 & = & (1+\epsilon)c_{1}\sum_{(i,j)\in\Gamma\cap\Gamma(A)}\Pbig\Delta(i,j)\\
 & = & (1+\epsilon)c_{1}\left\Vert \PP_{\Gamma(A)}\PP_{\Gamma}\Pbig\Delta\right\Vert _{1}\mbox{ (because \ensuremath{\Delta\in\DD})}
\end{eqnarray*}
Property 3b) can be verified similarly.

\textbf{Properties 4a) and 4b):}

For 4a), we have
\begin{eqnarray*}
\left\langle UU^{\top}+Q,\PP_{\Gamma(A)}\PP_{\Gamma^{c}}\PP_{\sharp}\Delta\right\rangle  & = & \left\langle \PP_{\Gamma(A)}\PP_{\Gamma^{c}}\PP_{\sharp}\left(UU^{\top}+\Pbig Q_{1}+\Pbig Q_{2}\right),\PP_{\Gamma(A)}\PP_{\Gamma^{c}}\PP_{\sharp}\Delta\right\rangle \\
 & = & \sum_{(i,j)\in\Gamma^{c}\cap\Gamma(A)}\left(\frac{1}{n_{c(i)}}+\frac{1}{n_{c(i)}}\frac{1-p_{ij}}{p_{ij}}+(1+\epsilon)c_{2}\frac{1-p_{ij}}{p_{ij}}\right)\PP_{\sharp}\Delta(i,j)\\
 & \ge & -\left(\frac{1}{p\ellbig}+(1+\epsilon)c_{2}\frac{1-p}{p}\right)\left\Vert \PP_{\Gamma(A)}P_{\Gamma^{c}}\Pbig\Delta\right\Vert _{1},\\
 &  & \mbox{ (here we use \ensuremath{\Delta\in\DD}, \ensuremath{p_{ij}\ge p}, and \ensuremath{n_{c(i)}\ge\ellbig}for \ensuremath{i\in\Vbig})}.
\end{eqnarray*}
Consider the two terms in the parenthesis in the last RHS. For the first term, we have 
\[
\frac{1}{p\ellbig}
=\frac{2\log^{2}n}{\ellbig}\sqrt{\frac{n}{t(1-t)}}\cdot\sqrt{\frac{t(1-t)}{4p^{2}n\log^{4}n}}
\le \frac{2\log^{2}n}{\ellbig}\sqrt{\frac{n}{t(1-t)}}\cdot b_1\sqrt{\frac{1-t}{tn\log n}}
=\epsilon c_{1}.
\]
For the second term, we have the following
\begin{eqnarray*}
p-t\ge\frac{p-q}{4} & \ge & \frac{b_3}{4}\frac{\log^{2}n\sqrt{p(1-q)n}}{\ellbig}\\
 & = & \frac{b_3}{4} \cdot\frac{\sqrt{t(1-q})}{\sqrt{p(1-t)}}\cdot p(1-t)\cdot\frac{2\log^{2}n\sqrt{n}}{\ellbig\sqrt{t(1-t)}}\\
 & \ge & 8\cdot p(1-t)\cdot\frac{2\log^{2}n\sqrt{n}}{\ellbig\sqrt{t(1-t)}}=8p(1-t)\epsilon,
\end{eqnarray*}
which implies $(1+\epsilon)c_{2}\frac{1-p}{p}\le(1-2\epsilon)c_{1}$.
We conclude that 
\[
\left\langle UU^{\top}+Q,\PP_{\Gamma(A)}\PP_{\Gamma^{c}}\PP_{\sharp}\Delta\right\rangle \ge-\left(\epsilon c_{1}+(1-2\epsilon)c_{1}\right)\left\Vert \PP_{\Gamma(A)}P_{\Gamma^{c}}\Pbig\Delta\right\Vert _{1},
\]
proving property 4a). 

For 4b), we have 
\begin{eqnarray*}
\left\langle  UU^{\top}+Q, \PP_{\Gamma(A)^{c}}\PP_{\Gamma^{c}}\PP_{\sharp}\Delta\right\rangle 
& = & \left\langle \PP_{\Gamma(A)^{c}}\PP_{\Gamma^{c}}\PP_{\sharp} Q_3,\PP_{\Gamma(A)^{c}}\PP_{\Gamma^{c}}\PP_{\sharp}\Delta\right\rangle \\
 & = &  \sum_{(i,j) \in \Gamma(A)^{c} \cap\Gamma^{c}\cap\range\Pbig } -(1+\epsilon) \frac{c_{1}q_{ij}}{1-q_{ij}} \Pbig\Delta(i,j)  \\
 & \ge & -(1+\epsilon)\frac{c_{1}q}{1-q}\left\Vert \PP_{\Gamma(A)^{c}}\PP_{\Gamma^{c}}\PP_{\sharp}\Delta \right\Vert _{1}.
 \mbox{ (here we use $ q_{ij}\le q $)}
\end{eqnarray*}
Consider the factor before the norm in the last RHS. Similarly as before, we have
\begin{eqnarray*}
t-q\ge  \frac{p-q}{4}  & \ge &  \frac{b_3}{4}\frac{\log^{2}n\sqrt{p(1-q)n}}{\ellbig}\\
& \ge & 2\cdot t(1-q)\cdot\frac{2\log^{2}n\sqrt{n}}{\ellbig\sqrt{t(1-t)}} = 2 t(1-q) \epsilon.
\end{eqnarray*}
This implies $ (1+\epsilon)c_1\frac{q_{}}{1-q_{}}\le(1-\epsilon)c_2 $. We conclude that 
$$
\left\langle  UU^{\top}+Q, \PP_{\Gamma(A)^{c}}\PP_{\Gamma^{c}}\PP_{\sharp}\Delta\right\rangle 
\ge -(1-\epsilon)c_{2}\left\Vert \PP_{\Gamma(A)^{c}}\PP_{\Gamma^{c}}\PP_{\sharp}\Delta \right\Vert _{1},
$$
proving property 4b).

\textbf{Properties 5) and 6):}  It is obvious that these two properties hold by construction of $ Q $.

Note that properties 3)-6) hold deterministically.

\subsection{The $\kappa>1$ case}

%We have the following corollary of Theorem \ref{thm:main}.
%\begin{cor}\label{gapcor}
%For the same $b_{1},b_{3},b_{4}$ in Theorem \ref{thm:main} and
%any real number $\kappa\ge1$, if the following holds:
%\begin{enumerate}
%\item we use the weights $c_{1}=\frac{b_{1}}{\kappa\sqrt{n\log n}}\sqrt{\frac{1-t}{t}}$
%and $c_{2}=\frac{b_{1}}{\kappa\sqrt{n\log n}}\sqrt{\frac{t}{1-t}}$
%with $t\in\left[\frac{1}{4}p+\frac{3}{4}q,\frac{3}{4}p+\frac{1}{4}q\right]$;
%\item $\ellbig\ge\kappa b_{3}\frac{\sqrt{p(1-q)n}}{p-q}\log^{2}n$;
%\item $\ellsmall\le\kappa b_{4}\frac{\sqrt{p(1-q)n}}{p-q}$;
%\end{enumerate}
%then the unique optimal solution to (CP1) is $\left(\hat{K}=\Pbig K^{*},\hat{B}=A-\hat{K}\right)$
%with high probality.\end{cor}
%\begin{proof}
Let $n'=\kappa^{2}n$ and assume $n'$ is an integer. Let $A'\in\mathbb{R}^{n'\times n'}$
be such a matrix that 
\[
A'=\left[\begin{array}{cc}
A & 0\\
0 & I
\end{array}\right].
\]
 Consider the following padded program
\begin{eqnarray*}
\mbox{(CP1') }\min_{K',B'\in\mathbb{R}^{n'\times n'}} &  & \left\Vert K'\right\Vert _{*}+c_{1}\left\Vert \PP_{\Gamma(A')}B'\right\Vert _{1}+c_{2}\left\Vert \PP_{\Gamma(A')^{c}}B'\right\Vert _{1}\\
\mbox{s.t.} &  & K'+B'=A'\\
 &  & 0\le K_{ij}'\le1,\forall(i,j).
\end{eqnarray*}
Applying Theorem \ref{thm:main} with $ \kappa=1 $ (which we have proved) to $A'$ and the padded program
(CP1'), we conclude that the unique optimal solution $(\hat{K}',\hat{B}'=A'-\hat{K}')$
to (CP1') has the form 
\[
\hat{K}'=\left[\begin{array}{cc}
\Pbig K^{*} & 0\\
0 & 0
\end{array}\right].
\]
We claim that $\hat{K}=\Pbig K^{*}$ is the unique optimal solution
to (CP1). 

Proof by contradiction: suppose an optimal solution to (CP1) is $\hat{K}=K_{0}\neq\Pbig K^{*}$.
By optimality we have 
\[
\left\Vert K_{0}\right\Vert _{*}+c_{1}\left\Vert \PP_{\Gamma(A)}(A-K_{0})\right\Vert _{1}+c_{2}\left\Vert \PP_{\Gamma(A)^{c}}(A-K_{0})\right\Vert _{1}\le\left\Vert \Pbig K^{*}\right\Vert _{*}+c_{1}\left\Vert \PP_{\Gamma(A)}(A-\Pbig K^{*})\right\Vert _{1}+c_{2}\left\Vert \PP_{\Gamma(A)^{c}}(A-\Pbig K^{*}\right\Vert _{1}.
\]
Define $K_{0}'=\left[\begin{array}{cc}
K_{0} & 0\\
0 & 0
\end{array}\right]\in\mathbb{R}^{n'\times n'}.$ It follows that
\begin{eqnarray*}
 &  & \left\Vert K_{0}'\right\Vert _{*}+c_{1}\left\Vert \PP_{\Gamma(A')}(A'-K_{0}')\right\Vert _{1}+c_{2}\left\Vert \PP_{\Gamma(A')^{c}}(A'-K_{0}')\right\Vert _{1}\\
 & = & \left\Vert K_{0}\right\Vert _{*}+c_{1}\left\Vert \PP_{\Gamma(A)}(A-K_{0})\right\Vert _{1}+c_{1}(n'-n)+c_{2}\left\Vert \PP_{\Gamma(A)^{c}}(A-K_{0})\right\Vert _{1}\\
 & \le & \left\Vert \Pbig K^{*}\right\Vert _{*}+c_{1}\left\Vert \PP_{\Gamma(A)}(A-\Pbig K^{*})\right\Vert _{1}+c_{1}(n'-n)+c_{2}\left\Vert \PP_{\Gamma(A)^{c}}(A-\Pbig K^{*})\right\Vert _{1}\\
 & = & \left\Vert \hat{K}'\right\Vert _{*}+c_{1}\left\Vert \PP_{\Gamma(A')}(A'-\hat{K}')\right\Vert _{1}+c_{2}\left\Vert \PP_{\Gamma(A')^{c}}(A'-\hat{K}')\right\Vert _{1},
\end{eqnarray*}
contradicting the fact that $(\hat{K}',\hat{B}'=A'-\hat{K}')$ is
the unique optimal to (CP1').
%\end{proof}

% !Tex root = icml_supplement_smallbigcluster.tex        
% please don't remove the above - I need it for texworks editor  -- Nir

\section{Proof of Theorem~\ref{soundness}}\label{sec:proof:soundness}

Fix $\kappa\geq 1$ and $t$ in the allowed range, let $(K,B)$ be an optimal solution to (CP1) ,
and assume $K$ is a partial clustering induced by $U_1,\dots, U_r$ for some integer $r$, and also assume
$\sigmamin(K) = \min_{i\in [r]} |U_i|$ satisfies (\ref{sigmaminbound}).
Let $M = \sigmamin(K)$.

%Let $M =  b_3\frac{\sqrt{p(1-q)n}}{p-q}\log^2 n$, so that $\sigmamin(K) \geq M$ by definition of
%$\kappa$-admissibility (recall that $\kappa \geq 1$).

We need a few helpful facts.  First, note that any value of $t$ in the allowed range $[\frac 1 4 p + \frac 3 4 q, \frac 3 4 p+  \frac 1 4 q]$ 
 satisfies
$q+\frac 1 4(p-q) \leq t \leq p - \frac 1 4(p-q)$.  Also note that from the definition of $t, c_1,c_2$,
\begin{equation}\label{c_ineq}
q + \frac 1 4(p-q) \leq \frac{c_2}{c_1+c_2}=  t \leq p - \frac 1 4(p-q)\ .
\end{equation}

  %We first show that $U_1,\dots, U_r$ must each be \emph{contained} in an actual cluster.
We say that a pair of sets $Y\subseteq V, Z\subseteq V$ is  \emph{cluster separated}  if there is no pair $(y,z)\in Y\times Z$
satisfying $y\sim z$.

\begin{assumption}
There exists a constant $C'>0$ such that for all pairs of cluster-separated sets $Y,Z$ 
of size at least 
%$m=\frac{b_5 \log n}{(p-q)^2}$ 
$m := \frac {C'\log n}{(p-q)^2}$
each,
\begin{equation}\label{hatq} |\hat d_{Y,Z}-q|  <  \frac 1 4(p-q)\ ,\end{equation}
where $\hat d_{Y,Z}:= \frac{|(Y\times Z) \cap \Omega|} {|Y|\cdot |Z|}$.
\end{assumption}

This is proven by a Hoeffding tail bound and a union bound to hold with probability at least $1-n^{-4}$.  %, for some $b_5$ large enough.
To see why, fix the sizes $m_Y, m_Z$ of $|Y|,|Z|$, assume $m_Y \leq m_Z$ w.l.o.g.  For each such choice, there are at most $\exp\{C(m_Y+m_Z)\log n\} \leq \exp\{2Cm_Z\log n\}$
possibilities for the choice of sets $Y,Z$, for some $C>0$.  For each such choice, the probability that $(\ref{hatq})$ does not hold is \begin{equation}\label{hoeff1}\exp\{-C'' m_Y m_Z(p-q)^2\}\end{equation}
using Hoeffding inequality, for some $C''>0$.  Hence, as long as $m_Y \geq m$ as defined above, for properly chosen $C'$, using union bound
(over all possibilities of $m_Y, m_Z$ and of $Y, Z$) we obtain (\ref{hatq}) uniformly.

If we assume also , say, that
\begin{equation}\label{Mbym}
M \geq 3m\ ,
\end{equation}
(which can be done by setting $C_1 \geq 3C'$)
 the implication of the assumption  is that it cannot be the case that some $U_i$ contains
a subset $U_i'$ of size in the range $\left[m, |U_i|-m\right ]$ such that
 $U_i' = V_g\cap U_i$ for some $g$.  Indeed, if such a set existed, then we would find a strictly better solution to (CP1),
call it $(K',B')$, which is defined so that $K'$ is obtained from $K$ by splitting the block corresponding to $U_i$
into two blocks, one corresponding to $U_i'$ and the other to $U_i\setminus U_i'$.
The difference $\Delta$ between the cost of $(K,B)$ and $(K',B')$ is (renaming $Y := U_i'$ and $Z := U\setminus U_i'$)   $\Delta = c_1|(Y\times Z)\cap \Omega|  - c_2|(Y\times Z)\cap \Omega^c|
 = (c_1+c_2) \hat d_{Y,Z} |Y|\,|Z| - c_2|Y|\,|Z|$.  But the sign of $\Delta$ is exactly the sign of $\hat d_{Y,Z} - \frac {c_2}{c_1+c_2}$ which
is strictly negative by (\ref{hatq}) and (\ref{c_ineq}).  (We also used the fact that the trace norm part of the utility function
is equal for both solutions: $\|K'\|_*=\|K\|_*$).
 
 The conclusion is that for each $i$, the sets $(U_i\cap V_1),\dots, (U_i\cap V_k)$ must all be of size at most $m$, 
 except maybe for at most one set of size at least $|U_i| - m$.  
If we now also assume that 
\begin{equation}\label{hoeff2}M > k m = (kC'\log n)/(p-q)^2\ ,\end{equation} then 
 we conclude that not all these sets can be of size 
at most $m$.
Hence
 exactly one of these sets must have size at least $|U_i| - m$.
From this we conclude that there is a function $\phi : [r] \mapsto [k]$ such that for all $i\in [r]$,
$$ |U_i\cap V_{\phi(i)}| \geq |U_i| - m\ .$$

\noindent
We now claim that this function is an injection. 
We will need the following assumption:
\begin{assumption}\label{ass5}
 For any $4$ pairwise disjoint subsets $(Y,Y', Z,Z')$ such that $(Y\cup Y')\subseteq V_i$ for some $i$, 
 $(Z \cup Z') \subseteq [n]\setminus V_i$, 
 $\max\{|Z|,|Z'|\} \leq m$, 
 $\min\{|Y|,|Y'|\}\geq M-m$:
 \begin{eqnarray}
|Y|\cdot|Y'|\, \hat d_{Y, Y'} - |Y|\cdot|Z|\, \hat d_{Y,Z} - |Y'|\cdot|Z'|\, \hat d_{Y',Z'} > \nonumber \\
 \frac {c_2}{c_1+c_2}(|Y|\cdot |Y'| - |Y|\cdot|Z| - |Y'|\cdot|Z'|) \label{ass5eq}
 \end{eqnarray}
% where $\hat p_{Y,Y'} = \frac{|((Y\times Y')\cap \Omega^c|}{|Y|\cdot |Y'|}$.
\end{assumption}
The assumption holds with probability at least $1-n^{-4}$ by using Hoeffding inequality, union bounding over all possible sets $Y,Y',Z,Z'$ as
above.  Indeed, notice that for fixed $m_Y, m_{Y'}, m_Z, m_{Z'}$ (with, say,  $m_Y\geq m_{Y'}$), and for each tuple $Y,Y',Z,Z'$ such that $|Y|=m_Y, |Y'|=m_{Y'}, |Z|=m_Z, |Z'|=m_{Z'}$,
the probability that (\ref{ass5eq}) is violated is at most 
\begin{equation}\label{hoeff3}\exp\{-C(p-q)^2(m_Ym_{Y'} + m_Ym_Z + m_{Y'}m_{Z'})\}\end{equation} for some $C>0$.  Using (\ref{Mbym}), this is at most 
\begin{equation}\label{hoeff4}\exp\{-C''(p-q)^2(m_Y m_{Y'})\}\ ,\end{equation} for some global $C''>0$. Now notice that the number
of possibilities to choose such a $4$ tuple of sets  is bounded above by $\exp\{C''' m_Y\log n\}$, for some global $C'''>0$.
Assuming 
\begin{equation}\label{hoeff5}M \geq \frac{\hat C \log n}{(p-q)^2}\end{equation} for some $\hat C$, and 
applying a union bound over all possible combinations $Y,Y',Z,Z'$ of sizes $m_Y,m_{Y'}, m_Z, m_{Z'}$ respectively, of which there
are at most $\exp\{C^\circ m_Y\log n\}$ for some $C^\circ>0$,
we conclude that (\ref{ass5eq}) is violated for some combination with probability at most 
\begin{equation}\label{hoeff6}\exp\{-C''(p-q)^2m_Y m_{Y'}/2\}\end{equation}
which is at most $\exp\{-20\log n\}$ if 
\begin{equation}\label{hoeff7}M\geq \frac{\hat C' \log n}{(p-q)^2}\ .\end{equation} for some $\hat C'>0$.
Apply a union bound now over the possible combinations of the tuple $(m_Y, m_{Y'}, m_Z, m_{Z'})$ , of which there
are at most $\exp\{4 \log n\}$  to conclude that (\ref{ass5eq}) holds uniformly for all possibilities of $Y,Y',Z,Z'$
with probability at least $1-n^{-4}$.
%assuming $n \geq \hat C(p,q)$ for some $\hat C(p,q)$.

Now assume by contradiction that $\phi$ is not an injection, so $\phi(i)=\phi(i')=: j$ for some distinct $i,i'\in [r]$.  
Set $Y=U_i\cap V_j, Y' = U_{i'}\cap V_j$, $Z=U_i\setminus Y, Z'=U_{i'}\setminus Y'$.
Note that $\max\{|Z|,|Z'|\} \leq m$ and 
 $\min\{|Y|,|Y'|\}\geq M-m$.
Consider the solution $(K',B')$ where $K'$ is obtained from $K$ by replacing the two blocks corresponding to
$U_i, U_{i'}$ with four blocks: $Y,Y',Z,Z'$.  Inequality (\ref{ass5eq}) guarantees that the cost of $(K',B')$ is
strictly lower than that of $(K,B)$, contradicting optimality of the latter.  (Note that $\|K\|_*=\|K'\|_*$.)

We can now also conclude that $r \leq k$.
Fix $i\in[r]$.  We show that not too many elements of $V_{\phi(i)}$ can be contained in
$V\setminus \{U_1\cup \cdots \cup U_r\}$.
We need the following assumption.
\begin{assumption}\label{ass6}
For all pairwise disjoint sets $Y,X,Z \subseteq V$ such that $|Y| \geq M-m$, $|X| \geq m$,  $(Y\cup  X)\subseteq V_j$ for some $j\in [k]$, $|Z| \leq m$, $Z\cap V_j = \emptyset$:
\begin{eqnarray}
|X|\cdot |Y| \hat d_{X,Y} + {\binom{|X|} {2}} \hat d_{x,x} - |Y|\cdot|Z| \hat d_{Y,Z} >\ \ \ \ \ \ \ \ \ \nonumber \\
 \frac{c_2}{c_1+c_2}(|X|\cdot|Y| + {\binom{|X|} 2} - |Y|\cdot |Z|) + \frac {|X|}{c_1+c_2}\ . \label{ass6eq}
\end{eqnarray}
\end{assumption}
The assumption holds with probability at least $1-n^{-4}$.  To see why, first notice that $|X|/(c_1+c_2) \leq \frac 1 8 (p-q) |X|\cdot |Y|$ by (\ref{sigmaminbound}), as long as $C_2$ is large enough.
%as long as $n\geq \hat C'(p,q)$ for some $\hat C'(p,q)$.  
This implies that the RHS of (\ref{ass6eq}) is upper
bounded by 
\begin{equation}\label{ass6aux}
\left (p-\frac 1 8(p-q)\right) |X|\cdot |Y| + \frac {c_2}{c_1+c} (\binom{|X|} 2 - |Y|\cdot |Z|)
\end{equation}
Proving that the LHS of (\ref{ass6eq})  (denoted $f(X,Y,Z)$) is larger than (\ref{ass6aux}) (denoted $g(X,Y,Z)$) uniformly w.h.p. can now be easily done as follows.  By fixing $m_Y = |Y|, m_X=|X|$, the number of 
combinations for $Y,X,Z$ is at most  $\exp\{C (m_Y+m_X) \log n\}$ for some global $C>0$.
  On the other hand, the probability that $f(X,Y,Z) \leq g(X,Y,Z)$ for any such option is
at most 
\begin{equation}\label{hoeff8}\exp\{-C'' (p-q)^2 m_Y m_X\}\end{equation} for some $C'>0$.  
Hence, by union bounding, the probability that some tuple $Y,X,Z$ of sizes $m_Y,m_X,m_Z$ respectively
satisfies $f(X,Y,Z) \leq g(X,Y,Z)$ is at most 
\begin{equation}\label{hoeff9}\exp\{-C'' (p-q)^2 m_Y/2\}\ ,\end{equation} which is at most $\exp\{-10\log n\}$
assuming 
\begin{equation}\label{hoeff10} M \geq \bar C (\log n)/(p-q)^2\ ,\end{equation} for some $\bar C>0$.
Another union bound over the possible choices of $m_Y, m_X, m_Z$ proves that (\ref{ass6eq}) holds uniformly with
probability at least $1-n^{-4}$.

Now for some $i\in [r]$ set  $X:= V_{\phi(i)} \cap (V\setminus\{U_1\cup \cdots \cup U_r\})$ and assume by contradiction 
that $|X| > m$.  Set $Y := V_{\phi(i)}\cap U_i$ and $Z = U_i\setminus V_{\phi(i)}$.  Define the solution
 $(K',B')$ where $K'$ is obtained from $K$ by replacing the block corresponding to $U_i$ in $K$ with two blocks: $V_{\phi(i)}$
and $U_i\setminus V_{\phi(i)}$.  Assumption~\ref{ass6} tells us that the cost of $(K',B')$ is strictly lower than that
of $(K,B)$.  Note that the expression $\frac {|X|}{c_1+c_2}$ in the RHS of (\ref{ass6eq}) accounts for the
trace norm difference $\|K'\|_* - \|K\|_* = |X|$.

We are prepared to perform the final ``cleanup'' step.  
At this point we know that for each $i\in [r]$, the set $T_{i} = U_i\cap V_{\phi(i)}$ satisfies
\begin{eqnarray*}
|T_{i}| &\geq& |U_i| -  m \\
|T_i| &\geq& |V_j| - rm \ .
\end{eqnarray*}
\def\profile{\mathcal Y}
(The second inequality is implied by the fact that at most $m$ elements of $V_{\phi(i)}$ may be contained in $U_{i'}$
for $i'\neq i$, and another at most  $m$ elements in $V\setminus (U_1\cup \cdots\cup U_r)$.
We are now going to conclude from this that $U_i = V_{\phi(i)}$ for all $i$.
To that end, let $(K',B')$ be the feasible solution to (CP1) defined so that $K'$ is a partial clustering
induced by $V_{\phi(1)},\dots, V_{\phi(r)}$.  We would like to argue that if $K\neq K'$ then  the cost of $(K',B')$ is strictly smaller than
that of $(K,B)$.   Fix the value of the collection
\begin{eqnarray*}
\profile &:= & ((r, \phi(1),\dots, \phi(r), \\
& &\ \ \ \  \left(m_{ij} := |V_{\phi(i)}\cap U_j|)\right)_{i,j\in [r], i\neq j}\ \ \ , \\
& &  \ \ \ \ \left (m'_i := |V_{\phi(i)} \cap (V\setminus (U_1\cup\cdots\cup U_r))\right)_{i \in [r]})
\end{eqnarray*}  
Let $\beta(\profile)$ denote the number of $i\neq j$ such that $m_{ij}>0$ plus the number of $i\in [r]$ such that $m_i>0$.
We can assume $\beta(\profile)>0$, otherwise $U_i=V_{\phi(i)}$ for all $i\in [r]$ as required.
 The number of possibilities
for $K$ and $K'$ giving rise to $\profile$ is $\exp\{C(\sum_{i\neq j} m_{ij}+ \sum_i m_i)\log n\}$ for some $C>0$.
(Note that $K'$ depends on $r, \phi(1),\dots, \phi(r)$ only, while $K$ depends on all elements of $\profile$).
For each such possibility, the probability that the cost of $(K,B)$ is lower than that of $(K',B')$ is at most
\begin{equation}\label{hoeff11}\exp\{-C''(p-q)^2M (\sum_{ij}m_{ij} + \sum_i m_i)\}\end{equation} using Hoeffding inequalities, for some $C''>0$.  (Note that special
care needs to be made to account for the difference $\|K\|_* - \|K'\|_* = \sum_{i=1}^ r m_i$ - this is similar to what
we did above .)
As long as 
\begin{equation}\label{hoeff12}M \geq \hat C^\dagger k (\log n)/(p-q)^2\end{equation}
 for some $\hat C^\dagger>0$, 
%Assuming $n \geq \hat C'''(p,q)$ for some $C'''(p,q)>0$, 
we conclude that the cost of $(K',B')$
is at least that of $(K,B)$ for some $K$ giving rise to  $\profile$ with probability at most $\exp\{-10(k\log n)\beta(\profile)\}$.
  The number of combinations of  $\profile$ for a fixed value of $\beta(\profile)$ is
at most $\exp\{5(k + \beta(\profile)\log n\}$.
%, which is at most $\exp \{C^\dagger(p,q)(\beta(\profile)\log n + \sqrt n/(\log n)\}$ for some $C^\dagger(p,q)$.  
By union bounding, we conclude that
for fixed $\beta(\profile)$, the probability that some $(K,B)$ has cost at most that of $(K',B')$ is at most
$\exp\{-10(k\log n)\beta(\profile)\}$.  %, as long as $n\geq \bar C'(p,q)$ for some $C'(p,q)$.  
Finally union bound over all possibilities
for $\beta(\profile)$, of which there are at most $n^2$.

\noindent
Taking $C_1,C_2$ large enough to satisfy the requirements above concludes the proof.

\section{Proof of Theorem~\ref{soundness2_}}
The proof of Theorem~\ref{soundness} in  the previous section %Section~\ref{sec:proof:soundness}
made repeated use of Hoeffding tail inequalities, for bounding the size of the intersection of the noise support $\Omega$
with various submatrices.  This is tight for $p,q$ which are bounded away from $0$ and $1$.  However, if $p=\rho p', q=\rho q'$,
the noise probabilities $p',q'$ are fixed and $\rho$ tends to $0$, a sharper bound is obtained using Bernstein tail bound (see Appendix~\ref{sec:bern},Lemma~\ref{lem:bern}).  Using Bernstein inequality instead of Chernoff inequality,
 the expression $(p-q)^2$ in 
(\ref{hoeff1}),(\ref{hoeff2}),(\ref{hoeff3}),(\ref{hoeff4}),(\ref{hoeff5}),(\ref{hoeff6}),(\ref{hoeff7}),(\ref{hoeff8}),(\ref{hoeff9}),
(\ref{hoeff10}),(\ref{hoeff11}) can be replaced with $\rho$.  This clearly gives the required result.

\section{Proof of Lemma~\ref{findkappaalg}}
\begin{proof}
We remind the user that $g = \frac{b_3}{b_4}\log^2 n$, the multiplicative size of the interval $\ellsmall, \ellbig$.
Consider the set of intervals  $(n/gk_0, n/k_0), (n/g^2k_0, n/gk_0),\dots,  (n/g^{k_0+1}k_0 ,n/g^{k_0}k_0)$.
By the pigeonhole principle, one of these intervals must not intersect the set of cluster sizes.
Assume this interval is $(n/g^{i_0+1}k_0, n/g^{i_0} k_0)$, for some $0\leq i_0\leq k_0$.  Let $\alpha = n/g^{i+1}k_0$.  
By setting $C_3(p,q)$ small enough and $C_4(p,q)$ large enough, one easily checks that the
requirements of Corollary~\ref{getoneclust} hold with this value of $\alpha$ and $s=n/k_0$.
This concludes the proof.
\end{proof}

% !Tex root = icml_supplement_smallbigcluster.tex        
% please don't remove the above - I need it for texworks editor  -- Nir

%\appendix

\section{Technical Lemmas}\label{sec:tech_lemma}

\subsection{The spectral norm of random matrices}
It is well-known that the spectral norm $ \lambda_{1}(A) $ of a zero-mean random matrix $ A $ is bounded above w.h.p. by $ C \sqrt{n} $, where $ C $ is a constant that might depend on the variance and magnitude of the entries of $ A $. Here we state and (re-)prove an upper bound of $ \lambda_1(A) $ with an explicit estimate of the constant $ C $, which is needed in the proof of the main theorem.

\begin{lem}
\label{lem:rand_matrix} Let
$A_{ij}$, $1\le i, j\le n$ be independent random variables, each
of which has mean $0$ and variance at most $\sigma^{2}$ and is bounded
in absolute value by $B$. Then with probability at least $ 1-2n^{-2} $
\[
\lambda_{1}(A)\le 6\max\left\{ \sigma\sqrt{n\log n},B\log^{2}n\right\}
\]
\end{lem}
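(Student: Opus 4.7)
The plan is to reduce the bound to a single application of the matrix Bernstein inequality. Since $A$ need not be symmetric, I would first pass to its Hermitian dilation
$$ \tilde A \;=\; \begin{pmatrix} 0 & A \\ A^\top & 0 \end{pmatrix} \;\in\; \R^{2n\times 2n}, $$
which is symmetric and satisfies $\|\tilde A\| = \|A\| = \lambda_1(A)$. Decompose $\tilde A$ as the independent sum $\sum_{i,j=1}^{n} A_{ij}\, S_{ij}$, where $S_{ij} := e_i e_{n+j}^\top + e_{n+j} e_i^\top$ has unit operator norm, so each summand $A_{ij}S_{ij}$ is mean-zero and almost-surely bounded in operator norm by $B$.

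Next I would compute the matrix-variance parameter. Since $S_{ij}^{2} = e_i e_i^\top + e_{n+j} e_{n+j}^\top$, one has
$$ \Bigl\| \sum_{i,j} \mathbb{E}\bigl[(A_{ij}S_{ij})^2\bigr] \Bigr\| \;\le\; \sigma^2 \Bigl\| \sum_{i,j}\bigl(e_i e_i^\top + e_{n+j} e_{n+j}^\top\bigr) \Bigr\| \;=\; n\sigma^2, $$
so matrix Bernstein yields
$$ \Pr\!\bigl(\|\tilde A\| > t\bigr) \;\le\; 4n\,\exp\!\left( -\frac{t^2/2}{n\sigma^2 + Bt/3} \right). $$
Plugging in $t = 6\max\{\sigma\sqrt{n\log n},\, B\log^2 n\}$ and splitting into two regimes completes the argument. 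If $\sigma\sqrt{n\log n} \ge B\log^2 n$, then $B \le \sigma\sqrt{n}/\log^{3/2}n$, which makes $Bt/3$ negligible against $n\sigma^2$ and drives the exponent above $9\log n$. Otherwise $n\sigma^2 \le B^2\log^3 n$ and the exponent is at least $6\log n$. Either way the probability is at most $4n\cdot n^{-6} \le 2n^{-2}$ for all $n \ge 2$.

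The main obstacle to a genuinely self-contained proof is matrix Bernstein itself, which rests on noncommutative moment-generating-function machinery (Ahlswede-Winter or the sharper form via Lieb's concavity theorem). A route that avoids matrix concentration is an $\epsilon$-net discretization of the unit sphere combined with scalar Bernstein on each bilinear form $u^\top A v$; this reproduces the correct $\sigma\sqrt{n\log n}$ term but only gives the weaker $Bn$ tail, so recovering the exact $B\log^2 n$ term along that route would require an additional truncation step separating the typical entries of $A$ from the rare large ones.
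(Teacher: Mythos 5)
Your proof is correct and essentially matches the paper's: both decompose $A$ into independent rank-one pieces $A_{ij}\,e_i e_j^\top$, bound the matrix-variance parameter by $n\sigma^2$, and invoke the matrix Bernstein inequality at $t = 6\max\{\sigma\sqrt{n\log n}, B\log^2 n\}$. The only cosmetic difference is that you carry out the Hermitian dilation explicitly before applying the self-adjoint form, whereas the paper cites Tropp's rectangular version (Theorem 1.6) directly, which is itself proved via that same dilation.
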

\begin{proof}
Let $ e_i $ be the $ i $-th standard basis in $ \mathbb{R}^n $. Let $Z_{ij} = A_{ij} e_i e_j^\top $. 
Then $ Z_{ij}  $'s are zero-mean random matrices independent of each other, and $ A = \sum_{i,j} Z_{ij} $. 
We have $ \Vert Z_{ij} \Vert \le B $ almost surely. 
We also have $ \Vert \sum_{i,j} \mathbb{E} (Z_{ij} Z_{ij}^\top ) \Vert = \Vert \sum_{i} e_i e_i^\top  \sum_j \mathbb{E} ( A_{ij}^2 ) \Vert \le n\sigma^2.$  
Similarly $ \Vert \sum_{i,j} \mathbb{E} (Z_{ij}^\top Z_{ij} ) \Vert \le n\sigma^2.$ 
Applying the  Non-commutative Bernstein Inequality (Theorem 1.6 in \cite{tropp2010matrixmtg}) with $ t =6\max\left\{ \sigma\sqrt{n\log n},B\log^{2}n\right\} $ 
yields the desired bound.
\end{proof}

\subsection{Standard Bernstein Inequality for Sum of Independent Variables}\label{sec:bern}

\begin{lem}\label{lem:bern}(\emph{Bernstein inequality})
$Let$ $Y_{1},\ldots,Y_{N}$
be independent random variables,  each of which has variance bounded
by $\sigma^{2}$ and is bounded in absolute value by $B$ a.s..
Then we have that 

$$ \Pr\left[ \left |\sum_{i=1}^N Y_i - \mathbb{E}\left[\sum_{i=1}^N Y_i\right]\right |  > t\right ] \leq 2\exp\left \{\frac{t^2/2}{N\sigma^2 + Bt/3}\right\}\ .$$

\end{lem}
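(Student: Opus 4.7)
The plan is to prove this classical concentration inequality via the Chernoff method, which is by now textbook material but worth sketching for completeness. First I would reduce to the centered case: let $X_i = Y_i - \mathbb{E}[Y_i]$, so each $X_i$ satisfies $\mathbb{E}[X_i]=0$, $\mathrm{Var}(X_i) \leq \sigma^2$, and $|X_i| \leq 2B$ a.s. (one can work directly with bound $B$ on the centered variables by a slight sharpening; for the stated form the factor matters only in constants). The quantity of interest is $\Pr[|\sum_i X_i| > t]$, which by a union bound over the two tails is at most twice $\Pr[\sum_i X_i > t]$, explaining the factor $2$ in front of the exponential.

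Next I would apply the exponential Markov inequality: for any $\lambda > 0$,
\begin{equation*}
\Pr\Bigl[\sum_i X_i > t\Bigr] \leq e^{-\lambda t} \prod_{i=1}^N \mathbb{E}\bigl[e^{\lambda X_i}\bigr].
\end{equation*}
The main technical step is bounding the individual moment generating functions. Using the Taylor expansion $e^{\lambda x} = 1 + \lambda x + \sum_{k\geq 2} (\lambda x)^k/k!$, centering, and the crude bound $|X_i|^{k-2} \leq B^{k-2}$ combined with $\mathbb{E}[X_i^2] \leq \sigma^2$ and $k! \geq 2 \cdot 3^{k-2}$, I would obtain the standard estimate
\begin{equation*}
\mathbb{E}\bigl[e^{\lambda X_i}\bigr] \leq 1 + \frac{\lambda^2 \sigma^2/2}{1 - \lambda B/3} \leq \exp\!\left(\frac{\lambda^2 \sigma^2 /2}{1 - \lambda B /3}\right)
\end{equation*}
valid for $0 < \lambda < 3/B$, where the last step uses $1+x \leq e^x$.

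Substituting back yields
\begin{equation*}
\Pr\Bigl[\sum_i X_i > t\Bigr] \leq \exp\!\left(-\lambda t + \frac{N \lambda^2 \sigma^2/2}{1 - \lambda B/3}\right),
\end{equation*}
and the final step is to optimize over $\lambda$. The conventional choice $\lambda = t/(N\sigma^2 + Bt/3)$ (which lies in the admissible range) collapses the two terms in the exponent into the claimed form $-\tfrac{t^2/2}{N\sigma^2 + Bt/3}$. Combining with the symmetric bound for the lower tail gives the factor $2$ and completes the argument. The only mild subtlety I anticipate is the bookkeeping around the centering (ensuring the constants $B$ and $\sigma^2$ on the centered variables do not change the stated form), which is easily absorbed into the constant factors inside the exponent and does not affect the asymptotic shape of the inequality.
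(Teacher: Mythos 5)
Your proof is the standard Chernoff--Cram\'er argument for Bernstein's inequality, and it is correct; the paper itself does not prove this lemma but simply cites it as a classical result (the companion Lemma~\ref{lem:subgaussian} is attributed to Vershynin). Two remarks. First, your derivation actually produces $\exp\bigl(-\tfrac{t^2/2}{N\sigma^2 + Bt/3}\bigr)$ with a \emph{negative} exponent, which is the correct form; the paper's display is missing the minus sign (as printed it is trivially true since the right-hand side exceeds $2$). Second, you are right to flag the centering bookkeeping: since the paper assumes $|Y_i|\le B$ rather than $|Y_i-\mathbb{E}[Y_i]|\le B$, centering gives $|X_i|\le 2B$ and the clean constant $Bt/3$ in the denominator should strictly be $2Bt/3$; this is a cosmetic imprecision in the paper's statement, not a gap in your argument, and it is immaterial to how the lemma is used (in Theorem~\ref{soundness2_}, where only the $\rho$-scaling matters and constants are absorbed).
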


\noindent
The following  well known consequence of the theorem will also be of use.
\begin{lem}
\label{lem:subgaussian}(\cite{vershynin2010nonasym}, Proposition 5.16) $Let$ $Y_{1},\ldots,Y_{N}$
be independent random variables, each of which has variance bounded
by $\sigma^{2}$ and is bounded in absolute value by $B$ a.s. Then
we have
\[
\left|\sum_{i=1}^{N}Y_{i}-\mathbb{E}\left[\sum_{i=1}^{N}Y_{i}\right]\right|\le C_{0}\max\left\{ \sigma\sqrt{N\log n},B\log n\right\}
\]
with probability at least $1-C_{1}n^{-C_{2}}$ where the positive constants
$C_{0}$, $C_{1}$, $C_{2}$ are independent of $\sigma$, $B$, $N$
and $n$.
\end{lem}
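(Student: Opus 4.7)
The statement is the standard sub-Gaussian-type tail bound for sums of bounded independent random variables, cited here as Proposition 5.16 in Vershynin's survey. My plan is to give a self-contained derivation directly from the Bernstein inequality of Lemma~\ref{lem:bern}, which has already been recorded earlier in the same technical section.

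First I would set $t := C_0 \max\{\sigma\sqrt{N\log n},\, B\log n\}$ for a constant $C_0$ to be fixed at the end, and apply Lemma~\ref{lem:bern} directly. This gives
\[
\Pr\!\left[\Bigl|\sum_{i=1}^N Y_i - \mathbb{E}\!\sum_{i=1}^N Y_i\Bigr| > t\right] \le 2\exp\!\left(-\frac{t^2/2}{N\sigma^2 + Bt/3}\right).
\]
The whole point is then to show that the exponent is at most $-\Omega(C_0\log n)$ regardless of the relative sizes of $\sigma\sqrt{N}$ and $B\sqrt{\log n}$. I would do this by splitting on which term in the max defines $t$.

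In the ``variance-dominated'' case $\sigma\sqrt{N\log n}\ge B\log n$, the hypothesis rearranges to $B\le \sigma\sqrt{N/\log n}$, so $Bt/3 \le C_0\sigma^2 N/3$. Combined with $N\sigma^2 = t^2/(C_0^2\log n)$, the Bernstein denominator is bounded by $O(t^2/(C_0\log n))$, making the exponent at most $-\Omega(C_0\log n)$. In the ``range-dominated'' case $B\log n > \sigma\sqrt{N\log n}$, the hypothesis gives $N\sigma^2 \le B^2\log n$, while $t = C_0 B\log n$ makes $Bt/3 = C_0 B^2\log n/3$; the denominator is then of order $(1+C_0)B^2\log n$, and the ratio $t^2/(\text{denom})$ is again $\Omega(C_0\log n)$.

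In either case the probability is at most $2\exp(-cC_0\log n) = 2n^{-cC_0}$ for an absolute constant $c$, so choosing $C_0$ large enough relative to a desired $C_2$ yields the claim with $C_1=2$ and an exponent $C_2$ that is an explicit function of $C_0$ alone, independent of $\sigma,B,N,n$. There is no genuine obstacle; the only care needed is to verify in each regime that each of the two summands in the Bernstein denominator is controlled by $t^2/\log n$ up to constants, which is precisely what the $\max$ in the definition of $t$ was designed to enforce.
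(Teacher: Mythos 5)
Your derivation is correct. The paper does not actually prove this lemma; it is stated as a black-box citation to Vershynin's survey (Proposition 5.16, which is itself a version of Bernstein's inequality). Your argument correctly supplies the missing derivation from the paper's own Lemma~\ref{lem:bern}: the case split on which term attains the $\max$ in $t$ is exactly the right move, and in each regime you correctly show both summands in the Bernstein denominator $N\sigma^2 + Bt/3$ are bounded by $O(t^2/(C_0\log n))$, so the exponent is $-\Omega(C_0\log n)$ and the constants $C_1=2$, $C_2 = cC_0$ are manifestly independent of $\sigma,B,N,n$. The only thing worth flagging is that Lemma~\ref{lem:bern} as typeset in the paper is missing a minus sign in the exponent (it reads $\exp\{t^2/2(\cdots)\}$ instead of $\exp\{-t^2/2(\cdots)\}$); you implicitly use the corrected form, which is the right thing to do, but it would be worth noting the typo if one were polishing the appendix.
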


\end{document}